\documentclass{article}

% ready for submission
% \usepackage[nonatbib]{neurips_2025}
\usepackage[sort&compress,numbers]{natbib}
\usepackage{wrapfig}
\usepackage{wrapfig}      % For wrapping text
\usepackage{lipsum}       % For generating placeholder text (Lorem Ipsum)
\usepackage{caption}      % Recommended for float captions
\usepackage{amsmath}
\usepackage{amssymb}
\usepackage{multirow}
\usepackage{longtable}
\usepackage{arydshln}
% to compile a camera-ready version, add the [final] option, e.g.:
\usepackage[nonatbib,final]{neurips_2025}
% \usepackage{makecell}

% to avoid loading the natbib package, add option nonatbib:
%    \usepackage[nonatbib]{neurips_2025}

\usepackage[utf8]{inputenc} % allow utf-8 input
\usepackage[T1]{fontenc}    % use 8-bit T1 fonts
\usepackage{hyperref}       % hyperlinks
\usepackage{url}            % simple URL typesetting
\usepackage{booktabs}       % professional-quality tables
\usepackage{amsfonts}       % blackboard math symbols
\usepackage{nicefrac}       % compact symbols for 1/2, etc.
\usepackage{microtype}      % microtypography
\usepackage{xcolor}         % colors
%%%%% NEW MATH DEFINITIONS %%%%%

\usepackage{amsmath,amsfonts,bm}

% Mark sections of captions for referring to divisions of figures

% Highlight a newly defined term

% Figure reference, lower-case.

% Figure reference, capital. For start of sentence

% Section reference, lower-case.

% Section reference, capital.

% Reference to two sections.

% Reference to three sections.

% Reference to an equation, lower-case.
\def\eqref#1{equation~\ref{#1}}
% Reference to an equation, upper case

% A raw reference to an equation---avoid using if possible

% Reference to a chapter, lower-case.

% Reference to an equation, upper case.

% Reference to a range of chapters

% Reference to an algorithm, lower-case.

% Reference to an algorithm, upper case.

% Reference to a part, lower case

% Reference to a part, upper case

\def\1{\bm{1}}

% Random variables

% rm is already a command, just don't name any random variables m

% Random vectors

% Elements of random vectors

% Random matrices

% Elements of random matrices

% Vectors

% Elements of vectors

% Matrix

% Tensor
\DeclareMathAlphabet{\mathsfit}{\encodingdefault}{\sfdefault}{m}{sl}
\SetMathAlphabet{\mathsfit}{bold}{\encodingdefault}{\sfdefault}{bx}{n}

% Graph

% Sets

% Don't use a set called E, because this would be the same as our symbol
% for expectation.

% Entries of a matrix

% entries of a tensor
% Same font as tensor, without \bm wrapper

% The true underlying data generating distribution

% The empirical distribution defined by the training set

% The model distribution

% Stochastic autoencoder distributions

 % Laplace distribution

% Wolfram Mathworld says $L^2$ is for function spaces and $\ell^2$ is for vectors
% But then they seem to use $L^2$ for vectors throughout the site, and so does
% wikipedia.

 % See usage in notation.tex. Chosen to match Daphne's book.

\DeclareMathOperator*{\argmax}{arg\,max}
\DeclareMathOperator*{\argmin}{arg\,min}

\usepackage{graphicx}
\usepackage{float}

\usepackage{subcaption}
\usepackage{dirtytalk}
\usepackage{amsthm}
\makeatletter
\def\th@plain{%
  \thm@notefont{}% same as heading font
  \itshape % body font
}
\theoremstyle{plain}
\usepackage[normalem]{ulem}
\useunder{\uline}{\ul}{}
\usepackage[ruled]{algorithm2e}
\usepackage{setspace}

\newtheorem{definition}{Definition}
\newtheorem{lemma}{Lemma}
\newtheorem{theorem}{Theorem}

\title{Empowering Decision Trees via Shape Function Branching}

% The \author macro works with any number of authors. There are two commands
% used to separate the names and addresses of multiple authors: \And and \AND.
%
% Using \And between authors leaves it to LaTeX to determine where to break the
% lines. Using \AND forces a line break at that point. So, if LaTeX puts 3 of 4
% authors names on the first line, and the last on the second line, try using
% \AND instead of \And before the third author name.

\author{%
  % David S.~Hippocampus\thanks{Use footnote for providing further information
  %   about author (webpage, alternative address)---\emph{not} for acknowledging
  %   funding agencies.} 
  Nakul Upadhya, Eldan Cohen \\
  Department of Mechanical and Industrial Engineering\\
  University of Toronto, Toronto, Canada \\
  \texttt{nakul.upadhya@mail.utoronto.ca, eldan.cohen@utoronto.ca} 
  % examples of more authors
  }

\SetCommentSty{mycommfont}

\newenvironment{manualtheorem}[1]{%
  \IfBlankTF{#1}
    {}
    {}%
  \manualtheoreminner
}{\endmanualtheoreminner}

\newenvironment{manuallemma}[1]{%
  \IfBlankTF{#1}
    {}
    {}%
  \manuallemmainner
}{\endmanuallemmainner}

\begin{document}

\maketitle

\begin{abstract}
Decision trees are prized for their interpretability and strong performance on tabular data. Yet, their reliance on simple axis‑aligned linear splits often forces deep, complex structures to capture non‑linear feature effects, undermining human comprehension of the constructed tree. To address this limitation, we propose a novel generalization of a decision tree, the Shape Generalized Tree (SGT), in which each internal node applies a learnable axis‑aligned shape function to a single feature, enabling rich, non‑linear partitioning in one split. As users can easily visualize each node's shape function, SGTs are inherently interpretable and provide intuitive, visual explanations of the model's decision mechanisms. To learn SGTs from data, we propose ShapeCART, an efficient induction algorithm for SGTs. We further extend the SGT framework to bivariate shape functions (S$^2$GT) and multi‑way trees (SGT$_K$),  and present Shape$^2$CART and ShapeCART$_K$, extensions to ShapeCART for learning S$^2$GTs and SGT$_K$s, respectively. Experiments on various datasets show that SGTs achieve superior performance with reduced model size compared to traditional axis-aligned linear trees.
\end{abstract}

\section{Introduction}
Tabular datasets are collections of data organized into rows and columns, containing distinct features that can be continuous, categorical, or ordinal and remain among the most widely used dataset types in machine learning \cite{mcelfresh2023neural, shwartz2022tabular}. Decisions trees are one of the most widely used approaches on these tabular datasets \cite{grinsztajn2022tree}, largely due to their inherent interpretability \cite{luvstrek2016makes} and robust performance \cite{grinsztajn2022tree}. These characteristics have established them as a favored choice in high-stakes domains such as healthcare \cite{stiglic2020interpretability,5593711}, finance \cite{zurada2010could,TANG2024102088}, and manufacturing \cite{sungsu2017decision,HEYDARBAKIAN20222834}, where the ability to understand the model predictions is often just as important as high predictive accuracy.  Structurally, decision trees organize nodes hierarchically, with internal nodes responsible for routing data samples based on defined criteria and terminal (leaf) nodes providing the final predictions. A common type of tree is the \emph{binary axis-aligned linear tree}, where each internal node routes data to a left or right child node based on a simple threshold comparison involving only a single feature: $x_d \leq \theta$ (send to left if feature $d$ is less than or equal to threshold $\theta$). 

While axis-aligned linear trees are simple and interpretable, the impact of feature values on the target is often non-linear, requiring repeated splits on the same features at different levels of the tree to represent the feature-target relationship adequately (see Figure \ref{fig:laat}). However, repeatedly employing a single feature across multiple nodes in a decision path can hinder end-user comprehension of the model's learned relationship for that feature \citep{di2019surrogate} and necessitate a larger tree with greater depth and node count to attain sufficient performance.
This increase in size directly impacts interpretability, with empirical evidence confirming that tree comprehensibility is highly sensitive to the depth of decision paths and the number of leaves \cite{luvstrek2016makes}. Furthermore, deeper trees inherently generate larger, less interpretable local explanations \cite{molnar2020interpretable,SERDT}. Finally, larger trees are also challenging to visualize effectively, often requiring complex, interactive tools to gain insights \cite{molnar2020interpretable,liu2007design}. 

Generalized Additive Models (GAMs) \cite{hastie2017generalized} are another class of interpretable models where predictions are the result of the sum of individual feature contributions, each defined by a potentially non-linear \emph{shape function}. Researchers have parameterized these shape functions using diverse model classes such as splines \cite{hastie2017generalized}, trees \cite{nori2019interpretml,chang2021node}, and neural networks \cite{agarwal2021neural,radenovic2022neural,upadhya2024neurcam}, enabling powerful modeling capabilities. Shape functions offer a key interpretability benefit as their flexibility enables GAMs to closely capture the true feature–target dynamics in the data \cite{chang2021interpretable} while still
allowing practitioners to visualize each function independently, faithfully conveying the learned relationship \cite{hastie2017generalized,chang2021interpretable}. GAMs have also been extended to incorporate pairwise feature interactions \cite{lou2013accurate,chang2021node,upadhya2024neurcam,radenovic2022neural} by adding bivariate shape functions, creating GA$^2$Ms. Bivariate shape functions can be visualized via heatmaps, improving modelling capacity as they can capture second-order feature interaction effects. 
\begin{figure}[!ht]
     \centering
     \begin{subfigure}[b]{0.3\textwidth}
        \centering
        \includegraphics[width=\textwidth, trim={100 50 0 0}]{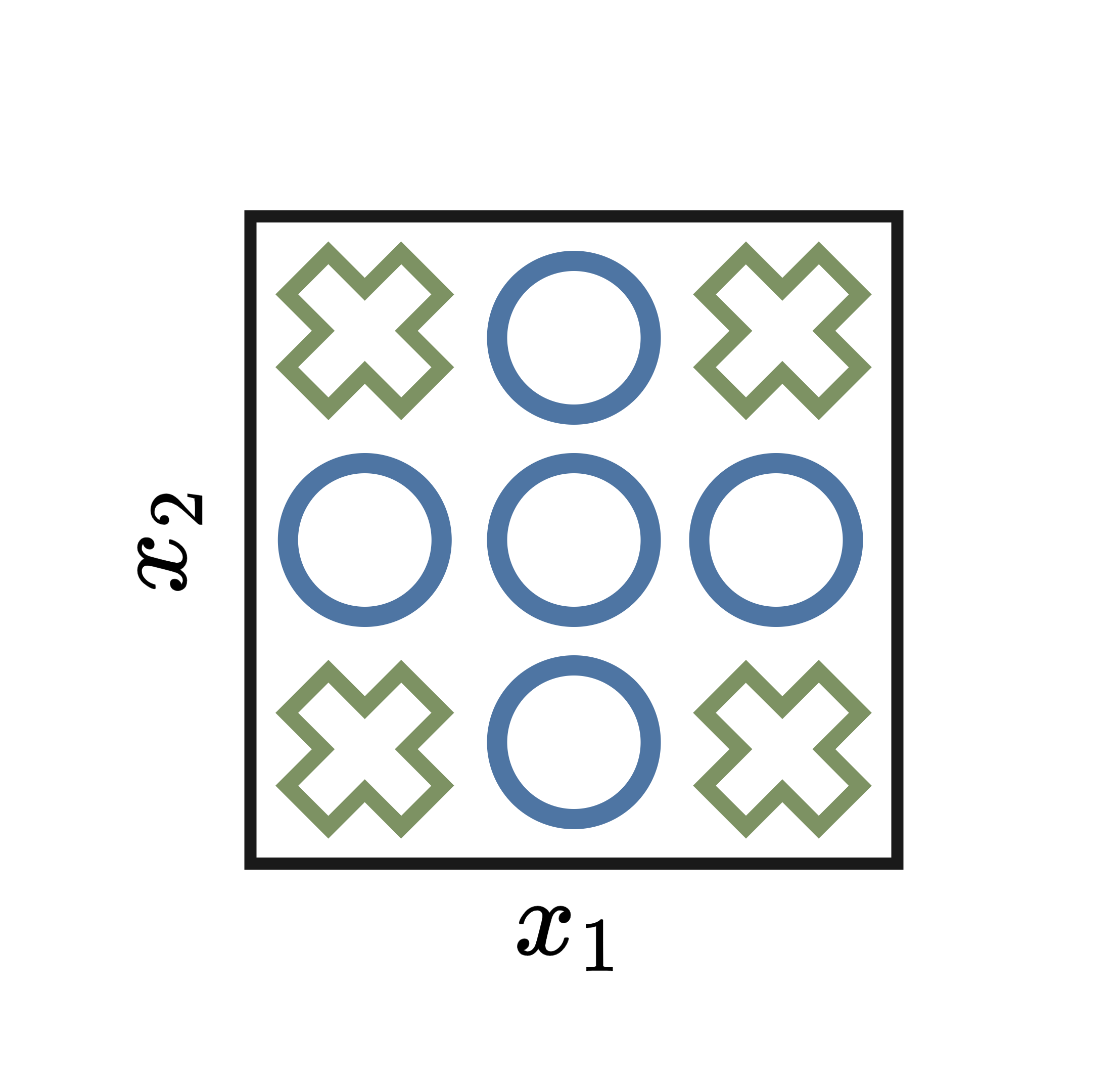}
         \caption{Plus Sign dataset.}
         \label{fig:plus}
     \end{subfigure}
     \hfill
     \begin{subfigure}[b]{0.35\textwidth}
         \centering
         \includegraphics[width=\textwidth]{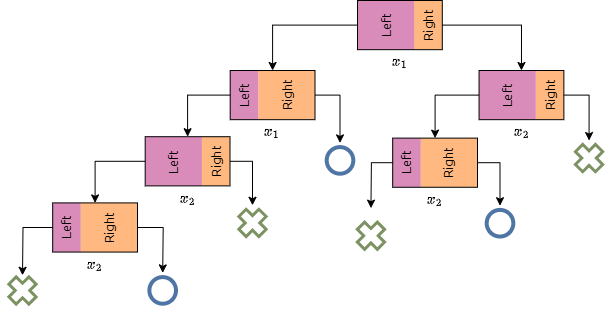}
         \caption{Linear Axis-Aligned Tree}
         \label{fig:laat}
     \end{subfigure}
     \hfill
     \begin{subfigure}[b]{0.25\textwidth}
         \centering
         \includegraphics[width=\textwidth]{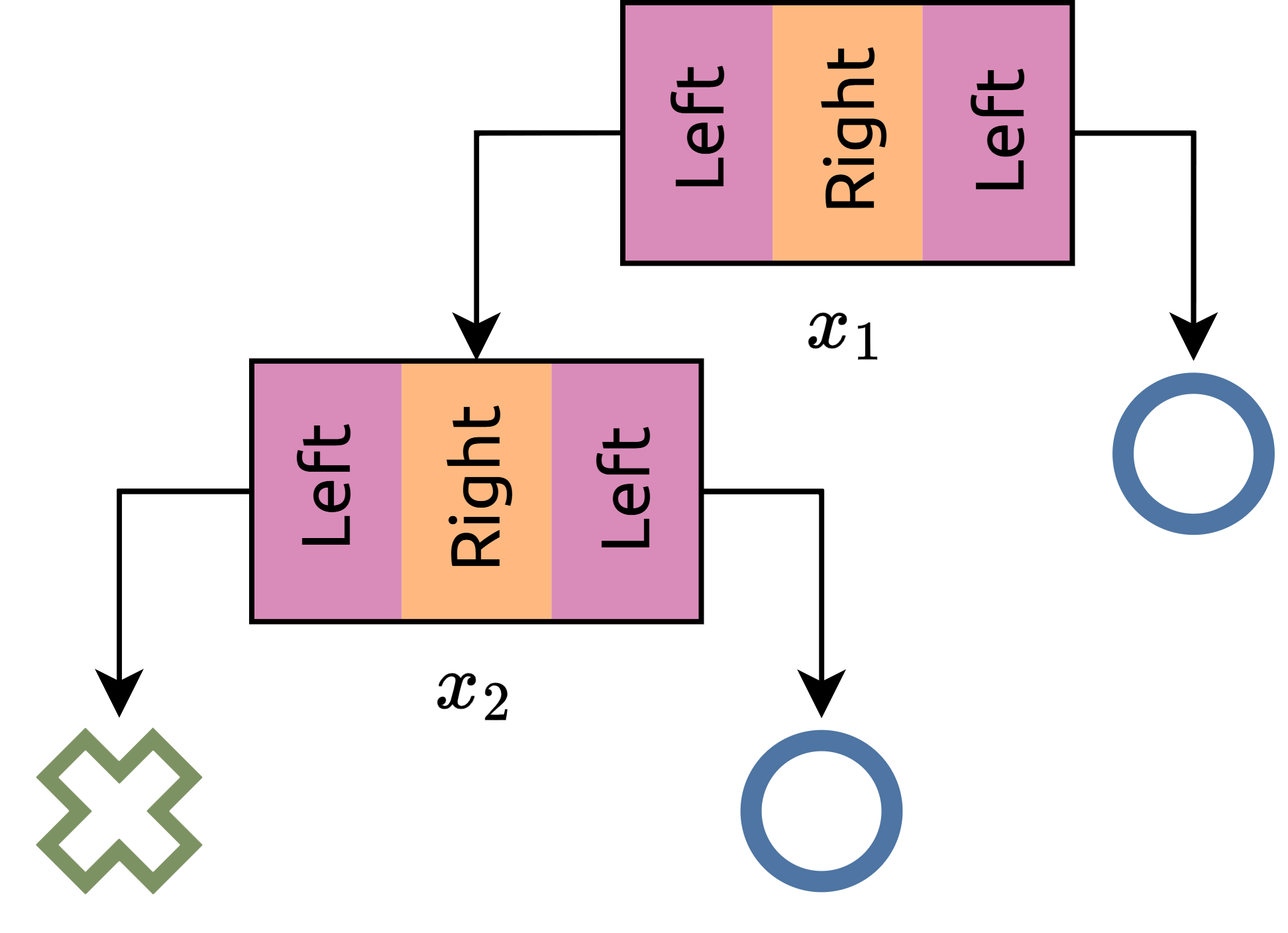}
         \caption{SGT}
         \label{fig:sgtplus}
     \end{subfigure}
    \caption{Demonstration of an SGT on a simple toy dataset. As seen here, the SGT can achieve perfect accuracy with a smaller depth and number of nodes compared to the linear axis-aligned tree.}
    \label{fig:comparison}
\end{figure}

In this work, we introduce the \emph{Shape Generalized Tree (SGT)}, a novel class of decision tree models where each internal node utilizes an axis-aligned shape function to determine the routing of samples to child nodes. As shape functions can be parametrized by a variety of rich models, they provide the capability to capture non-linear decision boundaries with respect to a chosen feature in each node. The benefit of increasing the capabilities of individual nodes can be seen clearly in Figure \ref{fig:comparison}. Figure \ref{fig:plus} visualizes a simple two dimensional synthetic dataset. A traditional axis-aligned linear tree requires six splits with a maximum depth of four (Figure \ref{fig:laat}) to represent decision boundaries in this dataset, while the SGT can achieve the same result with only two nodes with a maximum depth of two (Figure \ref{fig:sgtplus}). To build SGTs, we propose \emph{ShapeCART}, a novel and efficient SGT induction algorithm modeled after the CART framework \cite{CART} that constructs shape functions in each node via internal decision trees. Furthermore, we extend the SGT framework by incorporating bivariate shape functions in each node, resulting in S$^2$GT, and by enabling the construction of trees with higher branching factors (i.e., multi-way trees), resulting in SGT$_K$. Correspondingly, we extend the ShapeCART algorithm to accommodate these extensions, resulting in Shape$^2$CART and ShapeCART$_K$. 

Overall our contributions are as follows: \emph{(1) We introduce Shape Generalized Trees (SGTs), a new family of decision trees where internal node splits are governed by interpretable axis-aligned shape functions that can capture non-linear feature effects; (2) We introduce ShapeCART, a novel and efficient top-down induction algorithm for constructing SGTs; (3) We extend the SGT Framework to accommodate multi-way trees (SGT$_K$) and bivariate shape functions in each node (S$^2$GT) and modify ShapeCART accordingly; (4) We demonstrate across a range of datasets that SGTs achieve superior performance with more compact trees compared to traditional linear trees.}

\section{Background}

\subsection{Decision Trees}

Consider a supervised learning setting where we are given a dataset consisting of $N$ data points $\mathcal{X} = \{\mathbf{x}_n\}_{n=1}^N$, where each $\mathbf{x}_n \in \mathbb{R}^D$ is a $D$-dimensional feature vector, and their corresponding labels $\mathcal{Y} = \{y_n\}_{n=1}^N$. Within a decision tree framework, let $\mathcal{D} \subseteq \mathcal{X}\times \mathcal{Y}$ represent the subset of training data that reaches a specific internal node. The core mechanism of a decision tree involves recursively partitioning the data $\mathcal{D}$ at each internal node based on a defined splitting criterion. Among the most common type of decision trees is the binary axis-aligned linear tree: 
\begin{definition}[Binary Axis-Aligned Linear Tree]\label{def:baalc}
In a binary axis-aligned linear tree, internal nodes partition the data $\mathcal{D}$ into two subsets, $\mathcal{D}^{\texttt{l}}$ and $\mathcal{D}^{\texttt{r}}$, based on a threshold $\theta$ applied to a single feature encoded by a one-hot vector $\mathbf{w}$:
\begin{equation}
    \mathcal{D}^{\texttt{l}} = \{(\mathbf{x}_n, y_n) \in \mathcal{D} \mid \mathbf{w}^\top \mathbf{x}_n \leq \theta;\ \mathbf{w} \in \{0,1\}^D, \|\mathbf{w}\|_0 = 1, \theta \in \mathbb{R}\}\quad \mathcal{D}^\texttt{r} = \mathcal{D} / \mathcal{D}^\texttt{l}
\end{equation}
\end{definition}

A common paradigm for learning binary-axis aligned trees is the Top-Down Induction of Decision Trees (TDIDT) strategy, a greedy procedure that recursively partitions data to minimize a predefined impurity measure, used in algorithms like the widely adopted Classification and Regression Tree (CART) \cite{CART}. The appeal of these approaches lies in their empirical performance and efficiency, allowing trees to be trained quickly and scale well. Extensions like SERDT \cite{SERDT} incorporate inductive biases to encourage path-level sparsity and improve interpretability. Other methods refine trees after initial training, such as Tree Alternating Optimization (TAO) \cite{TAO} which iteratively updates nodes to boost predictive performance and prune the tree, and Hierarchical Shrinkage \cite{agarwal2022hierarchical} which modifies the empirical target distributions in each leaf based on the parents of each leaf. Alternatively, global optimization methods such as DL8.5 \cite{dl85},  GOSDT \cite{gosdt}, and OMT \cite{subramanian2023scalable} aim to learn optimal trees but typically scale poorly on large datasets \cite{dl85,gosdt} and often require pre‑discretization of continuous features \cite{dl85,gosdt,subramanian2023scalable}. Recently, there has been work on developing non-greedy \emph{near-optimal} tree induction methods. Some notable approaches include DPDT \cite{kohler2025breiman}, which formulates tree construction as an MDP, and SPLIT \cite{babbar2025near}, which reduces the runtime of GOSDT \cite{gosdt} via lookahead methods. 

Another important type of decision trees aims to enhance node expressivity by utilizing oblique splits, which allow internal nodes to split on linear combinations of multiple features:
\begin{definition}[Binary $M$-variate Oblique Tree]\label{def:bccoc}
In  a Binary $M$-variate oblique tree, nodes use a linear combination of at most $M$ features to partition the data:
\begin{equation}
    \mathcal{D}^{\texttt{l}} = \{(\mathbf{x}_n, y_n) \in \mathcal{D} \mid \mathbf{w}^\top \mathbf{x}_n \leq \theta;\ \mathbf{w} \in \mathbb{R}^D, \|\mathbf{w}\|_0 \leq M, \theta \in \mathbb{R}\}
\end{equation}
\end{definition}
However, oblique trees are difficult to interpret on most datasets due to the high dimensionality of each cut, making it difficult for users to reason about model decisions. Some notable approaches for learning unconstrained oblique trees ($M = D$) include utilizing the TAO \cite{TAO} framework and soft decision trees, which use a continuous relaxation of the binary oblique cut to learn a tree using gradient descent. Of particular relevance to our work are pairwise oblique models ($M=2$) such as BiCART \cite{kairgeldin2024bivariate}, BiTAO \cite{kairgeldin2024bivariate}, and BiDT \cite{BiDT}, which restrict each split to consider linear combinations between exactly two input features. This restriction allows the decision boundary in each node to be visualizable, unlike higher-order ($M>2$) oblique trees, and have shown promise in constructing accurate and compact trees. 

\subsection{Interpretability}
Model interpretability, revealing the internal mechanisms driving predictions, is crucial for fostering trust and enabling effective deployment, especially in high-stakes domains. 
Interpretability is typically understood across two primary scopes: global interpretability, concerning the model's overall logic and the relationships it has learned across the entire dataset, and local interpretability, explaining the model's prediction for a specific instance and the effect of small input perturbations \cite{slack2019assessing}. Approaches to achieving interpretability fall into two main categories: intrinsic model-based interpretability, where the model itself is structured to allow for direct understanding and reasoning about its predictions, and post-hoc interpretability, which uses external methods after training to approximate explanations for models lacking this inherent transparency \cite{molnar2020interpretable,murdoch2019definitions}. Intrinsic model‐based interpretability, which allows for understanding at both local and global scopes, is facilitated by specific structural properties of the model, including: \emph{Modularity} (or composability) where different parts of the model can be interpreted independently \citep{murdoch2019definitions,doshi2017towards}; \emph{Simulability}, where users can reason about and faithfully trace the model's decision process; \citep{murdoch2019definitions,slack2019assessing}; and \emph{Sparsity}, where the complexity of the model is constrained, through both reducing the number of features used and constraining the number of distinct model components a user must analyze \citep{doshi2017towards,murdoch2019definitions}. 

Decision trees are considered modular and simulable models, and provide model-based interpretability when their sparsity is enforced through limiting the total number of features used \cite{SERDT,murdoch2019definitions} and restricting the size of the tree, measured by node count and maximum depth \cite{molnar2020interpretable,luvstrek2016makes,SERDT,bertsimas2017optimal}. Further discussion on the interpretability of trees in comparison to other models can be found in Appendix \ref{sec:tree_interpretability}. 

\section{Shape Generalized Tree}

While effective in many scenarios, linear splitting criterion may result in a feature being used multiple times in a given decision path, leading to larger trees where the learned relation between the feature and the target is difficult for users to understand. To address this drawback, we propose a novel generalization of linear trees, the \textbf{Shape Generalized Tree (SGT)}, that can represent non-linear decisions within a single node. More formally, we define a binary axis-aligned SGT as follows: 
\begin{definition}[Binary Axis-Aligned Shape Generalized Tree (SGT)]
In a Shape Generalized Tree, internal nodes partition the data based on the output of a flexible shape function applied to a single selected feature:
\begin{equation} \label{eqn:sgt_cut}
    \mathcal{D}^{\texttt{l}} = \{(\mathbf{x}_n, y_n) \in \mathcal{D} \mid f_{\Theta}(\mathbf{w}^\top\mathbf{x}_n) \leq 0.0 ; \mathbf{w} \in \{0,1\}^D; \|\mathbf{w}\|_0 = 1\}
\end{equation}
Here, $\mathbf{w}$ is a one-hot feature selection vector where $d = \argmax(\mathbf{w})$ is the selected feature, and $f_\Theta : \text{dom}(\mathcal{X}_d) \rightarrow \mathbb{R};$ is a flexible shape function parameterized by $\Theta$. $f$ can be drawn from richer function classes, enabling the capture of more intricate, non-linear decision boundaries. As $f$ operates only on one feature, the shape function is easily visualized and interpreted. 
\end{definition}

Additionally, we propose extending this generalization to accommodate bivariate shape functions in each node, leading to the Binary Bivariate Shape Generalized Tree (S$^2$GT) model:
\begin{definition}[Binary Bivariate Shape Generalized Tree (S$^2$GT)]
In an S$^2$GT, internal nodes partition the data based on a shape function that utilizes at most two features:
\begin{gather}
    \begin{split}
        \mathcal{D}^{\texttt{l}} = \{(\mathbf{x}_n, y_n) \in \mathcal{D} \mid f^2_\Theta(\mathbf{w}_0^\top\mathbf{x}_n, \mathbf{w}_2^\top\mathbf{x}_n,\Theta) \leq 0.0 ; \\
        \mathbf{w}_1,\mathbf{w}_2 \in \{0,1\}^{D}; \|\mathbf{w}_1\|_0, \|\mathbf{w}_2\|_0 = 1 \}
    \end{split}
\end{gather}
Here, $\mathbf{w}_1, \mathbf{w}_2$ are one-hot feature selection vectors where $d_1 = \argmax{\mathbf{w}_1}, d_2 = \argmax{\mathbf{w}_2}$ are the selected features, and $f^2_\Theta : \text{dom}(\mathcal{X}_{d_1}) \times \text{dom}(\mathcal{X}_{d_2}) \rightarrow \mathbb{R}$ is a shape function that maps the pair of features to an output branch. The Binary Bivariate Oblique Cut (Definition \ref{def:bccoc}) can be seen as a special case where $f_\Theta(z_1, z_2) = \Theta_1 z_1 + \Theta_2z_2-\Theta_0$. 
\end{definition}

One significant advantage of employing shape functions is that they can be extended to accommodate multi-way branching (i.e. routing data into $K>2$ child nodes simultaneously). This facilitates the creation of the $K$-Way Axis-Aligned Shape Generalized Tree (SGT$_K$) model: 
\begin{definition}[Multi-Way Axis-Aligned Shape Generalized Tree (SGT$_K$)]
In an SGT$_K$, internal nodes partition the data into up to $K$ distinct subsets based on the output of a vector-valued shape function applied to a single feature:
\begin{equation} \label{eqn:sgtk_cut}
    \mathcal{D}^{k} = \{(\mathbf{x}_n, y_n) \in \mathcal{D} \mid \argmax_{k \in \{1,\dots,K\}} \left( f_\Theta^{(K)}(\mathbf{w}^\top\mathbf{x}_n) \right) = k; \mathbf{w} \in \{0,1\}^D, \|\mathbf{w}\|_0 \leq 1\}
\end{equation}
Here, the shape function $f^{(K)}:\text{dom}(\mathcal{X}_d)\rightarrow \mathbb{R}^K; d = \argmax{\mathbf{w}}$ maps the selected feature value to a $K$-dimensional vector, and the $\text{argmax}$ operation determines the branch assignment. 
\end{definition}

Although our model naturally extends to any $K$, we limit our exploration of SGT$_K$ to $K=3$ to maintain sparsity and, consequently, a high degree of interpretability. By integrating the bivariate splitting and $K$‑way branching, we obtain S$^2$GT$_K$ trees, which support multi‑way splits based on non‑linear interactions between two features (formal definition in Appendix \ref{sec:sg2tk}).

\paragraph{Expressiveness Guarantees for SGTs} We formally establish that SGTs are more expressive than binary axis-aligned linear trees by showing that: (1) SGTs are at least as expressive as binary axis-aligned linear trees with a similar number of decision nodes; (2) SGTs can be strictly more expressive than axis-aligned linear trees with a similar number of decision nodes. More formally: 
\begin{theorem} \label{thm:expressiveness1}
Every function that can be represented by a binary axis-aligned linear tree (Definition \ref{def:baalc}), can be represented by an SGT (Definition \ref{eqn:sgt_cut}) with the same number of decision nodes.
\end{theorem}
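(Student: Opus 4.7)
The plan is a direct constructive argument: given any binary axis-aligned linear tree $T$, I build an SGT $T'$ with identical topology by replacing each internal node's threshold split with an equivalent univariate shape function, and then show the two trees route every input to the same leaf and hence compute the same function.

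First, I would define the node-by-node transformation. For each internal node of $T$ with split parameters $(\mathbf{w},\theta)$---which sends a sample $\mathbf{x}_n$ to the left child iff $\mathbf{w}^\top\mathbf{x}_n \leq \theta$---I construct the corresponding internal node of $T'$ with the same one-hot selector $\mathbf{w}$ and shape function $f_\Theta(z) = z - \theta$. This is an admissible choice in the sense of Equation \ref{eqn:sgt_cut}: $f_\Theta$ is an affine function on $\mathrm{dom}(\mathcal{X}_d)$, and hence representable in every reasonable shape-function family---including the tree-structured parameterization used by ShapeCART (a single-threshold depth-one internal tree suffices), splines, or a one-unit network. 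Leaves of $T'$ retain their predictions from $T$ unchanged.

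Second, I would verify functional equivalence by induction on tree depth. At every internal node, $f_\Theta(\mathbf{w}^\top\mathbf{x}_n) = \mathbf{w}^\top\mathbf{x}_n - \theta \leq 0$ if and only if $\mathbf{w}^\top\mathbf{x}_n \leq \theta$, so the routing at this node in $T'$ agrees with the routing in $T$. Chaining this equivalence along any root-to-leaf path shows that any input $\mathbf{x}_n$ reaches the same leaf in $T'$ as in $T$ and therefore receives the same prediction. Since the construction introduces no new internal nodes, the decision-node count is exactly preserved.

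There is no substantive obstacle here: the theorem is essentially the observation that the family of axis-aligned threshold splits is a subset of the family of axis-aligned shape-function splits. The only point that deserves explicit mention is that the shape-function family used to parameterize the SGT must be closed under affine univariate functions (equivalently, able to represent a single threshold indicator), which is a very mild condition satisfied by every shape-function class considered in this paper.
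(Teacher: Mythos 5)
Your proposal is correct and follows essentially the same route as the paper's proof: both replace each threshold split $(\mathbf{w},\theta)$ with the shape function $f_\Theta(z)=z-\theta$ on the same selected feature, observing that axis-aligned splits are a special case of shape-function splits. Your additional remarks (the depth induction and the closure-under-affine-functions caveat) merely make explicit what the paper's one-line construction leaves implicit.
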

\begin{theorem}%[SGTs can be strictly more expressive than Linear Trees]
\label{thm:expressive2}
For every $B \in \mathbb{N}$, there exists a function for which a binary axis-aligned tree requires at least $B$ additional decision nodes to represent, compared to an SGT.
\end{theorem}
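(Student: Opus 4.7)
The plan is to construct, for each $B \in \mathbb{N}$, a univariate target function that alternates many times along a single feature. Take $D = 1$ and define $g:[0,B+2)\to\{0,1\}$ by $g(x) = \lfloor x \rfloor \bmod 2$, so $g$ has exactly $B+2$ maximal constant pieces along $x$. I would then compare the minimum number of internal nodes required to represent $g$ exactly under each model class.

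For the SGT side, I would show that a single internal node suffices. Choosing $\mathbf{w} = 1$ (the only coordinate) and picking $f_\Theta$ from the shape-function class so that $f_\Theta(x) \leq 0$ exactly when $g(x)=0$ --- for example, a piecewise-constant function equal to $-1$ on the even-indexed unit intervals and $+1$ on the odd-indexed ones --- routes the two classes into separate leaves that predict $0$ and $1$ respectively, representing $g$ exactly. Such a shape function is realizable by every function class mentioned in the paper (internal decision trees, splines, or neural networks), so the SGT representation uses just $1$ internal node.

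For the axis-aligned linear tree side, I would prove that at least $B+1$ internal nodes are required. Since $D = 1$, every split has the form $x \leq \theta$, and a standard induction on binary trees shows that a tree with $n$ internal nodes has exactly $n+1$ leaves whose feasible regions form a partition of $\mathbb{R}$ into (at most $n+1$ nonempty) intervals. Each leaf emits a single constant, so every leaf interval can overlap at most one of the $B+2$ maximal constant pieces of $g$; hence $n+1 \geq B+2$, i.e., $n \geq B+1$. Combined with the one-node SGT upper bound, this yields a gap of at least $B$ internal nodes, as required. The main obstacle is pinning down the shape-function class precisely enough to realize a sign pattern with $B+1$ transitions; since the paper does not fix a single class but explicitly allows rich parameterizations (notably the tree-based shape functions used in ShapeCART), the construction goes through under the mild, class-level flexibility assumption that the shape functions can realize piecewise-constant sign patterns with arbitrarily many pieces, which I would state explicitly at the start of the proof.
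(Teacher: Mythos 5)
Your proposal is correct and follows essentially the same route as the paper: both construct a one-dimensional labeling function that alternates over $B+2$ intervals (yours via $\lfloor x\rfloor \bmod 2$, the paper's via the sign of $\cos(2\pi\omega x)$ with $\omega=B$), realize it with a single SGT node using an oscillating shape function, and lower-bound the axis-aligned tree by counting leaves against the $B+2$ class-constant pieces. The only cosmetic difference is your choice of a piecewise-constant shape function versus the paper's cosine parameterization, and your explicit statement of the class-level flexibility assumption, which the paper leaves implicit in its definition of SGTs.
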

To prove Theorem \ref{thm:expressiveness1}, we demonstrate that axis-aligned linear trees are a special case of SGTs. To prove Theorem \ref{thm:expressive2}, we construct a family of one-dimensional labeling functions with periodic boundaries and show that an SGT can represent any function in this family using a fixed number of nodes, while an axis-aligned linear tree requires a number of nodes that grows linearly with the number of intervals. Complete proofs for both theorems can be found in Appendix \ref{sec:expr_guar}. 

\section{ShapeCART}

To learn SGTs from data, we propose \emph{ShapeCART}, a novel and efficient tree‐induction algorithm inspired by the classic Classification and Regression Tree (CART) framework \cite{CART}. Furthermore, we extend the core ShapeCART algorithm to learn SGT$_K$s and S$^2$GTs, creating ShapeCART$_K$ and Shape$^2$CART respectively. For simplicity, we present our methods in the classification setting, where each label $y_n \in \mathcal{Y}$ takes values in a finite class set $\mathcal{C}$, but provide a detailed description of the regression variant in Appendix \ref{sec:regression}.
\begin{algorithm}[t]
\caption{Selecting Shape Function (Node Level Outer Problem)}\label{alg:outerproblem}
\small
% \SetAlgoLined
\SetKwInOut{Input}{Input}
\SetKwInOut{Output}{Output}
\DontPrintSemicolon
\setstretch{.8}
\Input{Features $\mathcal{X}$, Target $\mathcal{Y}$, Branching Factor $K$, Pairwise Candidate Limit $P$, Coordinate Descent Iterations $R$, Pairwise Reg. $\gamma$, Branching Reg. $\lambda$, Impurity $\mathcal{H}(\cdot)$, Weighted Impurity $\mathcal{L}(\cdot)$ }
\Output{Set of branching partitions $\bm{D}^*$, Best Shape Function $f^*(\cdot)$, Best Weighted Impurity $\bm{L}^*$}

\BlankLine
$\bm{L}^* \leftarrow \infty$; $f^*(\cdot) \gets \texttt{None}; \bm{D}^* \gets \texttt{None}$ \tcp*{Initialize}

\For(\tcp*[f]{Iterate through all features}){$d \gets 1$ \KwTo $D$}{
    $\bm{L}_{d},\;f_d(\cdot), \bm{D}_d \gets \text{FitShapeFunction}(\mathcal X,\;\mathcal Y,\;\mathcal H(\cdot),\;K,\;\lambda,\;R, d)$
    
    \uIf{$\bm{L}_d < \bm{L}^*$}{$\bm{L}^* \gets \bm{L}_d;\ f^*(\cdot) \gets f_d(\cdot); \bm{D}^* \gets \bm{D}_d$ 
    }
}
\uIf{P > 0}{

\For(\tcp*{Filter Interactions (Section \ref{sec:heuristic})}){$(d_1,d_2) \in$ \text{PairwiseCombinations}(D) }{ 
$\bm{D}_{\text{Intersect}} \gets \{\mathcal{D}^i \cap \mathcal{D}^j\ \forall (\mathcal{D}^i, \mathcal{D}^j) \in\bm{D}_{d_1} \times \bm{D}_{d_2} \}$ 

$\delta_{(d_1,d_2)} \gets \mathcal{L}(\bm{D}_{\text{Intersect}})$; 
}

$\Delta_P \gets$ $P$ Best  $\delta$ values

\For(\tcp*[f]{Iterate through candidate pairs of features}){$(d_1, d_2) \in \Delta_P$}{
    $\bm{L}_{(d_1,d_2)},\;f_{(d_1,d_2)}(\cdot), \bm{D}_{(d_1,d_2)} \gets \text{FitShapeFunction}(\mathcal X,\;\mathcal Y,\;\mathcal H(\cdot),\;K,\;\lambda,\;R, (d_1,d_2))$

    $\bm{L}_{(d_1,d_2)} \gets \bm{L}_{(d_1,d_2)} + \gamma$ \tcp*{Add Pairwise Penalty}
    
    \uIf{$\bm{L}_{(d_1,d_2)} < \bm{L}^*$}{$\bm{L}^* \gets \bm{L}_d;\ f^*(\cdot) \gets f_{(d_1,d_2)}(\cdot); \bm{D}^* \gets \bm{D}_{(d_1,d_2)}$}
}
}
\Return{$\bm{D}^*, \bm{L}^*, f^*(\cdot)$}
\end{algorithm}
ShapeCART constructs a decision tree via the TDIDT paradigm (Pseudocode in Appendix \ref{sec:extrapscode}), decomposing the global optimization into greedy, recursive node‐level learning tasks similar to CART. 
At each node, we aim to learn feature selection parameters $\mathbf{w}$ and shape‐function parameters $\Theta$ that minimize the weighted impurity of the empirical class distributions in the resulting child nodes. We pose this node-level problem as a bi-level optimization: (1) An inner problem to learn the optimal shape‐function parameters for each candidate feature and (2) an outer problem to select the shape function that yields the lowest impurity:
\begin{gather}
\min_{\mathbf{w}\in\{0,1\}^D, \|\mathbf{w}\| \leq 1} \quad 
\sum_{d=1}^D w_d \min_{\Theta_d}\left[ \mathcal{L}\left(\{ \mathcal{D}^\texttt{l}_d, \mathcal{D}^\texttt{r}_d \} \right)\right]
\\
\text{s.t.}\quad  \mathcal{D}^{\texttt{l}}_d = \{(\mathbf{x}_n, y_n) \in \mathcal{D} \mid f_d(x_{n,d}) \leq 0.0\}\nonumber
\end{gather}
Here, $f_d(\cdot) \equiv f_{\Theta_d}(\cdot)$ and $\mathcal{L}(\cdot)$ is the weighted impurity of the resulting branches, defined for any superadditive impurity measure $\mathcal{H}$ (e.g. Gini \cite{CART} or entropy \cite{quinlan1986induction}) as: 
\begin{gather}
    \mathcal{L}(\bm{D}) = \sum_{\mathcal{D} \in \bm{D}}|\mathcal{D}|\mathcal{H}(\Pi(\mathcal{D}));\quad \Pi(\mathcal{D})_{c} = \frac{1}{|\mathcal{D}|}\sum_{(x_n, y_n) \in \mathcal{D}} \mathbf{1}(y_n = c) \; \forall c \in \mathcal{C}
\end{gather}
To solve this problem, we construct shape functions for each feature and select the one with the lowest weighted impurity (node level pseudocode in Algorithm \ref{alg:outerproblem}). Additionally, this formulation naturally extends to implementing SGT$_K$  (Equation~\ref{eqn:sgtk_cut}) by adopting a multi-way shape function in lieu of the binary one, creating ShapeCART$_K$.

\paragraph{Extending to Shape$^2$CART} To extend ShapeCART to Shape$^2$CART, we extend the above formulation to consider both individual feature shape functions and bivariate shape functions. The resulting bi-level optimization problem is as follows: 
\begin{align} \label{eqn:bilo_pairwise}
\min_{\mathbf{w},\mathbf{W}}&  \quad 
\sum_{d=1}^D w_d \min_{\Theta_d}\left[ \mathcal{L}\left(\{ \mathcal{D}^\texttt{l}_d, \mathcal{D}^\texttt{r}_d \} \right)\right] + \\ 
&\quad \sum_{(d_1, d_2) \in \Delta} W_{(d_1,d_2)}\min_{\Theta_{(d_1,d_2)}}\left[\mathcal{L}\left(\{\mathcal{D}^\texttt{l}_{(d_1,d_2)}, \mathcal{D}^\texttt{r}_{(d_1,d_2)}\}\right)\right] + \gamma\|\mathbf{W}\|_0 \nonumber\\
\text{s.t.}&\quad  \mathcal{D}^{\texttt{l}}_d = \{(\mathbf{x}_n, y_n) \in \mathcal{D} \mid f_d(x_{n,d}) \leq 0.0\}\\
&\quad \mathcal{D}_{(d_1,d_2)}^\texttt{l} = \{(\mathbf{x}_n, y_n) \in \mathcal{D} \mid f^2_{(d_1,d_2)}(x_{n,d_1}, x_{n,d_2}) \leq 0.0\} \nonumber \\
&\quad \|\mathbf{w}\|_0 + \|\mathbf{W}\|_0 = 1, \mathbf{w} \in \{0,1\}^D, \mathbf{W} \in \{0,1\}^{D \times D}\nonumber
\end{align}
Here, $\mathbf{w}$ is a one-hot univariate feature selection vector, $\mathbf{W}$ is a one-hot bivariate interaction selection matrix, $\Delta$ is a set representing all pairs of features, and $\gamma$ is a tuneable regularization hyperparameter added to penalize selection of a bivariate shape function to encourage choosing axis-aligned splits unless the bivariate function leads to a significantly lower loss.

\subsection{Learning Shape Functions} \label{sec:learningshape}
Recall that for a given feature or pair of features, we would like to learn a shape function $f_d(\cdot)$ (or $f_{(d_1,d_2)}$ for bivariate trees) that assigns samples to a branch (e.g., \texttt{left}/\texttt{right} or $1,\ldots,K$ in the multi-way case) such that we minimize the impurity of the empirical class distributions of the resultant branches. To learn this shape function efficiently, we propose a two-stage approximation strategy:
\begin{enumerate}
    \item We first learn a function $\mathbf{T}(\cdot): \mathbb{R} \rightarrow \{1,\ldots, L\}$ that maps samples to $L$ mutually exclusive bins based on the value of the $d$-th feature (or $(d_1,d_2)$ for bivariate shape functions).
    \item We then compute an assignment of bins to branches (e.g., \texttt{left}/\texttt{right}) that minimizes the impurity over the induced partitioning of the data. 
\end{enumerate}

\subsubsection{Binning the samples} 
We choose to represent the binning function $\mathbf{T}(\cdot)$ via an internal decision tree. For univariate shape functions, we use CART to train a binary axis-aligned linear tree only on the relevant feature and the target to construct $L$ mutually exclusive bins that approximately minimize the chosen impurity criteria $\mathcal{H}$. For bivariate shape functions, we instead train a binary bivariate oblique tree using BiCART \citep{kairgeldin2024bivariate} on the pair of features under consideration and the target. The use of CART and BiCART affords us control over model complexity; by constraining parameters such as the number of leaves and the minimum of samples in each leaf tree depth, one can ensure that the resulting piecewise‑constant shape function comprises a bounded number of meaningful segments. Each bin  $\ell=1,\ldots,L$ stores its empirical class distribution $\pi_\ell \in \mathbb{R}^{|\mathcal{C}|}$ and weight $W_\ell \in \mathbb{R}$, representing the number of samples that fall into the bin. Note that while we utilize CART, this step can be performed with a variety of approaches, and we provide results using an alternative tree induction algorithm (DPDT \cite{kohler2025breiman}) in Appendix \ref{sec:shapedpdt}. 

\subsubsection{Mapping Bins to Branches} 
Given the bins from the first stage, our goal is to assign each bin $\ell$ to an output branch (e.g. \texttt{l,r}) such that we minimize the weighted impurity of the resultant branches. Formally, let $\mathbf{a}\in\{\texttt{l}, \texttt{r}\}^L$ denote the branch assignment vector. We aim to solve the following discrete optimization problem:
\begin{gather}
\min_{\mathbf{a}}\quad W_\texttt{l}\mathcal{H}(\Pi_\texttt{l}) +  W_\texttt{r} \cdot \mathcal{H}(\Pi_\texttt{r})\\ \label{eqn:a_formulation}
\text{s.t.} \quad W_k = \sum_{\ell =1}^L \1(\mathbf{a}_\ell = k) W^\ell, \quad \Pi^k = \frac{\sum_{\ell=1}^L \1(\mathbf{a}_\ell = k) (W^\ell \pi^\ell)}{W_k}
\end{gather}
To solve this optimization problem in a scalable manner, we adopt a coordinate descent procedure. At each step, we determine the assignment of a single bin $\ell$ while keeping all other assignments fixed. This reduces the optimization problem to iteratively evaluating the impurity resulting from assigning $\ell$ to each possible branch and selecting the assignment that minimizes the objective. The process is repeated for $R$ iterations, with the order of leaf updates randomly shuffled in each pass to mitigate ordering bias (Pseudocode in Appendix \ref{sec:extrapscode}).
\begin{wrapfigure}[25]{R}{0.60\textwidth}
\begin{algorithm}[H]
% \SetAlFnt{\small\sffamily}
\caption{Fit Shape Function}\label{alg:shapefuncpseudocode}
% \SetCustomAlgoRuledWidth{0.45\textwidth}
\small
\setstretch{0.9}
\SetKwInOut{Input}{Input}
\SetKwInOut{Output}{Output}

\Input{$\mathcal X,\;\mathcal Y,\;R,\;K,\;\lambda,\;\mathcal H(\cdot),\;d\ \text{or}\ (d_1,d_2),\;$}
\Output{$\bm{L}_{k^*},\;f_d(\cdot)$}

\BlankLine
\uIf{ $d$ \text{was provided}}{
    $\mathbf{T} \;\leftarrow\; \text{FitCART}(\mathcal{X}_{d}, \mathcal{Y})$ \;
  }
  \Else{
      $\mathbf{T} \;\leftarrow\; \text{FitBiCART}(\mathcal{X}_{d_1}, \mathcal{X}_{d_2}, \mathcal{Y})$
  }
$(W_1,\dots,W_L),\;(\pi_1,\dots,\pi_L)\;\leftarrow\;\text{ExtractBinStats}(\mathbf T)$

$\mathbf{a}_{(0),\text{root}}, \bm{L}_{\text{root}}^{(0)} \leftarrow \text{ExtractRootAssignments}(\mathbf{T})$

\For{$k\leftarrow 2$ \KwTo $K$}{
  $\mathbf a^k_{(0),\text{WKM}}, \bm{L}_{k,\text{WKM}}^{(0)} \;\leftarrow\; \begin{aligned}[t]&\text{WeightedKMeans}\bigl([\pi_1,\dots,\pi_L]\\
  &\qquad [W_1,\dots,W_L]; k\bigr)\end{aligned}$\;
  \uIf{$\bm{L}_{\text{root}}^{(0)} < \bm{L}_{k,\text{WKM}}^{(0)}$}{
    $\mathbf{a}^k_{(0)} \leftarrow \mathbf{a}_{(0),\text{root}}$
  } 
  \Else{
    $\mathbf{a}^k_{(0)} \leftarrow \mathbf{a}^k_{(0),\text{WKM}}$
  }
  $(\mathbf a^k,\;\bm{L}_k)\;\leftarrow\; \begin{aligned}[t]&\text{CoordDescent}\bigl(\mathbf a^k_{(0)};[\pi_1,\dots,\pi_L]\\
  &\qquad [W_1,\dots,W_L]; k; \mathcal{H}(\cdot); R \bigr)\end{aligned}$\;
}

$k^* \;\leftarrow\; \argmin_{k=2,\dots,K}\left(\bm{L}_k + \lambda(k-2)\right)$

$\{\mathcal{D}^1,\ldots, \mathcal{D}^{k^*}\} \gets \text{RetrieveBranchAssignments}(\mathcal{X}, \mathbf{T}, \mathbf{a}^{k*})$

$f(\cdot)\;\leftarrow\;\mathbf a^{\,k^*}_{\mathbf T(\cdot)}$

\Return{$\bm{L}_{k^*},\;f(\cdot),\{\mathcal{D}^1,\ldots, \mathcal{D}^{k^*}\}\; $}
\end{algorithm}
\end{wrapfigure}
Coordinate descent guarantees monotonic objective decrease and finite convergence given the discrete search space \citep{wright2015coordinate}, with time complexity scaling linearly in the number of leaves (Appendix \ref{sec:time_complexity}). However, its sensitivity to initialization can lead to suboptimal local minima \citep{wright2015coordinate}. To obtain a strong initialization, we choose the better of the following two strategies for setting $\mathbf{a}$: (1) cluster assignments obtained by applying Weighted K-Means to the empirical distributions of the bins $\pi_\ell$, weighted by $W_\ell$; or (2) bin assignments based on whether each bin falls to the left or right of the root node in the inner CART tree. 

Based on the procedure detailed above, we can derive a lower-bound on the information gain at each decision node with respect to the information gain at each node in CART: 
\begin{lemma} \label{alg:repr_lemma}
 Let $t$ be an individual node in $T$, let $\mathcal{IG}(t)$ denote the information gain obtained by an axis-aligned (threshold) split on the samples in $t$ from CART, and let $\mathcal{IG}_f(t)$ denote the information gain obtained by a shape function split from ShapeCART on the samples in $t$. We show that $\mathcal{IG}_f(t)\geq\mathcal{IG}(t)$. 
\end{lemma}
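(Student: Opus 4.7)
The plan is to exploit that ShapeCART's shape-function learning procedure (Algorithm 2) is designed to dominate the axis-aligned threshold split via its initialization step, and that the subsequent coordinate descent can only preserve or improve the weighted impurity. Because the information gain at node $t$ equals the (fixed) parent impurity minus the weighted child impurity, it suffices to show that the minimum weighted impurity achieved by a ShapeCART split at $t$ is no larger than the minimum weighted impurity achieved by a CART axis-aligned split at $t$. Both algorithms independently minimize over features in their outer loop, so I would establish the impurity inequality feature-by-feature and then take the minimum over $d$.

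Fix a candidate feature $d$. By construction, ShapeCART's binning function $\mathbf{T}_d$ is itself the output of an inner CART tree fit on the single feature $d$ and target $\mathcal{Y}$ restricted to the samples in $t$. I would next argue that the root split of $\mathbf{T}_d$ coincides with the axis-aligned threshold split CART would produce on feature $d$ at $t$: both procedures apply the same greedy rule---minimizing the same weighted impurity $\mathcal{H}$ over the same finite set of candidate thresholds on $\mathcal{X}_d$ restricted to $\mathcal{D}_t$---so they return the same split (modulo tie-breaking). Consequently, the \emph{ExtractRootAssignments} initialization in Algorithm 2 assigns each bin $\ell$ to the branch corresponding to the side of the root split it falls on, inducing a partition of $\mathcal{D}_t$ identical to CART's split and yielding an initial objective $\bm{L}^{(0)}_{\text{root}}$ equal to CART's weighted impurity on feature $d$.

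I would then finish the binary case ($K=2$) by combining two monotonicity facts: Algorithm 2 keeps the better of the root and Weighted K-Means initializations, so $\bm{L}^{(0)}_{k=2} \leq \bm{L}^{(0)}_{\text{root}}$, and the coordinate descent step only accepts updates that do not increase the objective, so $\bm{L}_{k=2} \leq \bm{L}^{(0)}_{k=2}$. Chaining gives $\bm{L}_{k=2}(d) \leq \mathcal{L}_{\text{CART}}(d)$. For the multi-way case, Algorithm 2 selects $k^* = \argmin_k \bm{L}_k + \lambda(k-2)$; since $\lambda \geq 0$ and $k = 2$ incurs no penalty, $\bm{L}_{k^*} \leq \bm{L}_{k^*} + \lambda(k^* - 2) \leq \bm{L}_{k=2}$, extending the inequality to any $K$. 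Taking the minimum over features $d$ and subtracting from the fixed parent impurity yields $\mathcal{IG}_f(t) \geq \mathcal{IG}(t)$.

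The main obstacle will be rigorously arguing that the root split of the inner CART tree reproduces CART's axis-aligned split on the same feature at $t$; this reduces to observing that both calls to CART see identical data $\mathcal{D}_t$, identical candidate thresholds, and an identical impurity criterion, so their greedy choices agree up to tie-breaking. A smaller technical point is that coordinate descent only guarantees non-increase rather than strict decrease, but this still provides the required upper bound $\bm{L}_{k=2} \leq \bm{L}^{(0)}_{\text{root}}$, which is all the argument needs.
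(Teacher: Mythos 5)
Your proposal is correct and takes essentially the same route as the paper's proof: the root of the inner univariate CART tree recovers the best axis-aligned split on that feature, the initialization keeps the better of the root partition and the Weighted K-Means assignment, coordinate descent only improves the objective, and the outer minimization over features transfers the per-feature inequality to the node-level information gain. Your explicit handling of the penalized multi-way selection ($\bm{L}_{k^*} \leq \bm{L}_{k^*} + \lambda(k^*-2) \leq \bm{L}_{2}$) is a small but welcome addition that the paper's proof leaves implicit.
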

Given this lemma, ShapeCART inherits the bound on information gain for any terminal node of the tree established for CART by \citet{klusowski2024large}. Proof of Lemma \ref{alg:repr_lemma} can be found in Appendix \ref{sec:info_gain_proof}.

\paragraph{Extending to Multi-Way Branching} ShapeCART can be adapted to the multi-way setting, ShapeCART$_K$, by increasing the number of clusters in the Weighted K-Means initialization and considering more options for $\mathbf{a}_\ell$ in each step of the coordinate descent procedure. To regularize ShapeCART$_K$ and prevent the creation of branches that do not provide a significant impurity decrease, we run the Weighted K-Means initialization and Coordinate Descent procedure for each $k\in\{2,\dots,K\}$ and add a penalty term $\lambda(k-2)$ to the weighted impurity of the assignments for that $k$, where $\lambda$ is a hyperparameter. We then select the $k$ with the minimum penalized impurity.  

In summary, the shape function for a given feature $f_d(\cdot)$ is represented jointly by an internal tree $\mathbf{T}_d(\cdot)$ which maps samples to bins, and a lookup vector $\mathbf{a}$ which maps bins to branches (Pseudocode in Algorithm \ref{alg:shapefuncpseudocode}). The time complexity of constructing a shape function is $\mathcal{O}\left(NC\log N + K^2LC\right)$ where $C$ is the number of classes (see Appendix \ref{sec:time_complexity} for detailed analysis). 

\subsection{Limiting Bivariate Shape Function Construction} \label{sec:heuristic}
Recall that to solve the optimization problem presented in Equations \ref{eqn:bilo_pairwise}, we need to construct all univariate and bivariate shape functions to select the one that minimizes the weighted impurity of the resultant branches. A naive approach of constructing a shape function for each pair of features will require constructing $\mathcal{O}(D^2)$ shape functions, incurring significant computational cost. Instead, we propose a heuristic that utilizes the computation of univariate shape functions to identify a few promising pairs for which a shape function will be constructed. We first construct univariate shape functions $f_1,\ldots,f_D$ and denote the set of branches resultant from applying each feature's shape functions on the input data as $\bm{D}_d = \{\mathcal{D}^\texttt{l}_d, \mathcal{D}^\texttt{r}_d\}\ \forall d=1,\ldots, D$ (or $\bm{D}_d = \{\mathcal{D}^1_d, \ldots, \mathcal{D}^K_d\}$ in the multi-way case). Our goal is to identify pairs of features $(d_1, d_2)$ that have the potential to have an impurity that is significantly lower than the best impurity of the individual features. We achieve this by calculating the weighted impurity of the sets induced by the Cartesian product of the branching sets of a pair of features and subtracting this value from the best impurity obtained from the univariate shape functions in the pair:
\begin{gather}
    \delta_{(d_1,d_2)} =  \min(\mathcal{L}(\bm{D}_{d_1}), \mathcal{L}(\bm{D}_{d_2})) - \mathcal{L}(\{\mathcal{D}^i \cap \mathcal{D}^j |  (\mathcal{D}^i, \mathcal{D}^j) \in \bm{D}_{d_1} \times \bm{D}_{d_2} \})
\end{gather}
We retain the $P$ pairs with the highest $\delta$ values and denote this set as $\Delta_P$, which we use in lieu of the full set of pairs ($\Delta$) in Equation \ref{eqn:bilo_pairwise}. With our heuristic, we only require $\mathcal{O}((N + CK^2)D^2)$ operations to construct the intersection of sets followed by the construction of $P$ shape functions ($\mathcal{O}(P(NC\log N + K^2LC)) $ vs. $\mathcal{O}(D^2(NC\log N + K^2LC))$). Assuming $P \ll D^2$, this heuristic can significantly reduce runtime (evaluated in Appendix \ref{sec:heuristic_ablate}). The choice of $P$ is guided by the user's computational preferences; higher $P$ increases runtime in return for increased performance.

\subsection{Post-Processing}

TDIDT approaches are efficient and allow trees to be trained quickly. However, one downside of TDIDT approaches is their greedy behavior, potentially resulting in globally-suboptimal trees \cite{TAO,kohler2025breiman}. To overcome this, we globally refine the tree constructed via ShapeCART using Tree Alternating Optimization (TAO) \cite{TAO,kairgeldin2024bivariate}. This post-processing procedure refits the shape function at each internal node of the constructed tree using the predictions of the subtrees rooted at the node's children, while also pruning nodes that do not significantly reduce error. While the original TAO algorithm is restricted to binary trees, we extend it to support $K$-way branching. Full implementation details and our modifications to TAO are provided in Appendix \ref{sec:model_implementation}. 

\section{Experimental Evaluation} \label{sec:acc_comp}
In this section, we evaluate the performance of trees induced by ShapeCART (SGT-C), ShapeCART$_3$ (SGT$_3$-C), Shape$^2$CART (S$^2$GT-C) and Shape$^2$CART$_3$ (S$^2$GT$_3$-C) against various benchmark approaches on a range of 26 real-world classification datasets (details in Appendix \ref{sec:dataset_info}). We also present results on the TAO refined ShapeCART models - denoted as SGT-T, SGT$_3$-T, S$^2$GT-T, and S$^2$GT$_3$-T. We benchmark our axis-aligned approaches against CART \cite{CART}, SERDT \cite{SERDT}, HSTree \cite{agarwal2022hierarchical}, Axis-Aligned TAO (AxTAO) \cite{TAO}, DPDT \cite{kohler2025breiman}, and SPLIT \cite{babbar2025near}. We evaluate our bivariate approaches against the newly proposed BiCART and BiTAO algorithms \cite{kairgeldin2024bivariate}. CART, SERDT, HSTree, and BiCART are greedy TDIDT induction approaches, while AxTAO, DPDT, SPLIT and BiTAO are non-greedy induction approaches. Implementation details for all models evaluated can be found in Appendix \ref{sec:model_implementation}. 

\paragraph{Hyperparameter Search and Evaluation} 
In our experiments, we report results for all approaches across reasonable tree depths (2–6), following \citet{SERDT} and for the overall best model. Each dataset is split into three folds using a 70/30 train/validation–test split, with the non-training data further divided in the same ratio. Hyperparameters are optimized via Bayesian search with Optuna \cite{akiba2019optuna} using 50 trials per model and depth to ensure fairness across differing search spaces. Each configuration is scored by mean validation accuracy, and we retain the best per depth and overall. Training time is limited to 15 minutes for univariate and 30 minutes for bivariate models. All runs are executed on GCP N2 instances (8 vCPUs, 32 GB RAM). Hyperparameter search spaces for all evaluated approaches and hyperparameter importance analyses for all ShapeCART variants appear in Appendix \ref{sec:hp}. We present average test accuracy results across datasets below, and additional evaluations and ablations are presented in Appendix \ref{sec:all_results}.

\begin{figure}[t]
    \centering
    \includegraphics[width=0.8\linewidth]{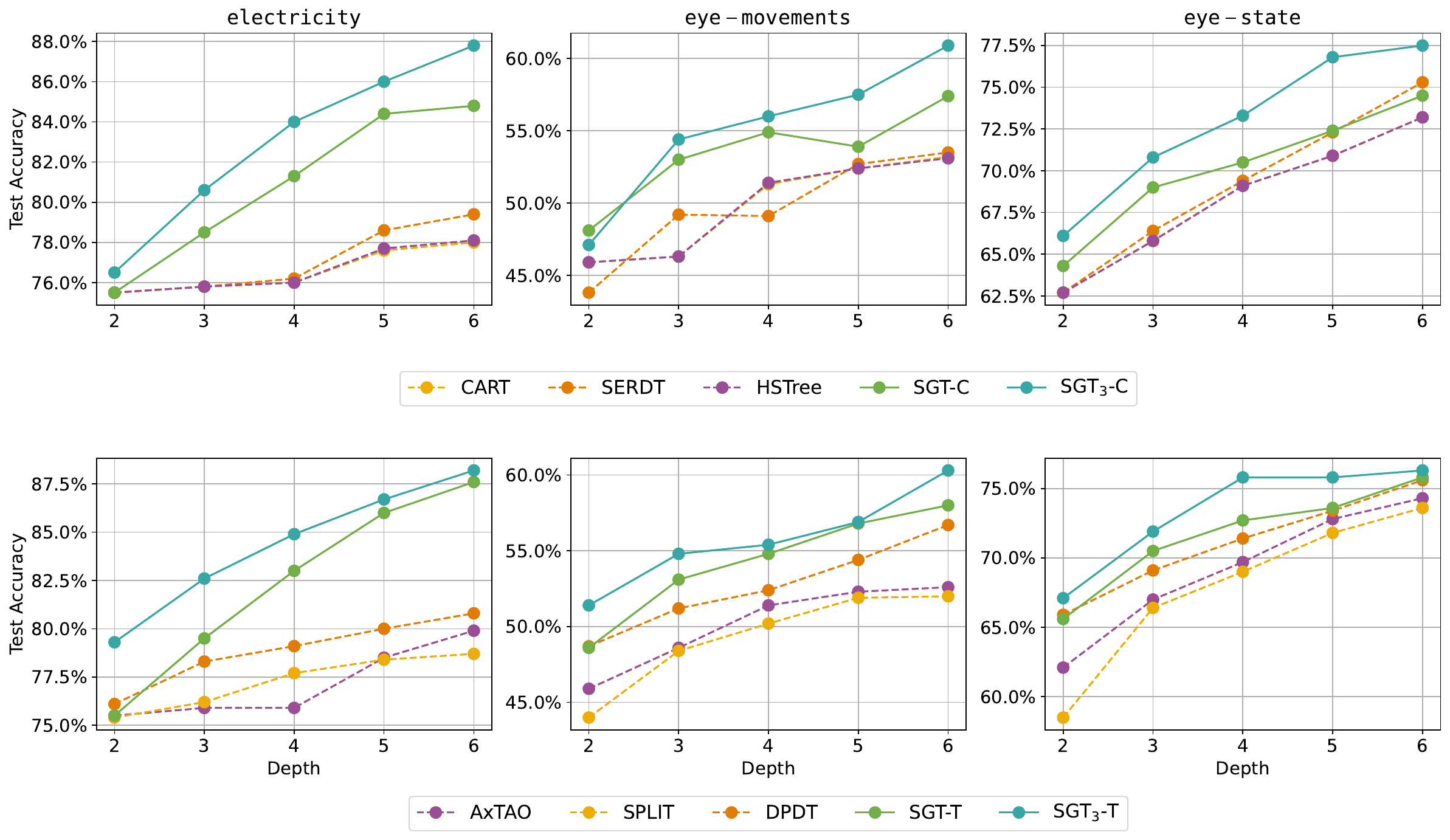}
    \caption{Test accuracy per depth for axis-aligned approaches on \texttt{eye-movements}, \texttt{electricity}, and \texttt{eye-state}. Top: TDIDT-Approaches. Bottom: Non-Greedy Approaches.}
    \label{fig:lineplots}
\end{figure}

\begin{table}[t]
\centering
\caption{Average Test Accuracy (\%) across the datasets tested. Top: TDIDT approaches, Bottom: Non-Greedy Approaches. Best approach per block is \textbf{bolded} and second best is \emph{italicized}.}
\label{tab:avg_results}
\begin{subtable}[t]{0.47\textwidth}
\caption{Axis-Aligned Approaches}
\label{tab:uni_table}
\centering
\resizebox{\textwidth}{!}{%
\begin{tabular}{@{}ccccccc@{}}
\toprule
Model     & 2   & 3   & 4   & 5   & 6   & Best \\ \midrule
CART      & 83.9 & 81.6 & 82.3 & 84.0 & 85.1 & 85.2 \\
SERDT     & 83.7 & 80.9 & 81.7 & 83.7 & 85.1 & 85.1 \\
HSTree    & 83.9 & 81.6 & 82.4 & 84.0 & 85.1 & 85.1 \\
SGT-C     & \emph{84.5} & \emph{82.5} & \emph{83.7} & \emph{84.9} & \emph{86.2} & \emph{86.3} \\
SGT$_3$-C & \textbf{85.7} & \textbf{84.6} & \textbf{85.9} & \textbf{87.3} & \textbf{87.8} & \textbf{87.8} \\ \midrule
AxTAO     & 83.9 & 82.1 & 82.8 & 84.5 & 85.5 & 85.4 \\
DPDT      & 84.9 & 83.4 & 83.8 & 85.2 & 86.2 & 86.2 \\
SPLIT     & 81.7 & 76.6 & 77.9 & 79.5 & 80.2 & 80.3 \\
SGT-T      & \emph{85.0} & \emph{83.5} & \emph{84.6} & \emph{85.9} & \emph{86.8} & \emph{86.8} \\
SGT$_3$-T  & \textbf{86.4} & \textbf{85.1} & \textbf{86.2} & \textbf{87.5} & \textbf{88.0} & \textbf{88.0} \\ \bottomrule
\end{tabular}%
}
\end{subtable}%
\quad
\begin{subtable}[t]{0.47\textwidth}
\caption{Bivariate Approaches}
\label{tab:bi_table}
\centering
\resizebox{\textwidth}{!}{%
\begin{tabular}{@{}ccccccc@{}}
\toprule
Model      & 2   & 3   & 4   & 5   & 6   & Best \\ \midrule
BiCART     & 87.3 & 87.6 & 87.9 & 88.7 & 89.6 & 89.6 \\
S$^2$GT-C  & \emph{89.3} & \emph{89.1} & \emph{89.5} & \emph{90.5} & \emph{91.3} & \emph{91.4} \\
S$^2$GT$_3$-C & \textbf{90.0} & \textbf{90.2} & \textbf{91.1} & \textbf{91.8} & \textbf{91.5} & \textbf{91.7} \\ \midrule
BiTAO      & 87.9 & 88.1 & 88.4 & 89.3 & 90.1 & 90.0 \\
S$^2$GT-T  & \emph{89.7} & \emph{90.0} & \emph{90.2} & \emph{91.2} & \emph{91.7} & \emph{91.6} \\
S$^2$GT$_3$-T  & \textbf{90.2} & \textbf{90.6} & \textbf{91.2} & \textbf{91.7} & \textbf{92.4} & \textbf{91.9} \\ \bottomrule
\end{tabular}%
}
\end{subtable}

\end{table}

\paragraph{Results on Axis-Aligned Trees} Table \ref{tab:uni_table} reports the average test accuracy of axis-aligned approaches across all datasets. We observe that SGT-C consistently outperforms all baseline TDIDT methods at every evaluated depth. Moreover, SGT$_3$-C achieves superior performance both overall and at each depth, highlighting the benefit of higher branching factors.  It is important to note that despite the increase in the number of nodes, ternary trees have the same size of local explanation as binary trees of the same depth \cite{SERDT}. Among non-greedy methods, we observe that the TAO-refined SGTs attain the highest performance across all axis-aligned approaches. At the dataset level (Figure \ref{fig:lineplots}), SGT-C often matches or exceeds the maximum-depth accuracy of other TDIDT methods with substantially shallower trees. A similar, though less pronounced, pattern is observed for SGT-T when compared to deeper non-greedy methods.

\paragraph{Results on Bivariate Trees} In our evaluation of bivariate approaches, we exclude Covertype, Eucalyptus, and Higgs as BiCART and BiTAO require more than the 32GB of available memory on these datasets. When looking at the average performance of the bivariate models on the remaining datasets (Table \ref{tab:bi_table}), we observe that S$^2$GT-C and S$^2$GT-T consistently outperform BiCART and BiTAO, respectively, across all depths as well as overall. Like in the univariate case, we observe that a higher branching factor helps improve expressivity as both S$^2$GT$_3$-C and S$^2$GT$_3$-T outperform S$^2$GT-C and S$^2$GT-T respectively. We also observe that bivariate shape function branching can significantly compress needed tree size in many cases. More specifically, we observe that on the \texttt{eye-movements} and \texttt{electricity} datasets (See Figure \ref{fig:bivariate_lineplots} in Appendix), S$^2$GT-C with a depth of two achieves similar performance to both BiTAO and BiCART with depths of six. 
\begin{figure}[!ht]
    \centering
    \includegraphics[width=0.8\columnwidth]{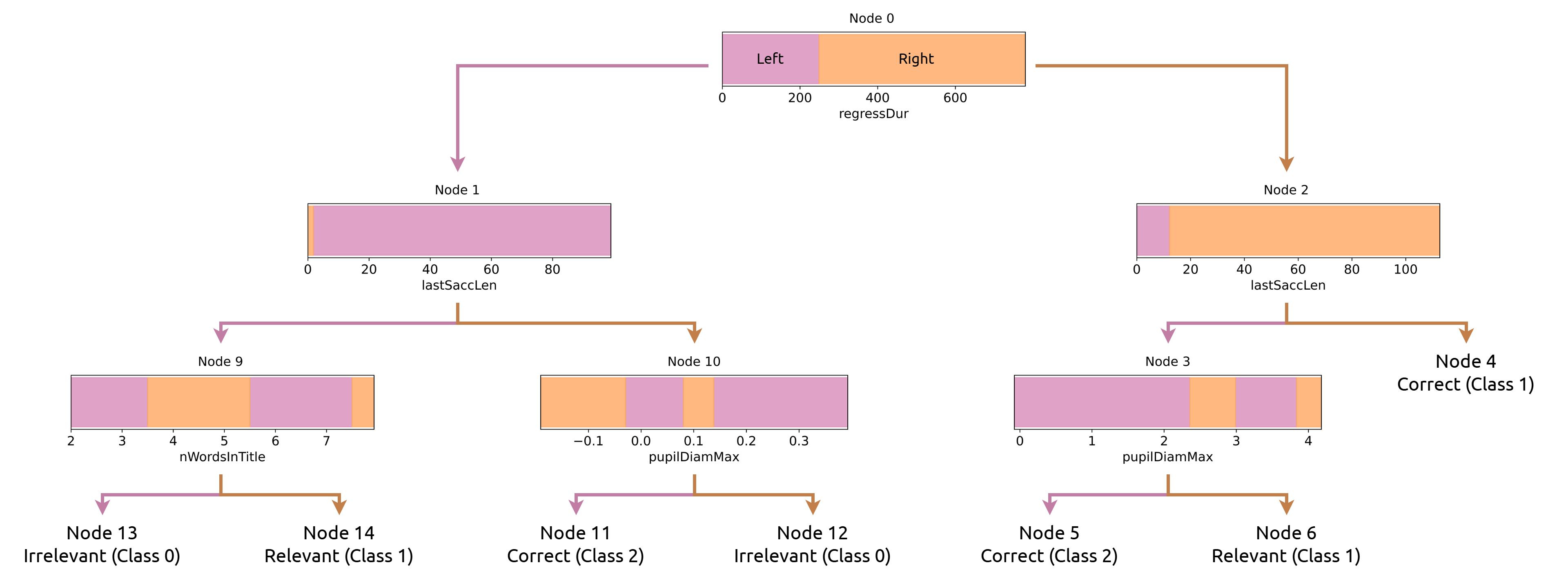}
    \caption{Visualization of a SGT induced via ShapeCART (max depth 3) trained on the \texttt{eye-movements} dataset. Purple indicates regions where samples will be directed to the left child; Orange directed to the right.}
    \label{fig:eyemove}
\end{figure}
\paragraph{Visualization} To illustrate the interpretability of SGTs, we visualize a trained ShapeCART model on the \texttt{eye-movements} dataset \citep{salojarvi2005inferring} in Figure \ref{fig:eyemove}. We observe that multiple nodes benefit from non-linear shape functions. For example, the decisions represented in Nodes 3, 9, and 10 would each require at least three additional splits in a linear axis-aligned tree. Additional examples, including visualized Shape$^2$CART,  ShapeCART$_3$, and Shape$^2$CART$_3$ trees, are provided in Appendix \ref{sec:viz}.

\section{Conclusion}
This paper introduces the Shape Generalized Tree (SGT), a novel generalization of traditional linear decision trees that empowers each node with shape-function branching. Shape Generalized Trees retain the inherent interpretability that makes decision trees a leading approach as the shape functions learned at each node can be easily visualized, allowing users to understand an SGT's internal decision mechanisms. We also introduce ShapeCART, an efficient top-down induction algorithm to construct these SGT models. Additionally, we extend our framework to support bivariate SGTs (S$^2$GT) and multi‑way SGTs (SGT$_K$), and present the corresponding ShapeCART variants (Shape$^2$CART and ShapeCART$_K$). Overall, we show that SGTs outperform linear trees on various datasets across various model sizes. 

\paragraph{Limitations} As tree‐based models, Shape Generalized Trees (SGTs) inherit intrinsic constraints of traditional decision trees, most notably that they are designed primarily for tabular data. Furthermore, interpreting the shape functions at each node can be more challenging than understanding axis‐aligned linear splits, potentially increasing the cognitive effort required for analysis. Nevertheless, SGTs preserve essential interpretability properties, including simulability and modularity, while their enhanced expressiveness enables the construction of smaller trees without compromising predictive performance, thereby improving the sparsity of local explanations. Despite these advantages, further work is needed to systematically evaluate the interpretability of SGTs, and human‐subject studies, similar to those conducted for GAMs \cite{hegselmann2020evaluation}, present a valuable direction for future research.

\paragraph{Broader Impact} SGTs constitute an inherently interpretable class of models, and our promising results inducing them via ShapeCART may inspire other future inherently interpretable approaches, thereby enhancing transparency and accountability in machine learning systems. Exploring alternative shape function representations representations, such as neural networks used in GAMs \cite{agarwal2021neural,radenovic2022neural,upadhya2024neurcam}, and developing clustering SGTs \cite{cohen2023interpretable,frost2020exkmc,pouyaclust} are promising directions for future work. Furthermore, our work opens interesting directions for future work to further theoretically characterize SGTs and ShapeCART, including extending theoretical properties of CART such as consistency rate \cite{zheng2023consistency,klusowski2024large}, the impact of sparsity \cite{klusowski2020sparse}, and tighter empirical risk bounds \cite{klusowski2024large}. 

\begin{ack}
This research was supported by grant number DSI-CGY3R1P18 from the Data Sciences Institute at the University of Toronto. The authors also gratefully acknowledge funding from the Natural Sciences and Engineering Research Council of Canada (NSERC).
\end{ack}

%%%%%%%%%%%%%%%%%%%%%%%%%%%%%%%%%%%%%%%%%%%%%%%%%%%%%%%%%%%%
% \bibliographystyle{neurips_2024}
\bibliographystyle{unsrtnat}
\bibliography{bibfile}

\begin{thebibliography}{53}
\providecommand{\natexlab}[1]{#1}
\providecommand{\url}[1]{\texttt{#1}}
\expandafter\ifx\csname urlstyle\endcsname\relax
  \providecommand{\doi}[1]{doi: #1}\else
  \providecommand{\doi}{doi: \begingroup \urlstyle{rm}\Url}\fi

\bibitem[McElfresh et~al.(2023)McElfresh, Khandagale, Valverde, Prasad~C, Ramakrishnan, Goldblum, and White]{mcelfresh2023neural}
Duncan McElfresh, Sujay Khandagale, Jonathan Valverde, Vishak Prasad~C, Ganesh Ramakrishnan, Micah Goldblum, and Colin White.
\newblock When do neural nets outperform boosted trees on tabular data?
\newblock \emph{Advances in Neural Information Processing Systems}, 36:\penalty0 76336--76369, 2023.

\bibitem[Shwartz-Ziv and Armon(2022)]{shwartz2022tabular}
Ravid Shwartz-Ziv and Amitai Armon.
\newblock Tabular data: Deep learning is not all you need.
\newblock \emph{Information Fusion}, 81:\penalty0 84--90, 2022.

\bibitem[Grinsztajn et~al.(2022)Grinsztajn, Oyallon, and Varoquaux]{grinsztajn2022tree}
L{\'e}o Grinsztajn, Edouard Oyallon, and Ga{\"e}l Varoquaux.
\newblock Why do tree-based models still outperform deep learning on typical tabular data?
\newblock \emph{Advances in neural information processing systems}, 35:\penalty0 507--520, 2022.

\bibitem[Lu{\v{s}}trek et~al.(2016)Lu{\v{s}}trek, Gams, Martin{\v{c}}i{\'c}-Ip{\v{s}}i{\'c}, et~al.]{luvstrek2016makes}
Mitja Lu{\v{s}}trek, Matja{\v{z}} Gams, Sanda Martin{\v{c}}i{\'c}-Ip{\v{s}}i{\'c}, et~al.
\newblock What makes classification trees comprehensible?
\newblock \emph{Expert Systems with Applications}, 62:\penalty0 333--346, 2016.

\bibitem[Stiglic et~al.(2020)Stiglic, Kocbek, Fijacko, Zitnik, Verbert, and Cilar]{stiglic2020interpretability}
Gregor Stiglic, Primoz Kocbek, Nino Fijacko, Marinka Zitnik, Katrien Verbert, and Leona Cilar.
\newblock Interpretability of machine learning-based prediction models in healthcare.
\newblock \emph{Wiley Interdisciplinary Reviews: Data Mining and Knowledge Discovery}, 10\penalty0 (5):\penalty0 e1379, 2020.

\bibitem[Srinivas et~al.(2010)Srinivas, Rao, and Govardhan]{5593711}
K.~Srinivas, G.~Raghavendra Rao, and A.~Govardhan.
\newblock Analysis of coronary heart disease and prediction of heart attack in coal mining regions using data mining techniques.
\newblock In \emph{2010 5th International Conference on Computer Science \& Education}, pages 1344--1349, 2010.
\newblock \doi{10.1109/ICCSE.2010.5593711}.

\bibitem[Zurada(2010)]{zurada2010could}
Jozef Zurada.
\newblock Could decision trees improve the classification accuracy and interpretability of loan granting decisions?
\newblock In \emph{2010 43rd Hawaii International Conference on System Sciences}, pages 1--9. IEEE, 2010.

\bibitem[Tang et~al.(2024)Tang, Tang, and Lu]{TANG2024102088}
Pan Tang, Tiantian Tang, and Chennuo Lu.
\newblock Predicting systemic financial risk with interpretable machine learning.
\newblock \emph{The North American Journal of Economics and Finance}, 71:\penalty0 102088, 2024.
\newblock ISSN 1062-9408.
\newblock \doi{https://doi.org/10.1016/j.najef.2024.102088}.
\newblock URL \url{https://www.sciencedirect.com/science/article/pii/S1062940824000123}.

\bibitem[Sungsu et~al.(2017)Sungsu, Lkhagvadorj, Nasridinov, Yoo, et~al.]{sungsu2017decision}
Choi Sungsu, Battulga Lkhagvadorj, Aziz Nasridinov, Kwan-Hee Yoo, et~al.
\newblock A decision tree approach for identifying defective products in the manufacturing process.
\newblock \emph{International Journal of Contents}, 13\penalty0 (2):\penalty0 57--65, 2017.

\bibitem[Heydarbakian and Spehri(2022)]{HEYDARBAKIAN20222834}
Sadeq Heydarbakian and Mehran Spehri.
\newblock Interpretable machine learning to improve supply chain resilience, an industry 4.0 recipe.
\newblock \emph{IFAC-PapersOnLine}, 55\penalty0 (10):\penalty0 2834--2839, 2022.
\newblock ISSN 2405-8963.
\newblock \doi{https://doi.org/10.1016/j.ifacol.2022.10.160}.
\newblock URL \url{https://www.sciencedirect.com/science/article/pii/S2405896322021747}.
\newblock 10th IFAC Conference on Manufacturing Modelling, Management and Control MIM 2022.

\bibitem[Di~Castro and Bertini(2019)]{di2019surrogate}
Federica Di~Castro and Enrico Bertini.
\newblock Surrogate decision tree visualization.
\newblock In \emph{IUI Workshops}, 2019.

\bibitem[Molnar(2020)]{molnar2020interpretable}
Christoph Molnar.
\newblock \emph{Interpretable machine learning}.
\newblock Lulu. com, 2020.

\bibitem[Souza et~al.(2022)Souza, Cicalese, Laber, and Molinaro]{SERDT}
Victor~Feitosa Souza, Ferdinando Cicalese, Eduardo Laber, and Marco Molinaro.
\newblock Decision trees with short explainable rules.
\newblock In S.~Koyejo, S.~Mohamed, A.~Agarwal, D.~Belgrave, K.~Cho, and A.~Oh, editors, \emph{Advances in Neural Information Processing Systems}, volume~35, pages 12365--12379. Curran Associates, Inc., 2022.
\newblock URL \url{https://proceedings.neurips.cc/paper_files/paper/2022/file/500637d931d4feb99d5cce84af1f53ba-Paper-Conference.pdf}.

\bibitem[Liu and Salvendy(2007)]{liu2007design}
Yan Liu and Gavriel Salvendy.
\newblock Design and evaluation of visualization support to facilitate decision trees classification.
\newblock \emph{International journal of human-computer studies}, 65\penalty0 (2):\penalty0 95--110, 2007.

\bibitem[Hastie(2017)]{hastie2017generalized}
Trevor~J Hastie.
\newblock Generalized additive models.
\newblock \emph{Statistical models in S}, pages 249--307, 2017.

\bibitem[Nori et~al.(2019)Nori, Jenkins, Koch, and Caruana]{nori2019interpretml}
Harsha Nori, Samuel Jenkins, Paul Koch, and Rich Caruana.
\newblock Interpretml: A unified framework for machine learning interpretability.
\newblock \emph{arXiv preprint arXiv:1909.09223}, 2019.

\bibitem[Chang et~al.(2021{\natexlab{a}})Chang, Caruana, and Goldenberg]{chang2021node}
Chun-Hao Chang, Rich Caruana, and Anna Goldenberg.
\newblock Node-gam: Neural generalized additive model for interpretable deep learning.
\newblock \emph{arXiv preprint arXiv:2106.01613}, 2021{\natexlab{a}}.

\bibitem[Agarwal et~al.(2021)Agarwal, Melnick, Frosst, Zhang, Lengerich, Caruana, and Hinton]{agarwal2021neural}
Rishabh Agarwal, Levi Melnick, Nicholas Frosst, Xuezhou Zhang, Ben Lengerich, Rich Caruana, and Geoffrey~E Hinton.
\newblock Neural additive models: Interpretable machine learning with neural nets.
\newblock \emph{Advances in neural information processing systems}, 34:\penalty0 4699--4711, 2021.

\bibitem[Radenovic et~al.(2022)Radenovic, Dubey, and Mahajan]{radenovic2022neural}
Filip Radenovic, Abhimanyu Dubey, and Dhruv Mahajan.
\newblock Neural basis models for interpretability.
\newblock \emph{Advances in Neural Information Processing Systems}, 35:\penalty0 8414--8426, 2022.

\bibitem[Upadhya and Cohen(2024)]{upadhya2024neurcam}
Nakul Upadhya and Eldan Cohen.
\newblock Neurcam: Interpretable neural clustering via additive models.
\newblock In \emph{ECAI}, 2024.

\bibitem[Chang et~al.(2021{\natexlab{b}})Chang, Tan, Lengerich, Goldenberg, and Caruana]{chang2021interpretable}
Chun-Hao Chang, Sarah Tan, Ben Lengerich, Anna Goldenberg, and Rich Caruana.
\newblock How interpretable and trustworthy are gams?
\newblock In \emph{Proceedings of the 27th ACM SIGKDD conference on knowledge discovery \& data mining}, pages 95--105, 2021{\natexlab{b}}.

\bibitem[Lou et~al.(2013)Lou, Caruana, Gehrke, and Hooker]{lou2013accurate}
Yin Lou, Rich Caruana, Johannes Gehrke, and Giles Hooker.
\newblock Accurate intelligible models with pairwise interactions.
\newblock In \emph{Proceedings of the 19th ACM SIGKDD international conference on Knowledge discovery and data mining}, pages 623--631, 2013.

\bibitem[Breiman(2017)]{CART}
Leo Breiman.
\newblock \emph{Classification and regression trees}.
\newblock Routledge, 2017.

\bibitem[Carreira-Perpin{\'a}n and Tavallali(2018)]{TAO}
Miguel~A Carreira-Perpin{\'a}n and Pooya Tavallali.
\newblock Alternating optimization of decision trees, with application to learning sparse oblique trees.
\newblock \emph{Advances in neural information processing systems}, 31, 2018.

\bibitem[Agarwal et~al.(2022)Agarwal, Tan, Ronen, Singh, and Yu]{agarwal2022hierarchical}
Abhineet Agarwal, Yan~Shuo Tan, Omer Ronen, Chandan Singh, and Bin Yu.
\newblock Hierarchical shrinkage: Improving the accuracy and interpretability of tree-based models.
\newblock In \emph{International Conference on Machine Learning}, pages 111--135. PMLR, 2022.

\bibitem[Aglin et~al.(2020)Aglin, Nijssen, and Schaus]{dl85}
Ga{\"e}l Aglin, Siegfried Nijssen, and Pierre Schaus.
\newblock Learning optimal decision trees using caching branch-and-bound search.
\newblock In \emph{Proceedings of the AAAI conference on artificial intelligence}, volume~34, pages 3146--3153, 2020.

\bibitem[Lin et~al.(2020)Lin, Zhong, Hu, Rudin, and Seltzer]{gosdt}
Jimmy Lin, Chudi Zhong, Diane Hu, Cynthia Rudin, and Margo Seltzer.
\newblock Generalized and scalable optimal sparse decision trees.
\newblock In \emph{International conference on machine learning}, pages 6150--6160. PMLR, 2020.

\bibitem[Subramanian and Sun(2023)]{subramanian2023scalable}
Shivaram Subramanian and Wei Sun.
\newblock Scalable optimal multiway-split decision trees with constraints.
\newblock In \emph{Proceedings of the AAAI Conference on Artificial Intelligence}, volume~37, pages 9891--9899, 2023.

\bibitem[Kohler et~al.(2025)Kohler, Akrour, and Preux]{kohler2025breiman}
Hector Kohler, Riad Akrour, and Philippe Preux.
\newblock Breiman meets bellman: Non-greedy decision trees with mdps.
\newblock In \emph{KDD 2025-The 31st ACM SIGKDD Conference on Knowledge Discovery and Data Mining}, 2025.

\bibitem[Babbar et~al.(2025)Babbar, McTavish, Rudin, and Seltzer]{babbar2025near}
Varun Babbar, Hayden McTavish, Cynthia Rudin, and Margo Seltzer.
\newblock Near optimal decision trees in a split second.
\newblock \emph{arXiv preprint arXiv:2502.15988}, 2025.

\bibitem[Kairgeldin and Carreira-Perpi{\~n}{\'a}n(2024)]{kairgeldin2024bivariate}
Rasul Kairgeldin and Miguel~{\'A} Carreira-Perpi{\~n}{\'a}n.
\newblock Bivariate decision trees: Smaller, interpretable, more accurate.
\newblock In \emph{Proceedings of the 30th ACM SIGKDD Conference on Knowledge Discovery and Data Mining}, pages 1336--1347, 2024.

\bibitem[Bollwein and Westphal(2021)]{BiDT}
Ferdinand Bollwein and Stephan Westphal.
\newblock A branch \& bound algorithm to determine optimal bivariate splits for oblique decision tree induction.
\newblock \emph{Applied Intelligence}, 51\penalty0 (10):\penalty0 7552--7572, 2021.

\bibitem[Slack et~al.(2019)Slack, Friedler, Scheidegger, and Roy]{slack2019assessing}
Dylan Slack, Sorelle~A Friedler, Carlos Scheidegger, and Chitradeep~Dutta Roy.
\newblock Assessing the local interpretability of machine learning models.
\newblock \emph{arXiv preprint arXiv:1902.03501}, 2019.

\bibitem[Murdoch et~al.(2019)Murdoch, Singh, Kumbier, Abbasi-Asl, and Yu]{murdoch2019definitions}
W~James Murdoch, Chandan Singh, Karl Kumbier, Reza Abbasi-Asl, and Bin Yu.
\newblock Definitions, methods, and applications in interpretable machine learning.
\newblock \emph{Proceedings of the National Academy of Sciences}, 116\penalty0 (44):\penalty0 22071--22080, 2019.

\bibitem[Doshi-Velez and Kim(2017)]{doshi2017towards}
Finale Doshi-Velez and Been Kim.
\newblock Towards a rigorous science of interpretable machine learning.
\newblock \emph{arXiv preprint arXiv:1702.08608}, 2017.

\bibitem[Bertsimas and Dunn(2017)]{bertsimas2017optimal}
Dimitris Bertsimas and Jack Dunn.
\newblock Optimal classification trees.
\newblock \emph{Machine Learning}, 106:\penalty0 1039--1082, 2017.

\bibitem[Quinlan(1986)]{quinlan1986induction}
J.~Ross Quinlan.
\newblock Induction of decision trees.
\newblock \emph{Machine learning}, 1:\penalty0 81--106, 1986.

\bibitem[Wright(2015)]{wright2015coordinate}
Stephen~J Wright.
\newblock Coordinate descent algorithms.
\newblock \emph{Mathematical programming}, 151\penalty0 (1):\penalty0 3--34, 2015.

\bibitem[Klusowski and Tian(2024)]{klusowski2024large}
Jason~M Klusowski and Peter~M Tian.
\newblock Large scale prediction with decision trees.
\newblock \emph{Journal of the American Statistical Association}, 119\penalty0 (545):\penalty0 525--537, 2024.

\bibitem[Akiba et~al.(2019)Akiba, Sano, Yanase, Ohta, and Koyama]{akiba2019optuna}
Takuya Akiba, Shotaro Sano, Toshihiko Yanase, Takeru Ohta, and Masanori Koyama.
\newblock Optuna: A next-generation hyperparameter optimization framework.
\newblock In \emph{Proceedings of the 25th ACM SIGKDD international conference on knowledge discovery \& data mining}, pages 2623--2631, 2019.

\bibitem[Saloj{\"a}rvi et~al.(2005)Saloj{\"a}rvi, Puolam{\"a}ki, Simola, Kovanen, Kojo, and Kaski]{salojarvi2005inferring}
Jarkko Saloj{\"a}rvi, Kai Puolam{\"a}ki, Jaana Simola, Lauri Kovanen, Ilpo Kojo, and Samuel Kaski.
\newblock Inferring relevance from eye movements: Feature extraction.
\newblock In \emph{Workshop at NIPS}, page~45, 2005.

\bibitem[Hegselmann et~al.(2020)Hegselmann, Volkert, Ohlenburg, Gottschalk, Dugas, and Ertmer]{hegselmann2020evaluation}
Stefan Hegselmann, Thomas Volkert, Hendrik Ohlenburg, Antje Gottschalk, Martin Dugas, and Christian Ertmer.
\newblock An evaluation of the doctor-interpretability of generalized additive models with interactions.
\newblock In \emph{Machine Learning for Healthcare Conference}, pages 46--79. PMLR, 2020.

\bibitem[Cohen(2023)]{cohen2023interpretable}
Eldan Cohen.
\newblock Interpretable clustering via soft clustering trees.
\newblock In \emph{International Conference on Integration of Constraint Programming, Artificial Intelligence, and Operations Research}, pages 281--298. Springer, 2023.

\bibitem[Frost et~al.(2020)Frost, Moshkovitz, and Rashtchian]{frost2020exkmc}
Nave Frost, Michal Moshkovitz, and Cyrus Rashtchian.
\newblock Exkmc: Expanding explainable $ k $-means clustering.
\newblock \emph{arXiv preprint arXiv:2006.02399}, 2020.

\bibitem[Shati et~al.(2025)Shati, Song, Cohen, and McIlraith]{pouyaclust}
Pouya Shati, Yuliang Song, Eldan Cohen, and Sheila McIlraith.
\newblock Optimal decision trees for interpretable and constrained clustering.
\newblock \emph{Journal of Artificial Intelligence Research}, 84, 2025.

\bibitem[Zheng et~al.(2023)Zheng, Lyu, Zhang, Jiang, and Zhou]{zheng2023consistency}
Qin-Cheng Zheng, Shen-Huan Lyu, Shao-Qun Zhang, Yuan Jiang, and Zhi-Hua Zhou.
\newblock On the consistency rate of decision tree learning algorithms.
\newblock In \emph{International Conference on Artificial Intelligence and Statistics}, pages 7824--7848. PMLR, 2023.

\bibitem[Klusowski(2020)]{klusowski2020sparse}
Jason Klusowski.
\newblock Sparse learning with cart.
\newblock \emph{Advances in Neural Information Processing Systems}, 33:\penalty0 11612--11622, 2020.

\bibitem[Hastie et~al.(2009)Hastie, Tibshirani, Friedman, et~al.]{hastie2009elements}
Trevor Hastie, Robert Tibshirani, Jerome Friedman, et~al.
\newblock The elements of statistical learning, 2009.

\bibitem[Pedregosa et~al.(2011)Pedregosa, Varoquaux, Gramfort, Michel, Thirion, Grisel, Blondel, Prettenhofer, Weiss, Dubourg, Vanderplas, Passos, Cournapeau, Brucher, Perrot, and Duchesnay]{scikit-learn}
F.~Pedregosa, G.~Varoquaux, A.~Gramfort, V.~Michel, B.~Thirion, O.~Grisel, M.~Blondel, P.~Prettenhofer, R.~Weiss, V.~Dubourg, J.~Vanderplas, A.~Passos, D.~Cournapeau, M.~Brucher, M.~Perrot, and E.~Duchesnay.
\newblock Scikit-learn: Machine learning in {P}ython.
\newblock \emph{Journal of Machine Learning Research}, 12:\penalty0 2825--2830, 2011.

\bibitem[Sani et~al.(2018)Sani, Lei, and Neagu]{cart_complexity}
Habiba~Muhammad Sani, Ci~Lei, and Daniel Neagu.
\newblock Computational complexity analysis of decision tree algorithms.
\newblock In Max Bramer and Miltos Petridis, editors, \emph{Artificial Intelligence XXXV}, pages 191--197, Cham, 2018. Springer International Publishing.
\newblock ISBN 978-3-030-04191-5.

\bibitem[Raschka(2020)]{raschka2020stat}
Sebastian Raschka.
\newblock Stat 451: Machine learning lecture notes.
\newblock \emph{University of Wisconsin--Madison}, 2020.

\bibitem[Hamerly and Drake(2015)]{kmeans_complexity}
Greg Hamerly and Jonathan Drake.
\newblock Accelerating lloyd’s algorithm for k-means clustering.
\newblock \emph{Partitional clustering algorithms}, pages 41--78, 2015.

\bibitem[Singh et~al.(2021)Singh, Nasseri, Tan, Tang, and Yu]{imodels2021}
Chandan Singh, Keyan Nasseri, Yan~Shuo Tan, Tiffany Tang, and Bin Yu.
\newblock imodels: a python package for fitting interpretable models, 2021.
\newblock URL \url{https://doi.org/10.21105/joss.03192}.

\end{thebibliography}

\section*{NeurIPS Paper Checklist}

\begin{enumerate}

\item {\bf Claims}
    \item[] Question: Do the main claims made in the abstract and introduction accurately reflect the paper's contributions and scope?
    \item[] Answer: \answerYes{} % Replace by \answerYes{}, \answerNo{}, or \answerNA{}.
    \item[] Justification: We describe all main claims in our abstract and provide a list of contributions in our introduction.   %\justificationTODO{dd}

\item {\bf Limitations}
    \item[] Question: Does the paper discuss the limitations of the work performed by the authors?
    \item[] Answer: \answerYes{} % Replace by \answerYes{}, \answerNo{}, or \answerNA{}.
    \item[] Justification: See Conclusion

\item {\bf Theory assumptions and proofs}
    \item[] Question: For each theoretical result, does the paper provide the full set of assumptions and a complete (and correct) proof?
    \item[] Answer: \answerYes{} % Replace by \answerYes{}, \answerNo{}, or \answerNA{}.
    \item[] Justification: See Appendix for time complexity proof. 

    \item {\bf Experimental result reproducibility}
    \item[] Question: Does the paper fully disclose all the information needed to reproduce the main experimental results of the paper to the extent that it affects the main claims and/or conclusions of the paper (regardless of whether the code and data are provided or not)?
    \item[] Answer: \answerYes{} % Replace by \answerYes{}, \answerNo{}, or \answerNA{}.
    \item[] Justification: We provide detailed instructions on our experimental configuration, allowing others to reproduce our experiments if needed. Additionally, we provide detailed pseudocode on how to implement our algorithm. 

\item {\bf Open access to data and code}
    \item[] Question: Does the paper provide open access to the data and code, with sufficient instructions to faithfully reproduce the main experimental results, as described in supplemental material?
    \item[] Answer: \answerYes{} % Replace by \answerYes{}, \answerNo{}, or \answerNA{}.
    \item[] Justification: We utilize open source datasets in our experiments and provide code to replicate our experiments.

\item {\bf Experimental setting/details}
    \item[] Question: Does the paper specify all the training and test details (e.g., data splits, hyperparameters, how they were chosen, type of optimizer, etc.) necessary to understand the results?
    \item[] Answer: \answerYes{} % Replace by \answerYes{}, \answerNo{}, or \answerNA{}.
    \item[] Justification: See Section 5.

\item {\bf Experiment statistical significance}
    \item[] Question: Does the paper report error bars suitably and correctly defined or other appropriate information about the statistical significance of the experiments?
    \item[] Answer: \answerYes{} % Replace by \answerYes{}, \answerNo{}, or \answerNA{}.
    \item[] Justification: We provide standard deviation across folds when applicable in our per-dataset results in the supplementary material.

\item {\bf Experiments compute resources}
    \item[] Question: For each experiment, does the paper provide sufficient information on the computer resources (type of compute workers, memory, time of execution) needed to reproduce the experiments?
    \item[] Answer: \answerYes{} % Replace by \answerYes{}, \answerNo{}, or \answerNA{}.
    \item[] Justification: See Section 5 for compute resources used.

\item {\bf Code of ethics}
    \item[] Question: Does the research conducted in the paper conform, in every respect, with the NeurIPS Code of Ethics \url{https://neurips.cc/public/EthicsGuidelines}?
    \item[] Answer: \answerYes{} % Replace by \answerYes{}, \answerNo{}, or \answerNA{}.
    \item[] Justification: We agree and confirm our work follows the NeurIPS code of ethics.  
\item {\bf Broader impacts}
    \item[] Question: Does the paper discuss both potential positive societal impacts and negative societal impacts of the work performed?
    \item[] Answer: \answerYes{} % Replace by \answerYes{}, \answerNo{}, or \answerNA{}.
    \item[] Justification: The goal of our work is to provide an improved decision tree algorithm that addresses many of the interpretability concerns with binary axis-aligned trees. As discussed in the introduction, increased interpretability is important in high-stakes, sensitive fields, and improving interpretability has a positive societal impact. 
    
\item {\bf Safeguards}
    \item[] Question: Does the paper describe safeguards that have been put in place for responsible release of data or models that have a high risk for misuse (e.g., pretrained language models, image generators, or scraped datasets)?
    \item[] Answer: \answerNA{} % Replace by \answerYes{}, \answerNo{}, o
    \item[] Justification: Our approach has minor risk of misuse and we only consider open source datasets.
    % \item[] Justification: \justificationTODO{}
    % \item[] Guidelines:
    % \begin{itemize}
    %     \item The answer NA means that the paper poses no such risks.
    %     \item Released models that have a high risk for misuse or dual-use should be released with necessary safeguards to allow for controlled use of the model, for example by requiring that users adhere to usage guidelines or restrictions to access the model or implementing safety filters. 
    %     \item Datasets that have been scraped from the Internet could pose safety risks. The authors should describe how they avoided releasing unsafe images.
    %     \item We recognize that providing effective safeguards is challenging, and many papers do not require this, but we encourage authors to take this into account and make a best faith effort.
    % \end{itemize}

\item {\bf Licenses for existing assets}
    \item[] Question: Are the creators or original owners of assets (e.g., code, data, models), used in the paper, properly credited and are the license and terms of use explicitly mentioned and properly respected?
    \item[] Answer: \answerYes{} % Replace by \answerYes{}, \answerNo{}, or \answerNA{}.
    \item[] Justification: Original producers of relevant code are cited.
    
\item {\bf New assets}
    \item[] Question: Are new assets introduced in the paper well documented and is the documentation provided alongside the assets?
    \item[] Answer: \answerNA{} 
    \item[] Justification: No assets introduced% Replace by \answerYes{}, \answerNo{}, or \answerNA{}.
    % \item[] Justification: \justificationTODO{}
    % \item[] Guidelines:
    % \begin{itemize}
    %     \item The answer NA means that the paper does not release new assets.
    %     \item Researchers should communicate the details of the dataset/code/model as part of their submissions via structured templates. This includes details about training, license, limitations, etc. 
    %     \item The paper should discuss whether and how consent was obtained from people whose asset is used.
    %     \item At submission time, remember to anonymize your assets (if applicable). You can either create an anonymized URL or include an anonymized zip file.
    % \end{itemize}

\item {\bf Crowdsourcing and research with human subjects}
    \item[] Question: For crowdsourcing experiments and research with human subjects, does the paper include the full text of instructions given to participants and screenshots, if applicable, as well as details about compensation (if any)? 
    \item[] Answer: \answerNA{} 
    \item[] Justification: No human subjects used.% Replace by \answerYes{}, \answerNo{}, or \answerNA{}.
    % \item[] Justification: \justificationTODO{}
    % \item[] Guidelines:
    % \begin{itemize}
    %     \item The answer NA means that the paper does not involve crowdsourcing nor research with human subjects.
    %     \item Including this information in the supplemental material is fine, but if the main contribution of the paper involves human subjects, then as much detail as possible should be included in the main paper. 
    %     \item According to the NeurIPS Code of Ethics, workers involved in data collection, curation, or other labor should be paid at least the minimum wage in the country of the data collector. 
    % \end{itemize}

\item {\bf Institutional review board (IRB) approvals or equivalent for research with human subjects}
    \item[] Question: Does the paper describe potential risks incurred by study participants, whether such risks were disclosed to the subjects, and whether Institutional Review Board (IRB) approvals (or an equivalent approval/review based on the requirements of your country or institution) were obtained?
    \item[] Answer: \answerNA{}
    \item[] Justification: No study participants. % Replace by \answerYes{}, \answerNo{}, or \answerNA{}.
    % \item[] Justification: \justificationTODO{}
    % \item[] Guidelines:
    % \begin{itemize}
    %     \item The answer NA means that the paper does not involve crowdsourcing nor research with human subjects.
    %     \item Depending on the country in which research is conducted, IRB approval (or equivalent) may be required for any human subjects research. If you obtained IRB approval, you should clearly state this in the paper. 
    %     \item We recognize that the procedures for this may vary significantly between institutions and locations, and we expect authors to adhere to the NeurIPS Code of Ethics and the guidelines for their institution. 
    %     \item For initial submissions, do not include any information that would break anonymity (if applicable), such as the institution conducting the review.
    % \end{itemize}

\item {\bf Declaration of LLM usage}
    \item[] Question: Does the paper describe the usage of LLMs if it is an important, original, or non-standard component of the core methods in this research? Note that if the LLM is used only for writing, editing, or formatting purposes and does not impact the core methodology, scientific rigorousness, or originality of the research, declaration is not required.
    %this research? 
    \item[] Answer: \answerNA{} 
    \item[] Justification: LLMs are not a core part of our work. % Replace by \answerYes{}, \answerNo{}, or \answerNA{}.
    % \item[] Justification: \justificationTODO{}
    % \item[] Guidelines:
    % \begin{itemize}
    %     \item The answer NA means that the core method development in this research does not involve LLMs as any important, original, or non-standard components.
    %     \item Please refer to our LLM policy (\url{https://neurips.cc/Conferences/2025/LLM}) for what should or should not be described.
    % \end{itemize}

\end{enumerate}

\newpage

\appendix
\renewcommand\thefigure{\thesection.\arabic{figure}}    

\section{Interpretability Discussion} \label{sec:tree_interpretability}

Decision trees are widely regarded as modular and simulable models, offering model-based interpretability when their complexity is controlled through constraints on the number of features used \cite{SERDT,murdoch2019definitions} and through structural restrictions such as limiting the total node count and maximum depth \cite{molnar2020interpretable,luvstrek2016makes,SERDT,bertsimas2017optimal}. As a decision tree variant, the SGT inherits this modularity, while the restriction that each internal node operates on only one or two features preserves simulability. In particular, each node’s shape function can be directly visualized and queried to support transparent predictions. For example, consider the SGT trained on the eye-movements dataset (Figure \ref{fig:eyemove}). For a sample with \texttt{regressDur} = 100, \texttt{lastSaccLen} = 0, and \texttt{nWordsInTitle} = 4, the tree yields a local explanation of the form: \say{The model predicted (Relevant, Class 1) because regressDur $< 220$, lastSaccLen $< 5$, and nWordsInTitle $\in [4,5]$.} Moreover, the enhanced expressivity of SGTs enables them to achieve comparable or superior performance with smaller trees, thereby improving sparsity in the resulting explanations and aligning with the desiderata for interpretability proposed by \citet{murdoch2019definitions}. Note that higher-order oblique trees (more than 2 features per node) are not naturally visualizable and are considered less interpretable \cite{kairgeldin2024bivariate}. We similarly excluded SGTs with higher-order shape functions beyond bivariate for the same reason.

One benefit of decision trees that the SGT inherits is that the local explanation length of an SGT is naturally bounded: a user needs to examine at most one piecewise-constant shape function per decision node along a path, bounding the explanation's size to the tree's depth. Consequently, interpretability is preserved only if the maximum depth is sufficiently small. For this reason, and following prior work on compact decision trees \cite{SERDT,babbar2025near,kohler2025breiman}, we cap the depth of our models at six.

\paragraph{SGTs and GAMs:} Despite being inspired by the feature specific transformations of GAMs, SGTs and GAMs are not directly comparable. In GAMs, a separate shape function is learned for each feature (and for each class in multi-class settings, yielding $|\mathcal{C}|$ shape functions per feature). As a result, the size of a local explanation in a GAM is identical to its global explanation, since all shape functions must be considered simultaneously. In contrast, decision trees bound the size of a local explanation by the depth of the tree, and the global explanation by the number of nodes (at most $2^d - 1$ for depth $d$). Due to these reasons, GAMs are not directly comparable to trees and we do not include them in our evaluation.

\section{S$^2$GT$_K$ Formulation} \label{sec:sg2tk}

\begin{definition}[Binary Multi-Way Shape Generalized Tree (S$^2$GT$_K$)]
In an S$^2$GT$_K$, internal nodes partition the data into at most $K$ distinct subsets based on the output of a vector-valued shape function that utilizes at most two features: 
\begin{gather}
    \begin{split}
        \mathcal{D}^{k} = \{(\mathbf{x}_n, y_n) \in \mathcal{D} \mid \argmax_{k \in \{1,\dots,K\}}\left( f^{2,(K)}_\Theta(\mathbf{w}_0^\top\mathbf{x}_n, \mathbf{w}_2^\top\mathbf{x}_n,\Theta) \right) = k; \\
        \mathbf{w}_1,\mathbf{w}_2 \in \{0,1\}^{D}; \|\mathbf{w}_1\|_0, \|\mathbf{w}_2\|_0 = 1 \}
    \end{split}
\end{gather}
Here, $\mathbf{w}_1, \mathbf{w}_2$ are one-hot feature selection vectors where $d_1 = \argmax{\mathbf{w}_1}, d_2 = \argmax{\mathbf{w}_2}$ are the selected features, and $f^{2,(K)}_\Theta : \text{dom}(\mathcal{X}_{d_1}) \times \text{dom}(\mathcal{X}_{d_2}) \rightarrow \mathbb{R}^K$ is a shape function that maps the selected features to a $K$-dimensional vector, and the $\text{argmax}$ operation determines the branch assignment. 
\end{definition}

\section{Proofs} \label{sec:expr_guar}
\subsection{Expressiveness Guarantees}
\begin{manualtheorem}{1}
Every function that can be represented by a binary axis-aligned linear tree (Definition \ref{def:baalc}), can be represented by an SGT (Definition \ref{eqn:sgt_cut}) with the same number of decision nodes.
\end{manualtheorem}
\begin{proof}
    Given a binary axis-aligned linear tree, we construct an SGT with similar structure where the shape function in each decision node takes the form $f_\Theta(\mathbf{w}^\top\mathbf{x}) = \mathbf{w}^\top\mathbf{x}_n - \Theta$ where $\mathbf{w}$ and $\Theta$ match the values in the corresponding decision node in the axis-aligned linear tree.
\end{proof}

% TODO: Copy lemmas/other stuff here. 
\begin{definition}[$\omega$-Bars Labeling Function]
Let $\omega \in \mathbb{N}$ be a frequency parameter. We define the Bars Labeling function $g : [0,1] \to \{0,1\}$ as follows:
\begin{gather}
g(x) = 
\begin{cases}
1 & \text{if }  \cos(2\pi\omega x_1) \leq 0, \\ 0 &  \text{otherwise}.
\end{cases}    
\end{gather}
The $\omega$-Bars Labeling Function partitions the unit interval $[0,1]$ into alternating subintervals according to the sign of $\cos(2\pi \omega x)$. Within $[0,1]$, the cosine function completes $\omega$ full oscillations, producing $\omega+1$ zero-crossings and hence $\omega+2$ disjoint subintervals of equal width. These subintervals alternate between label $1$ and label $0$.  
\end{definition}
We show that an SGT can directly utilize nonlinear shape functions to represent all functions in this family in a fixed number of nodes, while the number of nodes required by an axis-aligned linear tree scales linearly with the number of rectangles utilized by each function.
\begin{lemma} The $\omega$-Bars labelling function can be represented by an SGT with one internal node for any value of $\omega$.
\end{lemma}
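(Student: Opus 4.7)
The plan is to construct an explicit SGT with a single internal node whose shape function, applied to the single input coordinate, has the same sign pattern as $\cos(2\pi\omega x)$ on $[0,1]$. Since the SGT definition permits $f_\Theta$ to come from any rich function class, this reduces to exhibiting a suitable $f_\Theta$ and labeling the two leaves appropriately.

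First, I would set $D=1$ and $\mathbf{w}=1$, so that the root's shape function takes the input $x\in[0,1]$ directly. For the shape function itself I would take the natural choice
\begin{equation}
f_\Theta(x) \;=\; \cos(2\pi\omega x),
\end{equation}
which is a valid univariate mapping $[0,1]\to\mathbb{R}$. Under the SGT branching rule in Equation~\ref{eqn:sgt_cut}, a sample is routed to the left child exactly when $\cos(2\pi\omega x)\leq 0$, i.e.\ precisely on the union of the $\lceil(\omega+1)/2\rceil$ subintervals where $g(x)=1$. I would then assign the left leaf the constant label $1$ and the right leaf the constant label $0$. By construction, the resulting one-internal-node SGT computes $g$ pointwise on $[0,1]$.

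If one prefers to restrict to the piecewise-constant shape functions that ShapeCART actually produces, the same argument goes through verbatim with a slightly different choice of $f_\Theta$: partition $[0,1]$ into the $\omega+2$ equal subintervals determined by the zero-crossings of $\cos(2\pi\omega x)$, and let $f_\Theta$ take value $-1$ on the subintervals where $g=1$ and $+1$ on the remaining ones. This is a legitimate univariate piecewise-constant shape function on a single feature, it uses one internal node in the SGT, and it yields the same leaf assignment as above.

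I do not anticipate a real obstacle here, since the result is essentially a statement about the expressive capacity of the node-level shape function. The only subtle point is making sure the chosen $f_\Theta$ falls within whatever class of shape functions the reader takes as admissible under Definition~\ref{eqn:sgt_cut}; presenting both the analytic ($\cos$) construction and the piecewise-constant construction above should cover either interpretation.
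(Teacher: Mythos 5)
Your proposal is correct and matches the paper's own argument almost exactly: both select the single feature with $\mathbf{w}=[1]$, take $f_\Theta(x)=\cos(2\pi\omega x)$ as the shape function, and label the left/right leaves $1$/$0$ according to the sign condition $f_\Theta(x)\leq 0$. Your additional piecewise-constant variant is a harmless supplement not present in the paper, but the core construction is the same.
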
 
\begin{proof}
    The $\omega$-Bars Labelling function can be represented by an SGT in single node by utilizing a shape function of the form $f_\Theta(z) = \cos(2\pi\Theta z)$, where $\Theta$ is a learnable parameter. Recall that samples are assigned to the left partition if $f_{\Theta}(\mathbf{w}^\top\mathbf{x}_n) \leq 0.0$.   
    As this is a univariate dataset, we select the only feature by setting $\mathbf{w} = [1]$. Additionally, we set $\Theta = \omega$, therefore samples satisfying $f_{\Theta}(x_1) = \cos(2\pi \omega x_1) \leq 0 $ form the left subset, which is assigned to the positive class. The remaining points form the right subset and are assigned to the negative class. The resulting tree matches the definition of the Bars labelling function. 
\end{proof}

\begin{lemma}
    A binary axis-aligned linear tree requires at least $\omega + 1$ internal nodes to represent the Bars labelling function. 
\end{lemma}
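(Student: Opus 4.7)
The plan is to use a simple counting argument: I will show that a binary axis-aligned linear tree in one dimension induces a partition of $[0,1]$ into at most $n+1$ intervals, and any faithful representation of $g$ must produce a partition at least as fine as the $\omega+2$-piece alternating partition induced by $g$.

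First I would reduce the problem to the one-dimensional setting. Since $g$ has domain $[0,1] \subset \mathbb{R}$, every internal node of a binary axis-aligned linear tree reduces to a threshold test of the form $x \leq \theta$ on the single feature $x$. The conjunction of the threshold constraints along any root-to-leaf path therefore isolates a single sub-interval of $[0,1]$, on which the tree outputs a constant label. I would then invoke the elementary fact that a binary tree with $n$ internal nodes has exactly $n+1$ leaves, so the induced partition of $[0,1]$ consists of at most $n+1$ disjoint intervals, and the tree computes a piecewise-constant function with at most $n+1$ pieces.

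Next I would argue that the tree partition must refine the partition induced by $g$. Specifically, if some leaf interval $I$ contained a zero-crossing of $\cos(2\pi\omega x)$ in its interior, then $g$ would take both values $0$ and $1$ on $I$, yet the tree outputs a single constant on $I$; hence the tree would disagree with $g$ on at least one point of $I$. Consequently, every leaf interval must lie inside a single bar of $g$, and the number of leaves is bounded below by the number of bars, i.e., $n+1 \geq \omega+2$. Rearranging yields $n \geq \omega+1$, as required.

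I do not anticipate a substantive obstacle: the argument is essentially a pigeonhole-style leaf-counting bound, and the only place requiring mild care is the refinement step, which follows directly from the alternating-label structure of adjacent bars of $g$. There is no need to analyze specific threshold choices or tree shapes beyond the standard leaf-count identity for binary trees.
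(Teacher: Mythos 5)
Your proposal is correct and follows essentially the same route as the paper's proof: bound the number of leaf intervals by the number of internal nodes plus one, observe that each leaf interval must lie within a single constant-label bar of $g$ (since adjacent bars alternate labels), and conclude $n+1 \geq \omega+2$. Your refinement step is spelled out slightly more carefully than in the paper, but the argument is the same.
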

\begin{proof} 

A binary axis-aligned decision tree partitions the input space into disjoint intervals, each corresponding to a leaf \cite{CART}. Since each subinterval of $[0,1]$ under $g$ is class-constant, and adjacent subintervals differ in label, every subinterval must correspond to a distinct leaf. Thus, a tree that represents $g$ requires at least $\omega+2$ leaves. In a binary tree, the number of internal nodes is always one less than the number of leaves. Therefore, the exact representation of the $\omega$-Bars Labeling Function requires at least $\omega+1$ internal nodes.
\end{proof}

\begin{manualtheorem}{2}%[SGTs can be strictly more expressive than Linear Trees]
\label{thm:expressive2appendix}
For every $B \in \mathbb{N}$, there exists a function for which a binary axis-aligned tree requires at least $B$ additional decision nodes to represent, compared to an SGT.
\end{manualtheorem}

\begin{proof}
    For a given $\omega \in \mathbb{N}$, the binary axis-aligned linear tree requires $\omega + 1$ decision nodes while the SGT requires only one to perfectly represent the $\omega$-Bars labelling function. To obtain a function in which the binary axis-aligned linear tree requires at least $B$ additional decision nodes to represent, we can simply set $\omega = B$. 
\end{proof}

\subsection{Information Gain Guarantees} \label{sec:info_gain_proof}
\begin{manuallemma}{1}
 Let $t$ be an individual node in $T$, let $\mathcal{IG}(t)$ denote the information gain obtained by an axis-aligned (threshold) split on the samples in $t$ from CART, and let $\mathcal{IG}_f(t)$ denote the information gain obtained by a shape function split from ShapeCART on the samples in $t$. We show that $\mathcal{IG}_f(t)\geq\mathcal{IG}(t)$
    
\end{manuallemma}

To construct the shape function $f_j$ for each feature $j$, we begin by fitting a univariate CART tree that partitions the data along $x_j$. The root node of this tree provides the best axis-aligned split, with associated information gain $\mathcal{IG}(t,j)$. For the initial bin-to-branch assignment, we consider two alternatives: (1) the left/right partition induced by the root split, and (2) a clustering-based assignment obtained via Weighted K-Means. By selecting the better of these two, we guarantee that the initialization of coordinate descent achieves at least $\mathcal{IG}(t,j)$. Since coordinate descent monotonically improves the objective, the resulting shape function satisfies $\mathcal{IG}_f(f_j, t) \geq \mathcal{IG}(t,j), \quad \forall j.$  
Because ShapeCART employs a line search to select the feature shape function, it follows that  
$$
\mathcal{IG}_f(t) \;=\; \max_j \mathcal{IG}_f(f_j, t) \;\geq\; \max_j \mathcal{IG}(t,j) \;=\; \mathcal{IG}(t).
$$
\section{Time Complexity Analysis} \label{sec:time_complexity}

In this section, we provide the time-complexity of splitting a node using a shape function with respect to the number of samples $N$, number of classes $C$, branching factor $K$, and number of bins  (internal tree leaves) $L$. 

\paragraph{Internal Tree Construction}
To assign samples to $L$ bins, an internal tree is constructed using the CART algorithm \cite{CART}, trained on a single feature and the target variable. We can break down the analysis of tree construction into four components: sorting the samples, tabulating class counts, calculating impurity, and the recursive relation. 
\begin{enumerate}
    \item \textbf{Sorting}: Sorting the samples takes $\mathcal{O}(N \log N)$ time. This only needs to be done once for the whole tree as we operate only on one feature and samples retain their relative ordering when assigned to children nodes. \cite{hastie2009elements,scikit-learn,cart_complexity}. 
    \item \textbf{Class Count Tabulation} Once the samples are sorted, CART calculates a cumulative class count at each split point. This involves a scan of all possible splits ($N$ at most). 
    \item \textbf{Impurity Calculation:} Once class counts are tabulated, we need to calculate the impurity for the class counts of each candidate split. Calculating the impurity of a single distribution scales linearly with $C$, which we repeat $N$ times, resulting in a complexity of $\mathcal{O}(NC)$. 
    \item \textbf{Recursive Relation}: To simplify our runtime analysis, we assume the induced decision tree remains approximately balanced (similar to \cite{raschka2020stat}), such that its maximum depth is on the order of $\log L$, where $L$ denotes the number of leaves. Once a node is split, the cumulative class tabulations (Step 2) and impurity evaluations (Step 3) must be recomputed for the resulting child nodes. However, the total number of samples processed across all nodes at any given depth remains constant at $N$, since each split simply redistributes the same data among child nodes. Recall that Steps 2 and 3 have per-node complexities of $\mathcal{O}(N)$ and $\mathcal{O}(NC)$, respectively. Consequently, the total cost of class tabulation and impurity computation across all nodes at a given depth remains $\mathcal{O}(N + NC) \approx \mathcal{O}(NC)$. Aggregating over all levels of the tree yields a total complexity of $\mathcal{O}(NC\log L) $, which we can further simplify to $ \mathcal{O}(NC\log N)$ as $L \leq N$ in all cases.  
\end{enumerate}
As result, the total complexity of fitting CART on a single feature is $\mathcal{O}(N \log N + NC\log N) \approx \mathcal{O}(NC\log N)$.

\paragraph{Bin-to-Branch Optimization} Mapping bins (tree leaves) to branches involves running Weighted K-Means on the empirical distributions of the $L$ bins to discover an initial assignment solution, then running coordinate descent to optimize the the assignments to minimize the weighted impurity of the resultant branches. 

\begin{itemize}
    \item For a given branching factor $k$, we run Weighted K-Means on the empirical class distributions of the $L$ leaves/bins to group the bins into $k$ groups. This is equivalent to running Weighted K-Means on $L$, $C$-dimensional samples, which has a time complexity of $\mathcal{O}(kLTC)$ where $T$ is the number of iterations the algorithm runs for \cite{scikit-learn,kmeans_complexity}. 
    \item Our coordinate descent procedure optimizes the partition assignment for each bin independently, iterating over all leaves $R$ times and evaluates the impurity for $k-1$ values for each leaf assignment. The cost of testing a new branch assignment for a bin does not scale with the number of branches as we only need to recalculate the impurity and weights of the current and new branch while the impurity of the other branches are frozen and can be treated as one unit. Therefore, the number of impurity evaluations in our coordinate descent is $\approx kLR$. The cost of impurity evaluation scales linearly with the number of classes ($\mathcal{O}(C)$) for both Gini impurity and entropy, therefore the time complexity of our coordinate descent procedure is $\mathcal{O}(kLRC)$.
\end{itemize}
As we repeat K-Means and coordinate descent for $k = 2,\ldots,K$, the time complexity of the bin-to-branch optimization is $\mathcal{O}(K^2LTC + K^2LRC)$. As $T$ and $R$ are algorithmic constants, we remove them from consideration for simplicity resulting in a time complexity of $\mathcal{O}(K^2LC)$. 

The total time complexity of fitting a univariate shape function can then be obtained by adding up the two steps together, resulting in a complexity of $\mathcal{O}(NC \log N+ K^2LC)$. As we construct shape functions for each of the $D$ features, the total time complexity of fitting an internal node is: 
\begin{gather}
    \mathcal{O}\left(D(NC \log N + K^2LC)\right)
\end{gather}

\subsection{Shape$^2$CART}  
For Shape$^2$CART and its variants, we utilize BiCART to construct the internal tree for each pair of features. This algorithm creates $H$ augmented features for each pair of features, then constructs a tree using this augmented feature and passes them into CART \cite{kairgeldin2024bivariate}. We can analyze the cost of internal tree construction using BiCART as a CART tree with $H$ features. The runtime of CART scales linearly with the number of features \cite{cart_complexity,scikit-learn,hastie2009elements,raschka2020stat} as each step (sorting, tabulating class counts, and evaluating impurity) needs to be done once for each feature. As such, the complexity of fitting bicart is simply $\mathcal{O}(HN C\log N)$, which we can further simplify back to $\mathcal{O}(NC \log N)$ as we only consider small, fixed values of $H$. In the naive approach of constructing shape functions for each pair of features, the time-complexity of splitting a node is $\mathcal{O}\left(D^2(NC \log N + K^2LC)\right)$.

When using our pairwise selection heuristic, the complexity can be broken down into the time to score each pair of features and the time to construct shape functions of the top $P$ pairs. To evaluate pairs of features, we take the Cartesian product of the branching decisions provided from fitting the univariate shape functions, resulting in $K^2$ sets. We then calculate the empirical class distribution of each partition, which can be done efficiently in a single scan of the data. We then evaluate the impurity of each of the $K^2$ partitions' empirical class distributions, resulting in a complexity of $\mathcal{O}(N+CK^2)$ for each pair of features, and $\mathcal{O}((N+CK^2)D^2)$ when aggregated across all pairs. We then take the top $P$ pairs of features and fit bivariate shape functions for these terms, resulting in a time complexity of:
\begin{gather*}
    \mathcal{O}\left(
    \underbrace{D(NC \log N + K^2LC)}_{\text{Univariate Shape Function Construction}}+
    \underbrace{(N+CK^2)D^2}_{\text{Pair Tests}} + \underbrace{P\left(NC \log N +   K^2LC\right)}_{\text{Bivariate Shape Function Construction}}\right) \\
    = \mathcal{O}\left((D+P)(NC \log N+ K^2LC) + (N+CK^2)D^2\right)
\end{gather*}
Empirical evaluation of the runtime benefits of limiting pairs is provided in Section \ref{sec:heuristic_ablate}. 

\section{Dataset Information} \label{sec:dataset_info}

\begin{longtable}[c]{@{}ccccccc@{}}
\caption{Dataset Information. Dimensionality refers to the number of columns after one-hot encoding. }
\label{tab:dataset_info}\\
\toprule
\textbf{Dataset} & \textbf{$N$} & \textbf{|$\mathcal{C}$|} & \textbf{Dimensionality} & \textbf{$D$} & \textbf{\# Cat.} & \textbf{\# Num./Binary} \\* \midrule
\endfirsthead
\multicolumn{7}{c}%
{{\bfseries Table \thetable\ continued from previous page}} \\
\toprule
\textbf{Dataset} & \textbf{$N$} & \textbf{|$\mathcal{C}$|} & \textbf{Dimensionality} & \textbf{$D$} & \textbf{\# Cat.} & \textbf{\# Num./Binary} \\* \midrule
\endhead
\bottomrule
\endfoot
\endlastfoot
adult         & 45222  & 2  & 108  & 13  & 11 & 2   \\
avila         & 20867  & 12 & 10   & 10  & 0  & 10  \\
bank          & 1372   & 2  & 4    & 4   & 0  & 4   \\
bean          & 13611  & 7  & 16   & 16  & 0  & 16  \\
bidding       & 6321   & 2  & 9    & 9   & 0  & 9   \\
covtype       & 581012 & 7  & 52   & 12  & 2  & 10  \\
electricity   & 45312  & 2  & 13   & 8   & 1  & 7   \\
eucalyptus    & 736    & 5  & 1487 & 19  & 14 & 5   \\
eye-movements & 10936  & 3  & 27   & 27  & 0  & 27  \\
eye-state     & 14980  & 2  & 14   & 14  & 0  & 14  \\
fault         & 1941   & 7  & 27   & 27  & 0  & 27  \\
gas-drift     & 13910  & 6  & 128  & 128 & 0  & 128 \\
higgs         & 11,000,000 & 2& 29 & 29  & 0  & 29   \\
htru          & 17898  & 2  & 8    & 8   & 0  & 8   \\
magic         & 13376  & 2  & 10   & 10  & 0  & 10  \\
mini-boone    & 130064 & 2  & 50   & 50  & 0  & 50  \\
mushroom      & 8124   & 2  & 94   & 21  & 16 & 5   \\
occupancy     & 20560  & 2  & 5    & 5   & 0  & 5   \\
page          & 5473   & 5  & 10   & 10  & 0  & 10  \\
pendigits     & 10992  & 10 & 16   & 16  & 0  & 16  \\
raisin        & 900    & 2  & 7    & 7   & 0  & 7   \\
rice          & 3810   & 2  & 7    & 7   & 0  & 7   \\
room          & 10129  & 4  & 16   & 16  & 0  & 16  \\
segment       & 2310   & 7  & 18   & 18  & 0  & 18  \\
skin          & 245057 & 2  & 3    & 3   & 0  & 3   \\
wilt          & 4839   & 2  & 5    & 5   & 0  & 5   \\* \bottomrule
\end{longtable}

\section{Model Implementation Details} \label{sec:model_implementation}
All models are implemented in Python. For CART, we utilize the implementation found in the Scikit-Learn \cite{scikit-learn}. For Axis-Aligned TAO and HSTree, we utilize the implementation found in the imodels \cite{imodels2021} package. For SERDT, we utilize the implementation provided by the original authors \cite{SERDT} found at \url{https://github.com/user-anonymous-researcher/interpretable-dts}. For DPDT, we utilize the official implementation provided by the original authors \cite{kohler2025breiman} found at \url{https://github.com/KohlerHECTOR/DPDTreeEstimator}. For SPLIT, we utilize the official implementation provided by the original authors found at \url{https://github.com/VarunBabbar/SPLIT-ICML}. As noted in Section 2.1, some optimal methods, including GOSDT \cite{gosdt} and SPLIT \cite{babbar2025near} (which builds on GOSDT), require pre-binarization of continuous features. Similar to the setting from the original work \cite{babbar2025near}, we set \texttt{binarize=True} when running SPLIT, which applies Threshold Guessing to produce a sparse set of binary features. While full binarization over all thresholds may improve performance, it is orders of magnitude more computationally expensive \cite{babbar2025near}.

\paragraph{Categorical Variables} Since all baseline methods require numerical inputs, we one‑hot encode categorical variables when relevant. Unlike the baselines that are limited to split on one (axis-aligned) or two (bivariate) levels at a time, we note that ShapeCART and its variants support superset branching on categorical variables owing to the flexibility of their shape functions. This is achieved by passing all relevant columns of the one-hot encoded categorical variable into the internal tree.

\paragraph{BiCART and BiTAO} As there are no available pre-existing implementations of the BiCART and BiTAO models, we do our best to replicate these approaches following the methods outlined by \citet{kairgeldin2024bivariate} in their original paper. For each pair of features $(d_1,d_2)$, BiCART and BiTAO create $H$ pre-computed linear combination features. They do this by creating a small, fixed subset of line orientation $\mathbf{W} \in \mathbb{R}^{2\times H}$, sampled uniformly in two directions by rotating it around the origin $H$ times, within a range of 0-180 degrees. The augmented feature set for $d_1,d_2$ is then computed as $\mathbf{X}^{\text{aug}}_{(d_1,d_2)} = \mathbf{X}_{(d_1,d_2)}\mathbf{W}$ where $\mathbf{X}_{(d_1,d_2)} \in \mathbb{R}^{N \times 2}$ is the matrix containing the relevant pair of features. In BiCART, we utilize this augmented feature along with the original features to induce a tree via the CART algorithm \cite{kairgeldin2024bivariate}. For BiTAO, we utilize TAO to refine a bivariate tree constructed by BiCART, finding the best linear split for nodes in a reverse depth order. For pseudocode of this procedure, we refer reads to \citet{kairgeldin2024bivariate} Figure 3.

\paragraph{ShapeTAO} To implement ShapeTAO (and Shape$^2$TAO), we train a ShapeCART (and Shape$^2$CART) tree and refine it using TAO. Unlike Axis-Aligned TAO and BiTAO, our \texttt{FindBestSplit} function trains a decision tree via CART (BiCART for Shape$^2$TAO) on the \say{care} set of samples when refining each node. More formally, we follow the pseudocode provided by \citet{kairgeldin2024bivariate} Figure 3 and modify the solving procedure of the reduced problem (\citet{kairgeldin2024bivariate} Equation 3) from considering linear splits, to instead training a decision tree via CART/BiCART. For Shape$^2$TAO, we only consider pairs of features that were considered in the node under refinement during the initial induction process. To extend TAO to multi-way branching, we make two key changes: 
\begin{enumerate}
    \item In the standard TAO procedure, each sample is assigned to a single \say{correct} branch, which is then used as its pseudolabel. In the multi-way setting, however, a sample may correspond to multiple valid branches. To accommodate this, we modify the pseudolabeling procedure by duplicating each sample across its valid branches and passing the new upsampled dataset to CART. 
    \item To evaluate whether an updated decision function improves upon the previous one, we adapt the accuracy metric to reflect the multi-way setting. Specifically, a prediction is deemed correct if the decision function assigns the sample to any branch that is among the set of valid branches for that sample.
\end{enumerate}

Source code for for ShapeCART, ShapeTAO, and its variants can be found at \href{https://github.com/optimal-uoft/Empowering-DTs-via-Shape-Functions}{https://github.com/optimal-uoft/Empowering-DTs-via-Shape-Functions}.

\subsection{Hyperparameter Information}\label{sec:hp}
For all models, we vary the maximum depth between 2-6. Below are the other parameters we tune for each model.  Note that when models share hyperparameters, we utilize the same values across all modesl.
\begin{itemize}
    \item \textbf{CART}: \texttt{criterion}: \{gini, entropy\}, \texttt{min\_samples\_split}: \{2,4,8,16,32\}, \texttt{min\_samples\_leaf}: [1,32], \texttt{min\_impurity\_decrease}: \{0.0, 1e-4, 5e-4, 1e-3, 5e-3, 0.01\}, \texttt{ccp\_alpha}: \{0.0, 1e-4, 5e-4, 1e-3, 5e-3, 0.01\}
    \item \textbf{SERDT}: \texttt{min\_samples\_stop}: \{2,4,8,16,32\}, \texttt{gini\_factor}: \{0.9, 0.91, $\ldots$, 0.99\}, \texttt{gamma\_factor}: \{0.5, 0.65, 0.8\}
    \item \textbf{HSTree}: \texttt{criterion}: \{gini, entropy\}, \texttt{min\_samples\_split}: \{2,4,8,16,32\}, \texttt{min\_samples\_leaf}: [1,32], \texttt{min\_impurity\_decrease}: \{0.0, 1e-4, 5e-4, 1e-3, 5e-3, 0.01\}, \texttt{ccp\_alpha}: \{0.0, 1e-4, 5e-4, 1e-3, 5e-3, 0.01\}, \texttt{reg\_param} (different from TAO): \{1e-2, 5e-2, 1e-1, 5e-1, 1.0, 5.0, 10, 50\}
    \item \textbf{AxTAO}: \texttt{criterion}: \{gini, entropy\}, \texttt{min\_samples\_split}: \{2,4,8,16,32\}, \texttt{min\_samples\_leaf}: [1,32], \texttt{min\_impurity\_decrease}: \{0.0, 1e-4, 5e-4, 1e-3, 5e-3, 0.01\}, \texttt{reg\_param/lambda\_}: \{0.0, 1e-4, 5e-4, 1e-3, 5e-3, 0.01\}
    \item \textbf{DPDT}: \texttt{criterion}: \{gini, entropy\}, \texttt{min\_samples\_split}: \{2,4,8,16,32\}, \texttt{min\_samples\_leaf}: [1,32], \texttt{min\_impurity\_decrease}: \{0.0, 1e-4, 5e-4, 1e-3, 5e-3, 0.01\}, \texttt{cart\_nodes\_list}: \{(8,3,), (32,), (4,3,), (16,6,), (16,4,), (4,4), (3,3), (6,6)\}
    \item \textbf{SPLIT}: \texttt{reg}: \{1e-3, 1e-2, .1\}, \texttt{lookahead\_depth}: \{2,$\ldots$, maximum depth of tree\}, \texttt{gbdt\_n\_est}: \{10, 20, \ldots, 100\}
    \item \textbf{BiCART}: \texttt{criterion}: \{gini, entropy\}, \texttt{min\_samples\_split}: \{2,4,8,16,32\}, \texttt{min\_samples\_leaf}: [1,32], \texttt{min\_impurity\_decrease}: \{0.0, 1e-4, 5e-4, 1e-3, 5e-3, 0.01\}, \texttt{ccp\_alpha}: \{0.0, 1e-4, 5e-4, 1e-3, 5e-3, 0.01\}, $H$: \{5,6,7,8\}
    \item \textbf{BiTAO}: \texttt{criterion}: \{gini, entropy\}, \texttt{min\_samples\_split}: \{2,4,8,16,32\}, \texttt{min\_samples\_leaf}: [1,32], \texttt{min\_impurity\_decrease}: \{0.0, 1e-4, 5e-4, 1e-3, 5e-3, 0.01\}, \texttt{reg\_param/lambda\_}: \{0.0, 1e-4, 5e-4, 1e-3, 5e-3, 0.01\}, $H$: \{5,6,7,8\}
    \item \textbf{ShapeCART/ ShapeCART$_3$}: \texttt{criterion}: \{gini, entropy\}, \texttt{min\_samples\_split}: \{2,4,8,16,32\}, \texttt{min\_samples\_leaf}: [1,32], \texttt{min\_impurity\_decrease}: \{0.0, 1e-4, 5e-4, 1e-3, 5e-3, 0.01\}, \texttt{inner\_max\_leaf\_nodes} (aka $L$): \{4,8,12,\ldots,64\}, \texttt{inner\_min\_samples\_leaf}: \{1, 1e-4, 5e-4, 1e-3, 5e-3, 1e-2\}, \texttt{branching\_penalty} (aka $\lambda$): \{0.0, 1e-4, 5e-4, 1e-3, 5e-3, 0.01\}
    \item \textbf{ShapeTAO / ShapeTAO$_3$}: \texttt{criterion}: \{gini, entropy\}, \texttt{min\_samples\_split}: \{2,4,8,16,32\}, \texttt{min\_samples\_leaf}: [1,32],\texttt{min\_impurity\_decrease}: \{0.0, 1e-4, 5e-4, 1e-3, 5e-3, 0.01\}, \texttt{reg\_param/lambda\_}: \{0.0, 1e-4, 5e-4, 1e-3, 5e-3, 0.01\}, \texttt{inner\_max\_leaf\_nodes} (aka $L$): \{4,8,12,\ldots,64\}, \texttt{inner\_min\_samples\_leaf}: \{1, 1e-4, 5e-4, 1e-3, 5e-3, 1e-2\}
    \item \textbf{Shape$^2$CART / Shape$^2$CART$_3$}: \texttt{criterion}: \{gini, entropy\}, \texttt{min\_samples\_split}: \{2,4,8,16,32\}, \texttt{min\_samples\_leaf}: [1,32], \texttt{min\_impurity\_decrease}: \{0.0, 1e-4, 5e-4, 1e-3, 5e-3, 0.01\}, \texttt{inner\_max\_leaf\_nodes} (aka $L$): \{4,8,12,\ldots,64\}, \texttt{inner\_min\_samples\_leaf}: \{1, 1e-4, 5e-4, 1e-3, 5e-3, 1e-2\}, \texttt{branching\_penalty} (aka $\lambda$), $H$: \{5,6,7,8\}
    \item \textbf{Shape$^2$TAO / Shape$^2$TAO$_3$}: \texttt{criterion}: \{gini, entropy\}, \texttt{min\_samples\_split}: \{2,4,8,16,32\}, \texttt{min\_samples\_leaf}: [1,32],\texttt{min\_impurity\_decrease}: \{0.0, 1e-4, 5e-4, 1e-3, 5e-3, 0.01\}, \texttt{reg\_param/lambda\_}: \{0.0, 1e-4, 5e-4, 1e-3, 5e-3, 0.01\}, \texttt{inner\_max\_leaf\_nodes} (aka $L$): \{4,8,12,\ldots,64\}, \texttt{inner\_min\_samples\_leaf}: \{1, 1e-4, 5e-4, 1e-3, 5e-3, 1e-2\}, $H$: \{5,6,7,8\}
\end{itemize}
    % reg = trial.suggest_categorical('reg', [1e-3, 1e-2, .1])
    % max_lookahead = min(4, args.depth)
    % lookahead_depth = trial.suggest_int('lookahead_depth', 2, max_lookahead, step=1)
    % gbdt_n_est = trial.suggest_int('gbdt_n_est', 10, 100, step=10)
    %  = trial.suggest_int('inner_max_leaf_nodes', 4, 64, step = 4)
    % inner_min_samples_leaf = trial.suggest_categorical('inner_min_samples_leaf', [1, 1e-4, 5e-4, 1e-3, 5e-3, 1e-2])

Note on $H$: The choice of H was predicated on ensuring computational tractability. While prior work by \cite{kairgeldin2024bivariate} examined H values from 30 to 90, our evaluation revealed that these higher values imposed unmanageable memory burdens, preventing successful execution on any dataset. Accordingly, a more computationally conservative value for H was selected. Despite this restriction, three datasets (Eucalyptus, Higgs, and Covertype) still were not able to run on BiCART and BiTAO given our 32GB RAM limit.

\subsubsection{Hyperparameter Importance}\label{sec:hp_import}
In this section, we analyze the important hyperparameters for our approaches. Following [2] and [3], we normalize the test accuracy within each model and dataset and train a Random Forest regressor to predict test Z-scores from the model hyperparameters and provide the feature importance in Table \ref{tab:hp_import}
\begin{table}[!ht]
\centering
\caption{Hyperparameter Importance of ShapeCART and its variants. }
\label{tab:hp_import}
\resizebox{\columnwidth}{!}{%
\begin{tabular}{@{}l|ll|ll|ll|ll@{}}
\toprule
 & ShapeCART & ShapeCART$_3$ & Shape$^2$CART & Shape$^2$CART$_3$ & ShapeTAO & ShapeTAO$_3$ & Shape$^2$TAO & Shape$^2$TAO$_3$ \\ \midrule
max\_depth                & 0.35  & 0.24  & 0.179 & 0.095 & 0.359 & 0.221 & 0.142 & 0.085 \\
criterion                 & 0.038 & 0.035 & 0.033 & 0.034 & 0.035 & 0.025 & 0.038 & 0.035 \\
min\_samples\_split       & 0.079 & 0.082 & 0.08  & 0.081 & 0.075 & 0.075 & 0.078 & 0.069 \\
min\_samples\_leaf        & 0.182 & 0.178 & 0.188 & 0.189 & 0.154 & 0.179 & 0.17  & 0.159 \\
min\_impurity\_decrease   & 0.094 & 0.08  & 0.094 & 0.08  & 0.074 & 0.079 & 0.082 & 0.078 \\
inner\_max\_leaf\_nodes   & 0.169 & 0.192 & 0.167 & 0.19  & 0.137 & 0.172 & 0.154 & 0.195 \\
inner\_min\_samples\_leaf & 0.088 & 0.098 & 0.094 & 0.088 & 0.082 & 0.084 & 0.096 & 0.089 \\
branching\_penalty        & ---   & 0.095 & ---   & 0.088 & ---   & 0.079 & ---   & 0.082 \\
H                         & ---   & ---   & 0.066 & 0.062 & ---   & ---   & 0.069 & 0.054 \\
pairwise\_penalty         & ---   & ---   & 0.098 & 0.094 & ---   & ---   & 0.088 & 0.084 \\ \bottomrule
\end{tabular}%
}
\end{table}
\newpage
\section{Extra results} \label{sec:all_results}
In this section, we provide results for all approaches across all datasets as well as results from more empirical evaluations. 
\subsection{Full Results} 
\begin{figure}[!htp]
    \centering
    \includegraphics[width=1.0\linewidth]{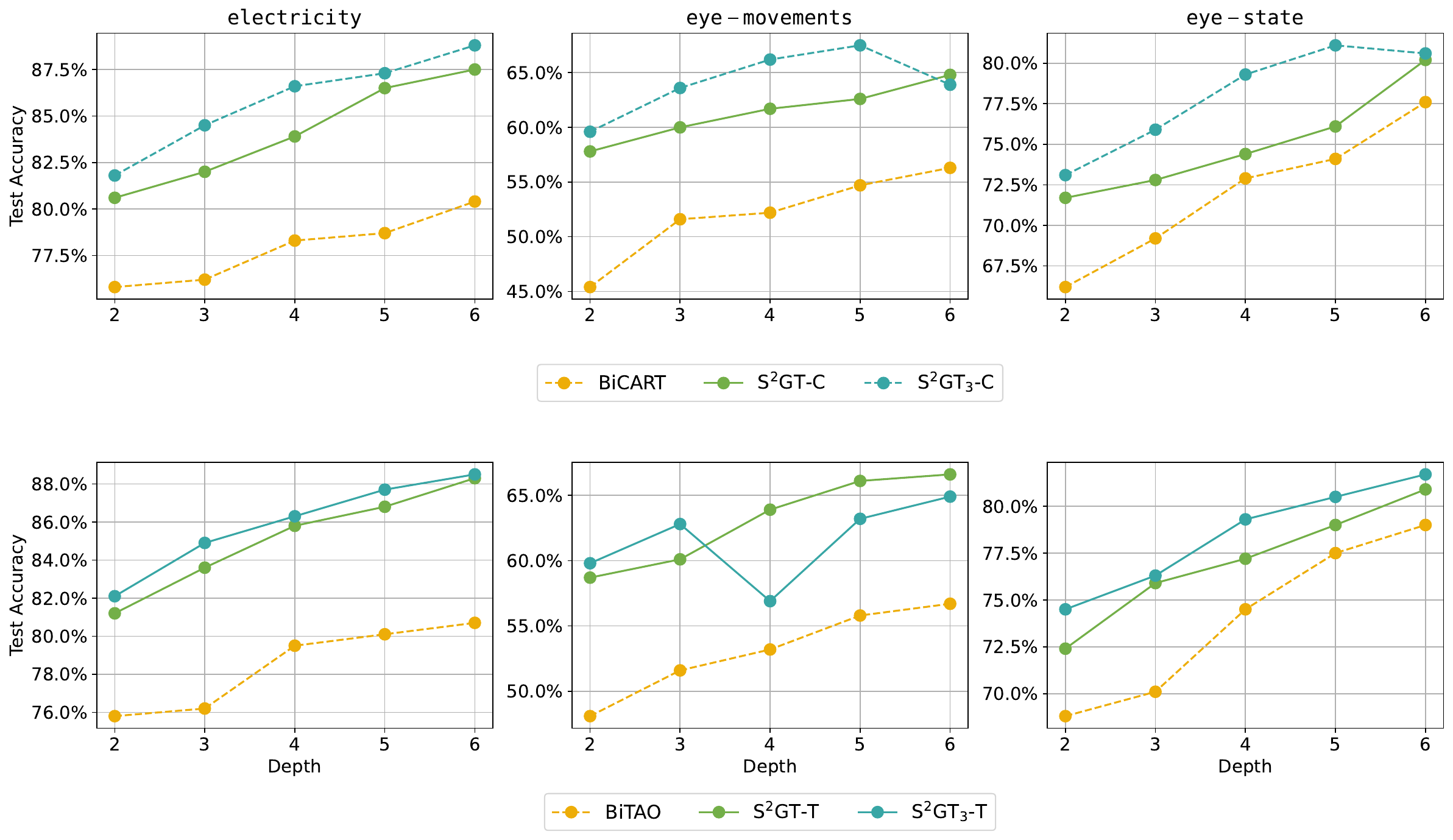}
    \caption{Test accuracy per depth for Bivariate models on \texttt{eye-movements}, \texttt{electricity}, and \texttt{eye-state}. Top: TDIDT-Approaches. Bottom: Non-Greedy Approaches.}
    \label{fig:bivariate_lineplots}
\end{figure}

% Please add the following required packages to your document preamble:
% \usepackage{booktabs}
% \usepackage{multirow}
% \usepackage{longtable}
% Note: It may be necessary to compile the document several times to get a multi-page table to line up properly

{\small\tabcolsep=1pt
% [inline block 0: 4 envs, 142890 chars -> data_tex | \begin{longtable}[c]{@{}cccccccc@{}} \caption{Test Accuracy (\% $\pm$ std.) for all axis-aligned approaches separated in...]
}
\subsubsection{Optimization Procedure Ablation} \label{sec:opt_ablate}
In this section, we examine our bin-to-branch optimization procedure. Here, we provide two ablations: removing the coordinate descent procedure and only replying on Weighted K-Means to assign bins to branches (denoted as \say{No CD}) and replacing the Weighted K-Means initialization with a random initialization (denoted as \say{No WKM}). For these experiments, we work with a subset of our datasets - Electricity, Eye-Movements, and Eye-State. We run hyperparameter tuning for both ablations and choose the model with the lowest validation loss for each depth. We report the average test accuracy for the ablated TDIDT approaches in Table \ref{tab:opt_ablate_acc} and provide the runtimes for these approaches in Table \ref{tab:opt_ablate_time}. In these tables, \say{All} denotes using the full optimization procedure. In general, we observe that both ablations consistently achieve lower test accuracy than the full models across most depths and tested datasets, especially at depths above two, with a minimal sacrifice in runtime. 

% Please add the following required packages to your document preamble:
% \usepackage{booktabs}
% \usepackage{multirow}
% \usepackage{graphicx}
\begin{table}[H]
\centering
\caption{Test Accuracy (\% $\pm$ std.) results for the optimization ablation analysis. Best approach for each model and dataset is \textbf{bolded}.}
\label{tab:opt_ablate_acc}
\resizebox{\columnwidth}{!}{
\begin{tabular}{@{}cccccccc@{}}
\toprule
\multicolumn{1}{l}{} &
  Model &
  2 &
  3 &
  4 &
  5 &
  6 &
  Overall \\ \midrule
\multicolumn{8}{c}{\textbf{ShapeCART}} \\\midrule[0.1pt]
\multirow{3}{*}{electricity} &
  All &
  \textbf{75.5$\pm$0.1} &
  \textbf{78.5$\pm$0.4} &
  \textbf{81.3$\pm$0.8} &
  \textbf{84.4$\pm$0.1} &
  \textbf{84.8$\pm$0.6} &
  \textbf{84.8$\pm$0.6} \\
 &
  No CD &
  74.9$\pm$0.2 &
  76.5$\pm$0.3 &
  79.6$\pm$0.5 &
  83.7$\pm$1.2 &
  83.4$\pm$0.2 &
  83.4$\pm$0.2 \\
 &
  No WKM &
  74.9$\pm$0.2 &
  76.5$\pm$0.3 &
  79.4$\pm$0.1 &
  83.2$\pm$0.4 &
  83.3$\pm$0.5 &
  83.3$\pm$0.5 \\\cdashline{1-8}\noalign{\smallskip}
\multirow{3}{*}{eye-movements} &
  All &
  \textbf{48.1$\pm$2.1} &
  \textbf{53$\pm$0.7} &
  \textbf{54.9$\pm$1.2} &
  \textbf{53.9$\pm$0.7} &
  \textbf{57.4$\pm$1.2} &
  \textbf{57.4$\pm$1.2} \\
 &
  No CD &
  47.1$\pm$3.3 &
  52.5$\pm$1.3 &
  52.4$\pm$2.7 &
  47.4$\pm$1.6 &
  51$\pm$0.5 &
  51$\pm$0.5 \\
 &
  No WKM &
  47.3$\pm$1.2 &
  51.8$\pm$1 &
  51.5$\pm$0.7 &
  47.2$\pm$0.8 &
  50.1$\pm$1.3 &
  50.1$\pm$1.3 \\\cdashline{1-8}\noalign{\smallskip}
\multirow{3}{*}{eye-state} &
  All &
  64.3$\pm$0.9 &
  \textbf{69$\pm$0.2} &
  \textbf{70.5$\pm$1.2} &
  \textbf{72.4$\pm$1.3} &
  \textbf{74.5$\pm$1.1} &
  \textbf{74.5$\pm$1.1} \\
 &
  No CD &
  62.5$\pm$0.3 &
  68.6$\pm$0.9 &
  70.3$\pm$1 &
  71.9$\pm$0.9 &
  73.7$\pm$0.7 &
  73.7$\pm$0.7 \\
 &
  No WKM &
  \textbf{64.4$\pm$1.2} &
  68.5$\pm$1 &
  \textbf{70.5$\pm$1.2} &
  \textbf{72.4$\pm$1.3} &
  \textbf{74.5$\pm$1.1} &
  \textbf{74.5$\pm$1.1} \\ \midrule
\multicolumn{8}{c}{\textbf{ShapeCART$_3$}} \\\midrule[0.1pt]
\multirow{3}{*}{electricity} &
  All &
  \textbf{76.5$\pm$0.3} &
  \textbf{80.6$\pm$0.6} &
  \textbf{84$\pm$1.2} &
  \textbf{86$\pm$0.4} &
  87.8$\pm$1 &
  87.8$\pm$1 \\
 &
  No CD &
  73.5$\pm$1.2 &
  78.5$\pm$0.3 &
  82.3$\pm$0.1 &
  85.7$\pm$0.3 &
  88$\pm$0.5 &
  88$\pm$0.5 \\
 &
  No WKM &
  73$\pm$0.4 &
  78.1$\pm$0.3 &
  82$\pm$0.4 &
  84.8$\pm$0.5 &
  \textbf{88.3$\pm$0.3} &
  \textbf{88.3$\pm$0.3} \\\cdashline{1-8}\noalign{\smallskip}
\multirow{3}{*}{eye-movements} &
  All &
  \textbf{47.1$\pm$0.5} &
  54.4$\pm$0.8 &
  56$\pm$0.3 &
  \textbf{57.5$\pm$1.6} &
  \textbf{60.9$\pm$2.6} &
  \textbf{60.9$\pm$2.6} \\
 &
  No CD &
  46$\pm$0.5 &
  54.8$\pm$1.4 &
  56.1$\pm$1.4 &
  54.4$\pm$1.7 &
  56.5$\pm$3 &
  56.5$\pm$3 \\
 &
  No WKM &
  45.6$\pm$0.4 &
  \textbf{55.3$\pm$1.9} &
  \textbf{56.2$\pm$2.3} &
  56$\pm$1.2 &
  57.2$\pm$1.9 &
  57.2$\pm$1.9 \\\cdashline{1-8}\noalign{\smallskip}
\multirow{3}{*}{eye-state} &
  All &
  \textbf{66.1$\pm$0.7} &
  \textbf{70.8$\pm$1.4} &
  73.3$\pm$1 &
  76.8$\pm$1.3 &
  77.5$\pm$0.5 &
  77.5$\pm$0.5 \\
 &
  No CD &
  64.6$\pm$0.3 &
  69.3$\pm$1 &
  \textbf{73.7$\pm$1.5} &
  \textbf{78.4$\pm$0.1} &
  76.7$\pm$0.5 &
  76.7$\pm$0.5 \\
 &
  No WKM &
  65.1$\pm$0.8 &
  70.7$\pm$1.1 &
  73$\pm$1 &
  77.5$\pm$1.3 &
  \textbf{77.7$\pm$0.2} &
  \textbf{77.7$\pm$0.2} \\ \midrule
\multicolumn{8}{c}{\textbf{Shape$^2$CART}} \\\midrule[0.1pt]
\multirow{3}{*}{electricity} &
  All &
  \textbf{80.6$\pm$0.6} &
  \textbf{82$\pm$0.6} &
  \textbf{83.9$\pm$0.3} &
  \textbf{86.5$\pm$1.2} &
  \textbf{87.5$\pm$0.5} &
  \textbf{87.5$\pm$0.5} \\
 &
  No CD &
  78.3$\pm$0.4 &
  77.4$\pm$0.7 &
  79.8$\pm$0.5 &
  84.8$\pm$0.5 &
  86.3$\pm$0.3 &
  86.3$\pm$0.3 \\
 &
  No WKM &
  78.5$\pm$0.6 &
  77.4$\pm$0.2 &
  79.8$\pm$1 &
  84.2$\pm$1 &
  85.7$\pm$0.4 &
  85.7$\pm$0.4 \\\cdashline{1-8}\noalign{\smallskip}
\multirow{3}{*}{eye-movements} &
  All &
  \textbf{57.8$\pm$1.4} &
  \textbf{60$\pm$2} &
  \textbf{61.7$\pm$3.2} &
  \textbf{62.6$\pm$3.3} &
  \textbf{64.8$\pm$3.5} &
  \textbf{64.8$\pm$3.5} \\
 &
  No CD &
  57.7$\pm$3.2 &
  59$\pm$1.5 &
  58.2$\pm$3.1 &
  58.4$\pm$2.9 &
  60.5$\pm$0.5 &
  60.5$\pm$0.5 \\
 &
  No WKM &
  56.4$\pm$2 &
  57.5$\pm$1.6 &
  58.7$\pm$0.7 &
  57.1$\pm$0.2 &
  61.1$\pm$1 &
  61.1$\pm$1 \\\cdashline{1-8}\noalign{\smallskip}
\multirow{3}{*}{eye-state} &
  All &
  \textbf{71.7$\pm$1.1} &
  \textbf{72.8$\pm$0.7} &
  \textbf{74.4$\pm$2.1} &
  \textbf{76.1$\pm$2} &
  80.2$\pm$0.3 &
  80.2$\pm$0.3 \\
 &
  No CD &
  71.5$\pm$0.5 &
  72.1$\pm$1.3 &
  71.9$\pm$0.9 &
  74.5$\pm$0.9 &
  \textbf{80.3$\pm$0.4} &
  \textbf{80.3$\pm$0.4} \\
 &
  No WKM &
  71.2$\pm$0.3 &
  71.8$\pm$1.4 &
  71.7$\pm$1.2 &
  73.4$\pm$1.8 &
  80.2$\pm$1.2 &
  80.2$\pm$1.2 \\ \midrule
\multicolumn{8}{c}{\textbf{Shape$^2$CART$_3$}} \\\midrule[0.1pt]
\multirow{3}{*}{electricity} &
  All &
  \textbf{81.8$\pm$0.2} &
  \textbf{84.5$\pm$0.3} &
  \textbf{86.6$\pm$0.8} &
  \textbf{87.3$\pm$1} &
  88.8$\pm$0.2 &
  88.8$\pm$0.2 \\
 &
  No CD &
  79.8$\pm$0.5 &
  81.8$\pm$0.3 &
  85$\pm$0.6 &
  86$\pm$0.3 &
  \textbf{89$\pm$0.3} &
  89$\pm$0.3 \\
 &
  No WKM &
  80$\pm$0.4 &
  81.2$\pm$0.3 &
  85.1$\pm$0.6 &
  85.8$\pm$0.9 &
  89.1$\pm$0.5 &
  \textbf{89.1$\pm$0.5} \\\cdashline{1-8}\noalign{\smallskip}
\multirow{3}{*}{eye-movements} &
  All &
  \textbf{59.6$\pm$2.4} &
  \textbf{63.6$\pm$1.4} &
  \textbf{66.2$\pm$1.9} &
  \textbf{67.5$\pm$3} &
  \textbf{63.9$\pm$2.4} &
  \textbf{63.9$\pm$2.4} \\
 &
  No CD &
  55.1$\pm$1 &
  58$\pm$0 &
  58$\pm$1.7 &
  57.4$\pm$3.4 &
  51.9$\pm$1.8 &
  51.9$\pm$1.8 \\
 &
  No WKM &
  56.6$\pm$1.1 &
  58$\pm$2 &
  60.3$\pm$3 &
  62.7$\pm$6.3 &
  55.4$\pm$3.6 &
  55.4$\pm$3.6 \\\cdashline{1-8}\noalign{\smallskip}
\multirow{3}{*}{eye-state} &
  All &
  73.1$\pm$1.1 &
  \textbf{75.9$\pm$0.4} &
  79.3$\pm$1.1 &
  81.1$\pm$0.6 &
  \textbf{80.6$\pm$1.9} &
  81.1$\pm$0.6 \\
 &
  No CD &
  \textbf{74$\pm$0.8} &
  75.4$\pm$1.8 &
  \textbf{80.2$\pm$0.2} &
  81.5$\pm$0.4 &
  80.3$\pm$1.1 &
  \textbf{81.5$\pm$0.4} \\
 &
  No WKM &
  73.3$\pm$1.1 &
  75.3$\pm$2 &
  79$\pm$3 &
  \textbf{81.7$\pm$6.3} &
  78.8$\pm$3.6 &
  78.8$\pm$3.6 \\ \bottomrule
\end{tabular}}
\end{table}
% Please add the following required packages to your document preamble:
% \usepackage{booktabs}
% \usepackage{multirow}
% \usepackage{graphicx}
\begin{table}[H]
\centering
\caption{Runtime (s. $\pm$ std.) results for the optimization ablation analysis. Best approach for each model and dataset is \textbf{bolded}.}
\label{tab:opt_ablate_time}
\resizebox{\columnwidth}{!}{%
\begin{tabular}{@{}cccccccc@{}}
\toprule
 &
  Model &
  2 &
  3 &
  4 &
  5 &
  6 &
  Overall \\ \midrule
\multicolumn{8}{c}{ShapeCART} \\ \midrule
\multirow{3}{*}{electricity} &
  All &
  0.396$\pm$0.002 &
  \textbf{0.438$\pm$0.006} &
  0.88$\pm$0.007 &
  1.356$\pm$0.011 &
  2.145$\pm$0.177 &
  2.145$\pm$0.177 \\
 &
  No CD &
  \textbf{0.283$\pm$0.018} &
  0.439$\pm$0.017 &
  \textbf{0.692$\pm$0.06} &
  0.883$\pm$0.018 &
  \textbf{1.385$\pm$0.097} &
  \textbf{1.385$\pm$0.097} \\
 &
  No WKM &
  0.294$\pm$0.011 &
  0.481$\pm$0.009 &
  0.722$\pm$0.016 &
  \textbf{0.809$\pm$0.012} &
  1.463$\pm$0.043 &
  1.463$\pm$0.043 \\\cdashline{1-8}\noalign{\smallskip}
\multirow{3}{*}{eye-movements} &
  All &
  0.464$\pm$0.009 &
  0.91$\pm$0.01 &
  1.54$\pm$0.02 &
  2.501$\pm$0.018 &
  3.879$\pm$0.329 &
  3.879$\pm$0.329 \\
 &
  No CD &
  \textbf{0.277$\pm$0.011} &
  \textbf{0.48$\pm$0.018} &
  \textbf{0.846$\pm$0.033} &
  \textbf{1.421$\pm$0.046} &
  \textbf{2.382$\pm$0.473} &
  \textbf{2.382$\pm$0.473} \\
 &
  No WKM &
  0.31$\pm$0.012 &
  0.622$\pm$0.022 &
  1.576$\pm$0.159 &
  1.847$\pm$0.039 &
  2.693$\pm$0.2 &
  2.693$\pm$0.2 \\\cdashline{1-8}\noalign{\smallskip}
\multirow{3}{*}{eye-state} &
  All &
  0.339$\pm$0.003 &
  0.505$\pm$0.002 &
  0.678$\pm$0.005 &
  1.171$\pm$0.005 &
  2.087$\pm$0.1 &
  2.087$\pm$0.1 \\
 &
  No CD &
  \textbf{0.165$\pm$0.011} &
  0.292$\pm$0.007 &
  0.457$\pm$0.029 &
  0.88$\pm$0.094 &
  1.306$\pm$0.089 &
  1.306$\pm$0.089 \\
 &
  No WKM &
  0.172$\pm$0.006 &
  \textbf{0.238$\pm$0.034} &
  \textbf{0.283$\pm$0.008} &
  \textbf{0.501$\pm$0.038} &
  \textbf{0.834$\pm$0.039} &
  \textbf{0.834$\pm$0.039} \\ \midrule
\multicolumn{8}{c}{ShapeCART$_3$} \\ \midrule
\multirow{3}{*}{electricity} &
  All &
  0.646$\pm$0.014 &
  1.138$\pm$0.012 &
  2.301$\pm$0.081 &
  4.603$\pm$0.381 &
  7.767$\pm$0.918 &
  7.767$\pm$0.918 \\
 &
  No CD &
  \textbf{0.318$\pm$0.014} &
  \textbf{0.546$\pm$0.014} &
  \textbf{1.087$\pm$0.054} &
  \textbf{1.992$\pm$0.045} &
  3.01$\pm$0.35 &
  3.01$\pm$0.35 \\
 &
  No WKM &
  0.438$\pm$0.004 &
  1.118$\pm$0.052 &
  1.903$\pm$0.068 &
  2.098$\pm$0.12 &
  \textbf{2.688$\pm$0.02} &
  \textbf{2.688$\pm$0.02} \\\cdashline{1-8}\noalign{\smallskip}
\multirow{3}{*}{eye-movements} &
  All &
  0.661$\pm$0.015 &
  2.216$\pm$0.179 &
  4.876$\pm$0.325 &
  6.152$\pm$0.443 &
  12.47$\pm$0.314 &
  12.47$\pm$0.314 \\
 &
  No CD &
  \textbf{0.352$\pm$0.027} &
  \textbf{0.656$\pm$0.105} &
  \textbf{1.201$\pm$0.03} &
  \textbf{1.749$\pm$0.193} &
  \textbf{2.762$\pm$0.078} &
  \textbf{2.762$\pm$0.078} \\
 &
  No WKM &
  0.629$\pm$0.07 &
  1.564$\pm$0.264 &
  3.672$\pm$0.31 &
  3.811$\pm$0.331 &
  4.829$\pm$0.155 &
  4.829$\pm$0.155 \\\cdashline{1-8}\noalign{\smallskip}
\multirow{3}{*}{eye-state} &
  All &
  0.574$\pm$0.007 &
  1.063$\pm$0.008 &
  2.511$\pm$0.133 &
  4.777$\pm$0.111 &
  6.192$\pm$0.148 &
  6.192$\pm$0.148 \\
 &
  No CD &
  0.192$\pm$0.008 &
  0.48$\pm$0.044 &
  1.071$\pm$0.144 &
  1.975$\pm$0.048 &
  4.265$\pm$0.132 &
  4.265$\pm$0.132 \\
 &
  No WKM &
  \textbf{0.136$\pm$0.007} &
  \textbf{0.478$\pm$0.014} &
  \textbf{0.59$\pm$0.029} &
  \textbf{1.167$\pm$0.073} &
  \textbf{1.86$\pm$0.154} &
  \textbf{1.86$\pm$0.154} \\ \midrule
\multicolumn{8}{c}{Shape$^2$CART} \\ \midrule
\multirow{3}{*}{electricity} &
  All &
  1.751$\pm$0.02 &
  3.068$\pm$0.041 &
  3.943$\pm$0.022 &
  6.214$\pm$0.068 &
  7.709$\pm$0.056 &
  7.709$\pm$0.056 \\
 &
  No CD &
  \textbf{1.585$\pm$0.005} &
  2.759$\pm$0.052 &
  \textbf{3.713$\pm$0.17} &
  4.962$\pm$0.154 &
  \textbf{5.619$\pm$0.056} &
  \textbf{5.619$\pm$0.056} \\
 &
  No WKM &
  1.885$\pm$0.007 &
  \textbf{2.36$\pm$0.004} &
  3.816$\pm$0.123 &
  \textbf{4.508$\pm$0.047} &
  5.876$\pm$0.025 &
  5.876$\pm$0.025 \\\cdashline{1-8}\noalign{\smallskip}
\multirow{3}{*}{eye-movements} &
  All &
  1.882$\pm$0.068 &
  3.41$\pm$0.018 &
  5.833$\pm$0.091 &
  9.208$\pm$0.331 &
  14.078$\pm$0.496 &
  14.078$\pm$0.496 \\
 &
  No CD &
  \textbf{1.553$\pm$0.124} &
  2.845$\pm$0.055 &
  5.288$\pm$0.141 &
  \textbf{7.391$\pm$0.063} &
  9.512$\pm$0.076 &
  9.512$\pm$0.076 \\
 &
  No WKM &
  1.698$\pm$0.046 &
  \textbf{3.278$\pm$0.165} &
  \textbf{4.849$\pm$0.063} &
  9.436$\pm$0.206 &
  \textbf{9.116$\pm$0.577} &
  \textbf{9.116$\pm$0.577} \\\cdashline{1-8}\noalign{\smallskip}
\multirow{3}{*}{eye-state} &
  All &
  1.428$\pm$0.009 &
  2.65$\pm$0.068 &
  4.042$\pm$0.019 &
  6.192$\pm$0.074 &
  5.843$\pm$0.092 &
  5.843$\pm$0.092 \\
 &
  No CD &
  1.301$\pm$0.05 &
  \textbf{1.454$\pm$0.014} &
  2.261$\pm$0.017 &
  3.397$\pm$0.191 &
  4.196$\pm$0.163 &
  4.196$\pm$0.163 \\
 &
  No WKM &
  \textbf{1.048$\pm$0.022} &
  1.762$\pm$0.029 &
  \textbf{2.223$\pm$0.002} &
  \textbf{2.62$\pm$0.08} &
  \textbf{3.191$\pm$0.099} &
  \textbf{3.191$\pm$0.099} \\ \midrule
\multicolumn{8}{c}{Shape$^2$CART$_3$} \\ \midrule
\multirow{3}{*}{electricity} &
  All &
  \textbf{2.077$\pm$0.024} &
  4.083$\pm$0.066 &
  6.865$\pm$0.109 &
  13.089$\pm$0.404 &
  16.725$\pm$0.356 &
  16.725$\pm$0.356 \\
 &
  No CD &
  2.274$\pm$0.035 &
  \textbf{3.113$\pm$0.104} &
  \textbf{4.469$\pm$0.124} &
  \textbf{5.756$\pm$0.742} &
  \textbf{8.358$\pm$0.733} &
  \textbf{5.756$\pm$0.742} \\
 &
  No WKM &
  2.253$\pm$0.035 &
  4.182$\pm$0.025 &
  5.976$\pm$0.013 &
  6.594$\pm$0.101 &
  8.824$\pm$0.615 &
  8.824$\pm$0.615 \\\cdashline{1-8}\noalign{\smallskip}
\multirow{3}{*}{eye-movements} &
  All &
  2.973$\pm$0.024 &
  6.749$\pm$0.222 &
  14.32$\pm$0.381 &
  23.759$\pm$3.123 &
  41.232$\pm$2.209 &
  41.232$\pm$2.209 \\
 &
  No CD &
  2.533$\pm$0.071 &
  \textbf{4.393$\pm$0.277} &
  \textbf{7.732$\pm$0.387} &
  \textbf{9.818$\pm$0.516} &
  \textbf{13.094$\pm$0.479} &
  \textbf{9.818$\pm$0.516} \\
 &
  No WKM &
  \textbf{2.344$\pm$0.071} &
  6.604$\pm$0.165 &
  16.5$\pm$1.009 &
  23.284$\pm$0.671 &
  21.673$\pm$1.054 &
  21.673$\pm$1.054 \\\cdashline{1-8}\noalign{\smallskip}
\multirow{3}{*}{eye-state} &
  All &
  1.998$\pm$0.009 &
  3.895$\pm$0.009 &
  6.083$\pm$0.386 &
  7.614$\pm$0.383 &
  14.603$\pm$0.709 &
  7.614$\pm$0.383 \\
 &
  No CD &
  \textbf{1.077$\pm$0.008} &
  \textbf{1.994$\pm$0.008} &
  3.078$\pm$0.041 &
  4.739$\pm$0.317 &
  9.667$\pm$0.879 &
  \textbf{4.739$\pm$0.317} \\
 &
  No WKM &
  1.386$\pm$0.004 &
  2.247$\pm$0.017 &
  \textbf{2.959$\pm$0.05} &
  \textbf{3.863$\pm$0.527} &
  \textbf{6.192$\pm$0.195} &
  6.192$\pm$0.195 \\ \bottomrule
\end{tabular}%
}
\end{table}

\subsubsection{Pairwise Selection Heuristic Ablation} \label{sec:heuristic_ablate}

In this section, we examine the benefits of utilizing our pairwise selection heuristic. To do this, we do not score pairs of features and instead allow  Shape$^2$CART and Shape$^2$CART$_3$ to construct all bivariate shape functions at each node. Like in the previous ablation, we work with a subset of our datasets and run hyperparameter tuning to choose the model with the highest validation accuracy at each depth. We report the average test accuracy in \ref{tab:heurabl_acc} and training runtime in \ref{tab:heur_abl_time}.

We observe that utilizing our heuristic has significant benefits in improving the training runtime for both Shape$^2$CART and Shape$^2$CART$_3$ across all datasets and depth. This improvement is very apparent on the eye-movements dataset  where Shape$^2$CART and Shape$^2$CART$_3$ have an $\sim5\times$ and $\sim6\times$ reduction, respectively, in training runtime at depth 6. When looking at test accuracy, we observe that allowing for all pairs does benefit test accuracy. Interestingly, the accuracy of Shape$^2$CART$_3$ benefits from limiting pairs on most depths on the eye-state dataset. 

% Please add the following required packages to your document preamble:
% \usepackage{booktabs}
% \usepackage{multirow}
% \usepackage{graphicx}
\begin{table}[H]
\centering
\caption{Test accuracy results (\% $\pm$ std.) for the pairwise heuristic ablation. }
\label{tab:heurabl_acc}
\resizebox{\columnwidth}{!}{%
\begin{tabular}{@{}cccccccc@{}}
\toprule
Dataset &
  Model &
  2 &
  3 &
  4 &
  5 &
  6 &
  Overall \\ \midrule
\multicolumn{8}{c}{\textbf{Shape$^2$CART}} \\ \midrule
\multirow{2}{*}{electricity} &
  Heur. &
  80.6 $\pm$ 0.6 &
  82 $\pm$ 0.6 &
  83.9 $\pm$ 0.3 &
  86.5 $\pm$ 1.2 &
  87.5 $\pm$ 0.5 &
  87.5 $\pm$ 0.5 \\
 &
  No Heur. &
  \textbf{83.1 $\pm$ 0.8} &
  \textbf{85.8 $\pm$ 0.2} &
  \textbf{86.3 $\pm$ 0.3} &
  \textbf{87 $\pm$ 0.1} &
  \textbf{88.1 $\pm$ 1} &
  \textbf{88.1 $\pm$ 1} \\\cdashline{1-8}\noalign{\smallskip}
\multirow{2}{*}{eye-movements} &
  Heur. &
  57.8 $\pm$ 1.4 &
  60 $\pm$ 2 &
  61.7 $\pm$ 3.2 &
  62.6 $\pm$ 3.3 &
  64.8 $\pm$ 3.5 &
  64.8 $\pm$ 3.5 \\
 &
  No Heur. &
  \textbf{59.3 $\pm$ 1.2} &
  \textbf{63.6 $\pm$ 2.5} &
  \textbf{68.4 $\pm$ 1.6} &
  \textbf{75.2 $\pm$ 3} &
  \textbf{77.8 $\pm$ 2.7} &
  \textbf{77.8 $\pm$ 2.7} \\\cdashline{1-8}\noalign{\smallskip}
\multirow{2}{*}{eye-state} &
  Heur. &
  \textbf{71.7 $\pm$ 1.1} &
  72.8 $\pm$ 0.7 &
  74.4 $\pm$ 2.1 &
  76.1 $\pm$ 2 &
  80.2 $\pm$ 0.3 &
  80.2 $\pm$ 0.3 \\
 &
  No Heur. &
  70.8 $\pm$ 0.8 &
  \textbf{73.7 $\pm$ 0.7} &
  \textbf{76.1 $\pm$ 1} &
  \textbf{77 $\pm$ 2.1} &
  \textbf{81 $\pm$ 1} &
  \textbf{81 $\pm$ 1} \\ \midrule
\multicolumn{8}{c}{\textbf{Shape$^2$CART$_3$}} \\ \midrule
\multirow{2}{*}{electricity} &
  Heur. &
  81.8 $\pm$ 0.2 &
  84.5 $\pm$ 0.3 &
  86.6 $\pm$ 0.8 &
  87.3 $\pm$ 1 &
  \textbf{88.8 $\pm$ 20} &
  \textbf{88.8 $\pm$ 20} \\
 &
  No Heur. &
  \textbf{84.7 $\pm$ 0.8} &
  \textbf{88.9 $\pm$ 0.2} &
  \textbf{89 $\pm$ 1} &
  \textbf{89 $\pm$ 0.1} &
  88.3 $\pm$ 0.1 &
  88.9 $\pm$ 0.2 \\\cdashline{1-8}\noalign{\smallskip}
\multirow{2}{*}{eye-movements} &
  Heur. &
  59.6 $\pm$ 2.4 &
  63.6 $\pm$ 1.4 &
  66.2 $\pm$ 1.9 &
  67.5 $\pm$ 3 &
  63.9 $\pm$ 2.4 &
  63.9 $\pm$ 2.4 \\
 &
  No Heur. &
  \textbf{62 $\pm$ 1.2} &
  \textbf{69.9 $\pm$ 2.5} &
  \textbf{78.6 $\pm$ 1.6} &
  \textbf{79.7 $\pm$ 3} &
  \textbf{74.9 $\pm$ 2.7} &
  \textbf{74.9 $\pm$ 2.7} \\\cdashline{1-8}\noalign{\smallskip}
\multirow{2}{*}{eye-state} &
  Heur. &
  \textbf{73.1 $\pm$ 1.1} &
  75.9 $\pm$ 0.4 &
  \textbf{79.3 $\pm$ 1.1} &
  \textbf{81.1 $\pm$ 0.6} &
  \textbf{80.6 $\pm$ 1.9} &
  \textbf{81.1 $\pm$ 0.6} \\
 &
  No Heur. &
  \textbf{73 $\pm$ 0.8} &
  \textbf{76.3 $\pm$ 0.7} &
  77.2 $\pm$ 0.1 &
  78.9 $\pm$ 2.1 &
  76.1 $\pm$ 1 &
  78.9 $\pm$ 2.1 \\ \bottomrule
\end{tabular}%
}
\end{table}

% Please add the following required packages to your document preamble:
% \usepackage{booktabs}
% \usepackage{multirow}
% \usepackage{graphicx}
\begin{table}[H]
\centering
\caption{Runtime results (s. $\pm$ std.) for the pairwise heuristic ablation. }
\label{tab:heur_abl_time}
\resizebox{\columnwidth}{!}{%
\begin{tabular}{@{}cccccccc@{}}
\toprule
Dataset &
  Model &
  2 &
  3 &
  4 &
  5 &
  6 &
  Overall \\ \midrule
\multicolumn{8}{c}{\textbf{Shape$^2$CART}} \\ \midrule
\multirow{2}{*}{electricity} &
  Heur. &
  \textbf{1.751 $\pm$ 0.02} &
  \textbf{3.068 $\pm$ 0.041} &
  \textbf{3.943 $\pm$ 0.022} &
  \textbf{6.214 $\pm$ 0.068} &
  \textbf{7.709 $\pm$ 0.056} &
  \textbf{7.709 $\pm$ 0.056} \\
 &
  No Heur. &
  3.989 $\pm$ 0.037 &
  7.202 $\pm$ 0.123 &
  9.121 $\pm$ 0.142 &
  13.878 $\pm$ 0.319 &
  16.101 $\pm$ 0.187 &
  16.101 $\pm$ 0.187 \\\cdashline{1-8}\noalign{\smallskip}
\multirow{2}{*}{eye-movements} &
  Heur. &
  \textbf{1.882 $\pm$ 0.068} &
  \textbf{3.41 $\pm$ 0.018} &
  \textbf{5.833 $\pm$ 0.091} &
  \textbf{9.208 $\pm$ 0.331} &
  \textbf{14.078 $\pm$ 0.496} &
  \textbf{14.078 $\pm$ 0.496} \\
 &
  No Heur. &
  16.716 $\pm$ 0.204 &
  28.674 $\pm$ 0.132 &
  41.5 $\pm$ 0.361 &
  75.599 $\pm$ 4.085 &
  77.686 $\pm$ 4.074 &
  77.686 $\pm$ 4.074 \\\cdashline{1-8}\noalign{\smallskip}
\multirow{2}{*}{eye-state} &
  Heur. &
  \textbf{1.428 $\pm$ 0.009} &
  \textbf{2.65 $\pm$ 0.068} &
  \textbf{4.042 $\pm$ 0.019} &
  \textbf{6.192 $\pm$ 0.074} &
  \textbf{5.843 $\pm$ 0.092} &
  \textbf{5.843 $\pm$ 0.092} \\
 &
  No Heur. &
  8.321 $\pm$ 0.127 &
  9.455 $\pm$ 0.201 &
  11.765 $\pm$ 0.312 &
  13.482 $\pm$ 0.483 &
  14.123 $\pm$ 0.336 &
  14.123 $\pm$ 0.336 \\ \midrule
\multicolumn{8}{c}{\textbf{Shape$^2$CART$_3$}} \\ \midrule
\multirow{2}{*}{electricity} &
  Heur. &
  \textbf{2.077 $\pm$ 0.024} &
  \textbf{4.083 $\pm$ 0.066} &
  \textbf{6.865 $\pm$ 0.109} &
  \textbf{13.089 $\pm$ 0.404} &
  \textbf{16.725 $\pm$ 0.356} &
  16.725 $\pm$ 0.356 \\
 &
  No Heur. &
  5.084 $\pm$ 0.037 &
  11.164 $\pm$ 0.123 &
  18.321 $\pm$ 0.101 &
  26.528 $\pm$ 0.319 &
  42.123 $\pm$ 0.331 &
  \textbf{11.164 $\pm$ 0.123} \\\cdashline{1-8}\noalign{\smallskip}
\multirow{2}{*}{eye-movements} &
  Heur. &
  \textbf{2.973 $\pm$ 0.024} &
  \textbf{6.749 $\pm$ 0.222} &
  \textbf{14.32 $\pm$ 0.381} &
  \textbf{23.759 $\pm$ 3.123} &
  \textbf{41.232 $\pm$ 2.209} &
  \textbf{41.232 $\pm$ 2.209} \\
 &
  No Heur. &
  30.616 $\pm$ 0.204 &
  56.75 $\pm$ 0.132 &
  155.597 $\pm$ 0.361 &
  201.283 $\pm$ 4.085 &
  224.403 $\pm$ 4.074 &
  224.403 $\pm$ 4.074 \\\cdashline{1-8}\noalign{\smallskip}
\multirow{2}{*}{eye-state} &
  Heur. &
  \textbf{1.998 $\pm$ 0.009} &
  \textbf{3.895 $\pm$ 0.009} &
  \textbf{6.083 $\pm$ 0.386} &
  \textbf{7.614 $\pm$ 0.383} &
  \textbf{14.603 $\pm$ 0.709} &
  \textbf{7.614 $\pm$ 0.383} \\
 &
  No Heur. &
  7.594 $\pm$ 0.127 &
  9.84 $\pm$ 0.201 &
  17.761 $\pm$ 0.413 &
  21.993 $\pm$ 0.483 &
  33.826 $\pm$ 0.651 &
  21.993 $\pm$ 0.483 \\ \bottomrule
\end{tabular}%
}
\end{table}

\section{Visualized Trees} \label{sec:viz}
\begin{figure}[H]
    \centering
\includegraphics[width=0.75\linewidth]{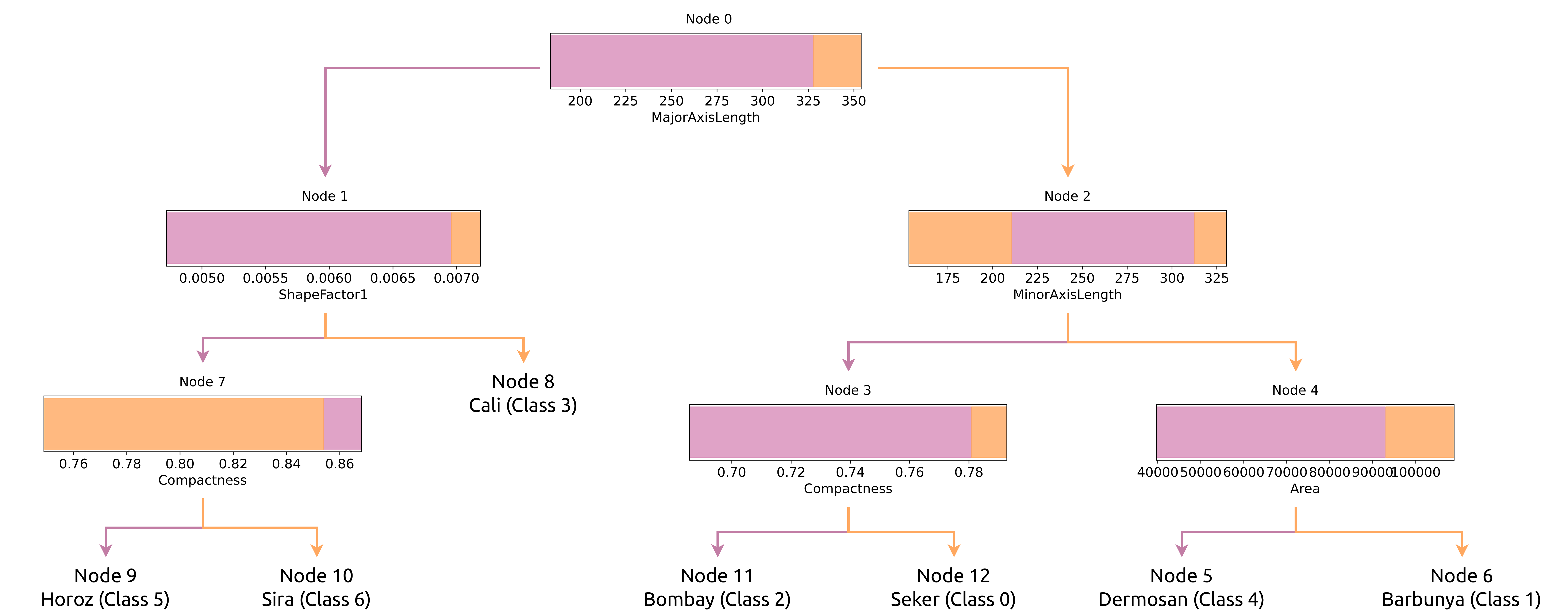}
    \caption{ShapeCART (Depth 3) trained on Bean}
    \label{fig:scartbean}
\end{figure}
\begin{figure}[H]
    \centering
    \includegraphics[width=0.75\linewidth]{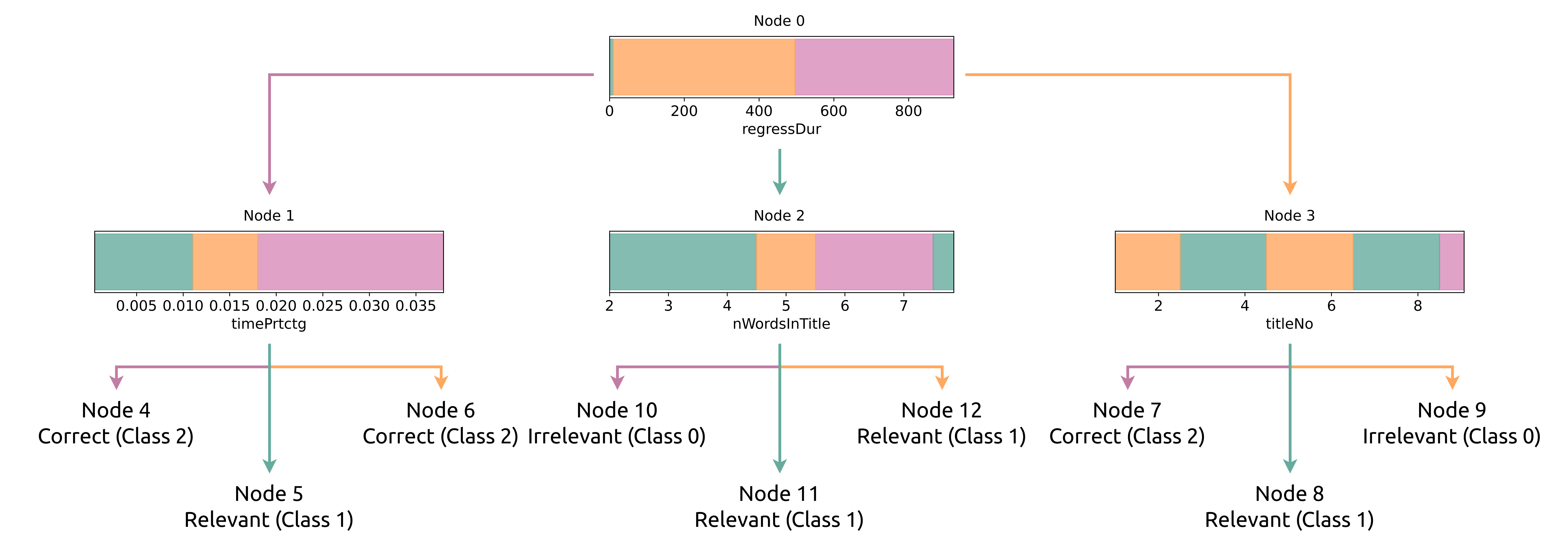}
    \caption{ShapeCART$_3$ (Depth 2) trained on Eye-Movements}
    \label{fig:scart3em}
\end{figure}
\begin{figure}[H]
    \centering
    \includegraphics[width=0.75\linewidth]{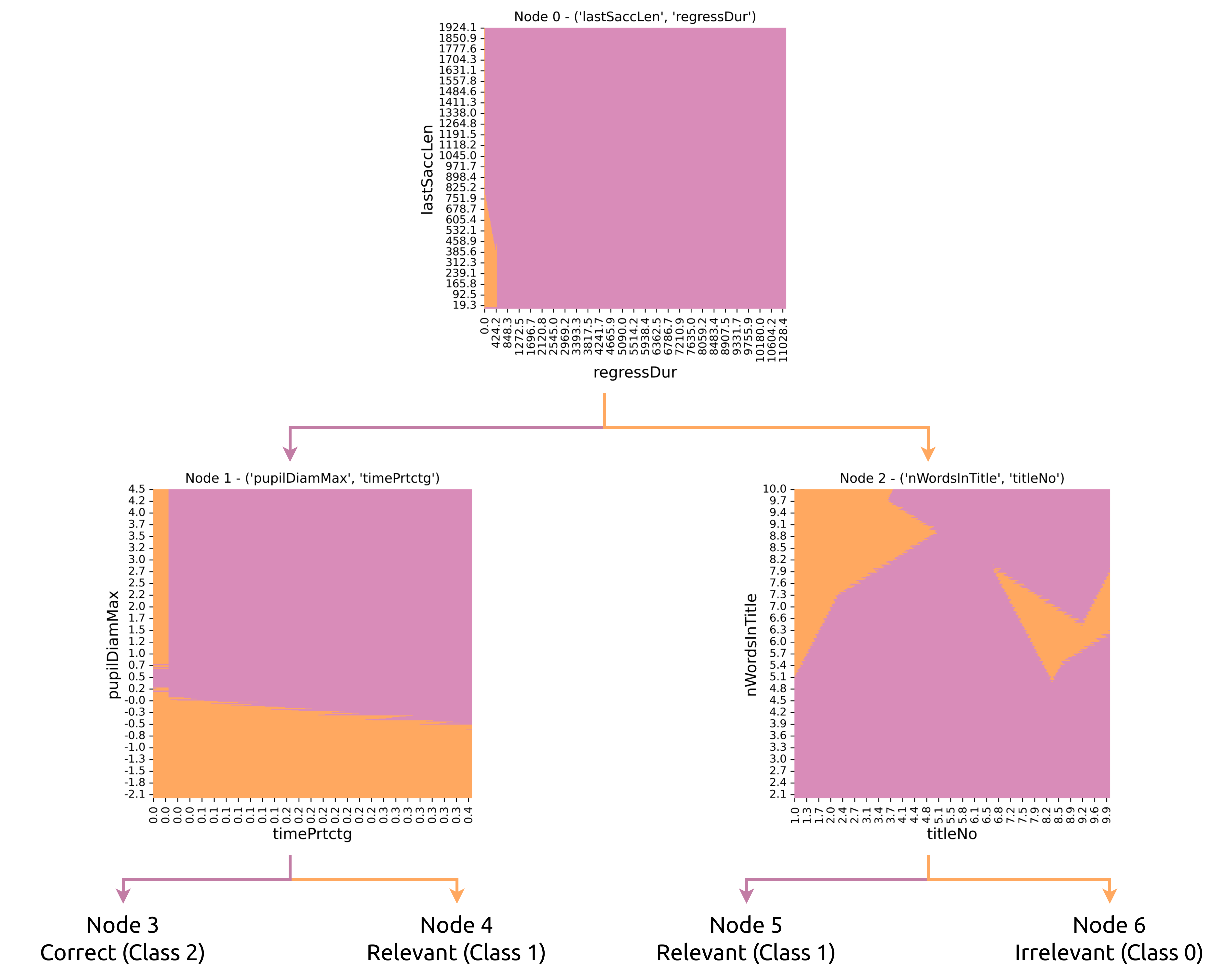}
    \caption{Shape$^2$CART (Depth 2) Trained on the Eye-Movements dataset}
    \label{fig:s2ceyemovements}
\end{figure}

\begin{figure}[H]
    \centering
\includegraphics[width=0.75\linewidth]{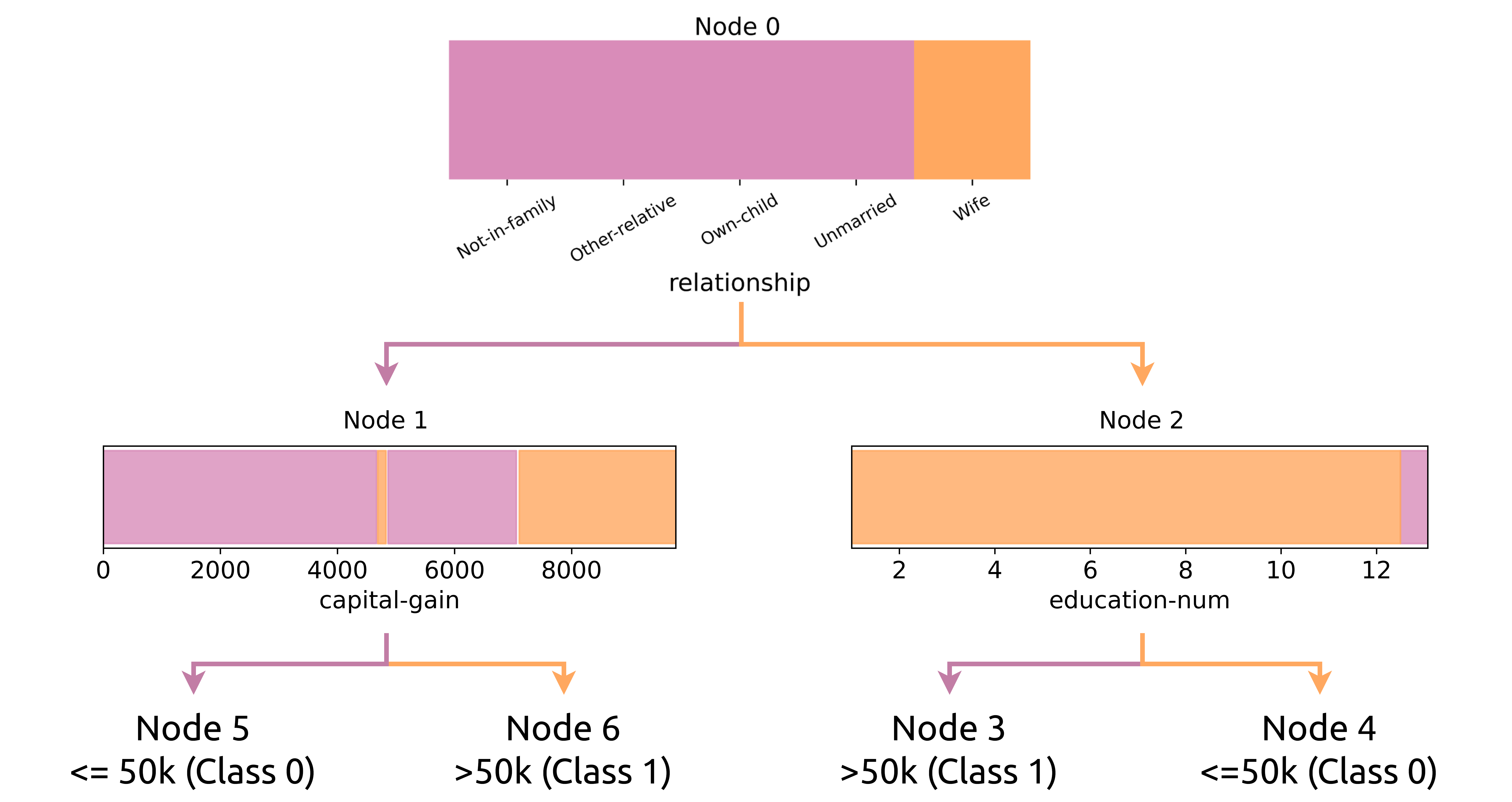}
    \caption{ShapeCART (Depth 2) Trained on the Adult dataset}
    \label{fig:adult_viz}
\end{figure}
\begin{figure}[H]
    \centering
\includegraphics[width=0.75\linewidth]{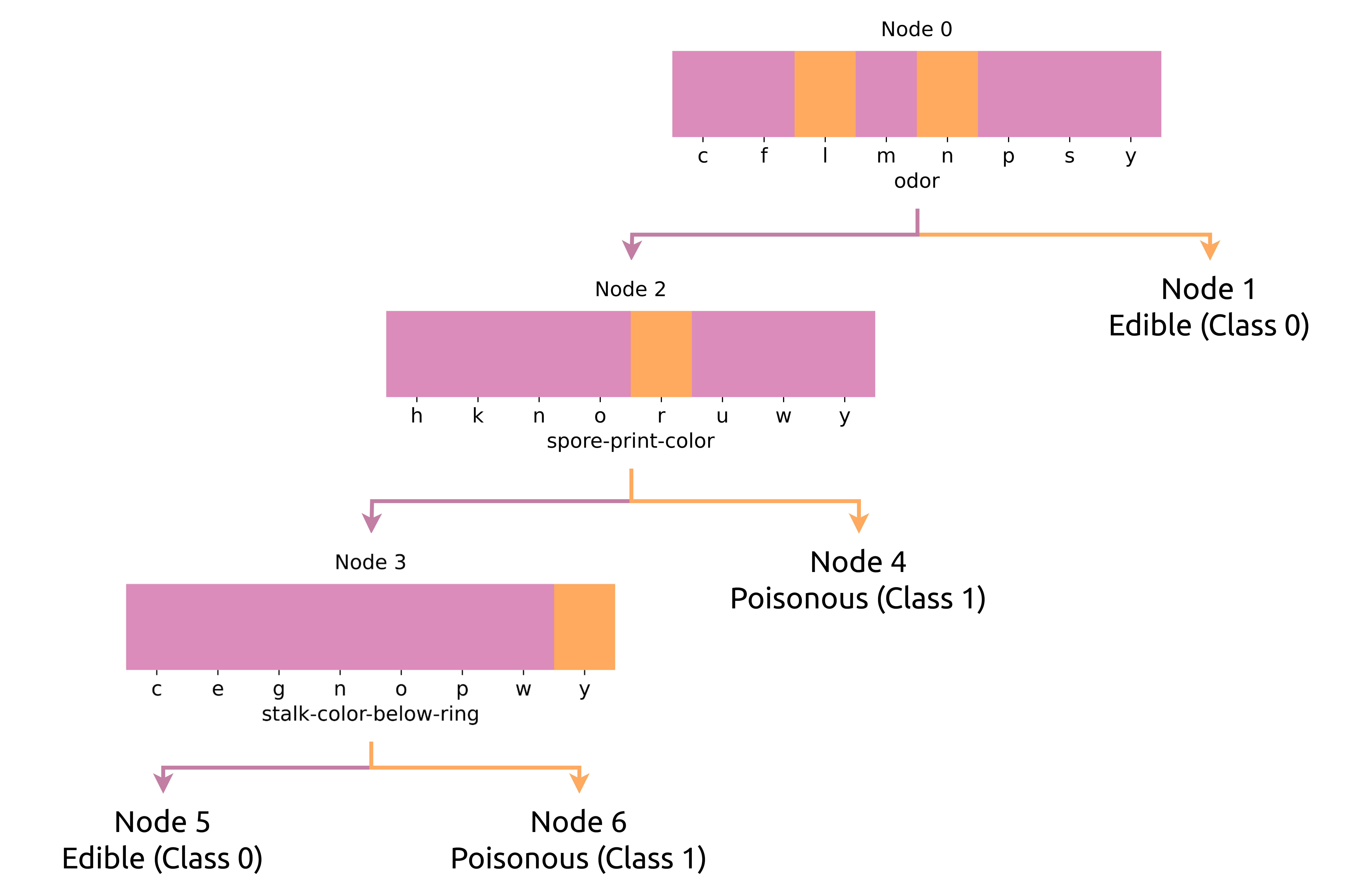}
    \caption{ShapeCART (Depth 3) Trained on the Mushroom dataset}
    \label{fig:mushroom}
\end{figure}

\begin{figure}[H]
    \centering
    \includegraphics[width=0.9\linewidth]{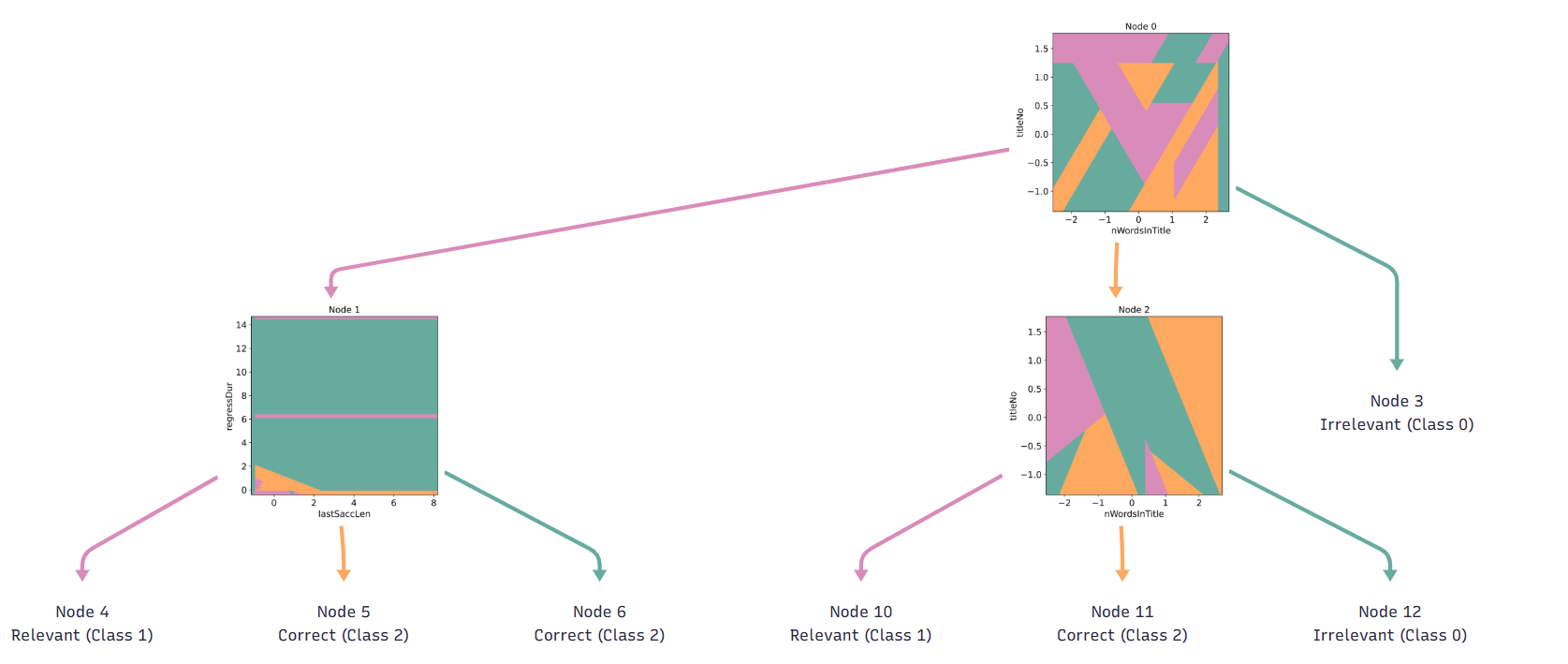}
    \caption{Shape$^2$CART$_3$ (Depth 2) Trained on Eye-Movements}
    \label{fig:s2c3em}
\end{figure}

\section{Other Shape Function Candidates} \label{sec:shapedpdt}
While our approach uses CART to construct the internal tree, our two-stage approach can be applied to any algorithm that can group samples, including other trees. To demonstrate this, we present a version of the ShapeCART algorithm that constructs the internal tree via DPDT \cite{kohler2025breiman} (denoted as SGT-C (D)). Note that we exclude Higgs from this evaluation as the DPDT version did not finish on most depths. Test accuracy results for this algorithm per dataset can be seen in Table \ref{tab:shapedpdt_acc} and runtime results in Table \ref{tab:sdpdt_runtime}. Overall, we observe minor accuracy differences between constructing the internal tree with CART and DPDT. However, we observe that the runtime of the DPDT variant is much higher across all depths and datasets. 

{\tabcolsep=1pt
% Please add the following required packages to your document preamble:
% \usepackage{booktabs}
% \usepackage{multirow}
% \usepackage{graphicx}
\begin{table}[]
\centering
\caption{Test Accuracy comparison (\% $\pm$ std.) between ShapeCART and ShapeCART with DPDT internal tree.}
\label{tab:shapedpdt_acc}
\resizebox{\columnwidth}{!}{%
\begin{tabular}{@{}cccccccc@{}}
\toprule
Dataset &
  Model &
  2 &
  3 &
  4 &
  5 &
  6 &
  Overall \\ \midrule
\multirow{2}{*}{adult} &
  SGT-C  &
  \textbf{82.80 $\pm$ 0.20} &
  \textbf{84.10 $\pm$ 0.30} &
  84.00 $\pm$ 0.30 &
  \textbf{84.30 $\pm$ 0.40} &
  \textbf{84.60 $\pm$ 0.60} &
  \textbf{84.60 $\pm$ 0.60} \\
 &
  SGT-C (D) &
  81.30 $\pm$ 1.90 &
  83.90 $\pm$ 0.20 &
  \textbf{84.10 $\pm$ 0.30} &
  84.20 $\pm$ 0.60 &
  84.50 $\pm$ 0.60 &
  84.50 $\pm$ 0.60 \\ \midrule
\multirow{2}{*}{avila} &
  SGT-C  &
  \textbf{---} &
  \textbf{---} &
  64.50 $\pm$ 2.90 &
  71.70 $\pm$ 1.80 &
  82.00 $\pm$ 2.10 &
  82.00 $\pm$ 2.10 \\
 &
  SGT-C (D) &
  \textbf{---} &
  \textbf{---} &
  \textbf{67.30 $\pm$ 2.80} &
  \textbf{75.50 $\pm$ 2.80} &
  \textbf{83.80 $\pm$ 3.30} &
  \textbf{83.80 $\pm$ 3.30} \\ \midrule
\multirow{2}{*}{bank} &
  SGT-C  &
  \textbf{88.50 $\pm$ 2.30} &
  \textbf{89.50 $\pm$ 3.20} &
  \textbf{97.00 $\pm$ 0.90} &
  \textbf{96.70 $\pm$ 0.00} &
  \textbf{96.70 $\pm$ 0.00} &
  \textbf{97.00 $\pm$ 0.90} \\
 &
  SGT-C (D) &
  88.00 $\pm$ 2.00 &
  89.00 $\pm$ 2.80 &
  90.90 $\pm$ 2.20 &
  92.40 $\pm$ 1.60 &
  89.30 $\pm$ 4.10 &
  92.40 $\pm$ 1.60 \\ \midrule
\multirow{2}{*}{bean} &
  SGT-C  &
  \textbf{---} &
  85.10 $\pm$ 1.20 &
  \textbf{88.80 $\pm$ 0.40} &
  89.70 $\pm$ 0.40 &
  89.80 $\pm$ 0.10 &
  89.80 $\pm$ 0.10 \\
 &
  SGT-C (D) &
  \textbf{---} &
  \textbf{85.20 $\pm$ 0.90} &
  88.70 $\pm$ 0.30 &
  \textbf{90.40 $\pm$ 0.90} &
  \textbf{90.90 $\pm$ 0.60} &
  \textbf{90.90 $\pm$ 0.60} \\ \midrule
\multirow{2}{*}{bidding} &
  SGT-C  &
  \textbf{98.10 $\pm$ 0.70} &
  \textbf{98.40 $\pm$ 0.60} &
  \textbf{99.70 $\pm$ 0.20} &
  \textbf{99.60 $\pm$ 0.30} &
  \textbf{99.60 $\pm$ 0.10} &
  \textbf{99.60 $\pm$ 0.10} \\
 &
  SGT-C (D) &
  \textbf{98.10 $\pm$ 0.70} &
  98.20 $\pm$ 0.70 &
  99.50 $\pm$ 0.20 &
  99.20 $\pm$ 0.20 &
  \textbf{99.60 $\pm$ 0.20} &
  \textbf{99.60 $\pm$ 0.20} \\ \midrule
\multirow{2}{*}{covtype} &
  SGT-C  &
  \textbf{---} &
  \textbf{68.90 $\pm$ 0.20} &
  70.40 $\pm$ 0.10 &
  72.10 $\pm$ 0.50 &
  73.70 $\pm$ 0.20 &
  73.70 $\pm$ 0.20 \\
 &
  SGT-C (D) &
  \textbf{---} &
  68.80 $\pm$ 0.70 &
  \textbf{70.50 $\pm$ 0.20} &
  \textbf{72.40 $\pm$ 0.10} &
  \textbf{73.80 $\pm$ 0.20} &
  \textbf{73.80 $\pm$ 0.20} \\ \midrule
\multirow{2}{*}{electricity} &
  SGT-C  &
  75.50 $\pm$ 0.10 &
  78.50 $\pm$ 0.40 &
  81.30 $\pm$ 0.80 &
  84.40 $\pm$ 0.10 &
  84.80 $\pm$ 0.60 &
  84.80 $\pm$ 0.60 \\
 &
  SGT-C (D) &
  \textbf{75.60 $\pm$ 0.20} &
  \textbf{79.30 $\pm$ 0.60} &
  \textbf{82.50 $\pm$ 0.40} &
  \textbf{85.10 $\pm$ 0.60} &
  \textbf{86.60 $\pm$ 0.40} &
  \textbf{86.60 $\pm$ 0.40} \\ \midrule
\multirow{2}{*}{eucalyptus} &
  SGT-C  &
  \textbf{---} &
  53.60 $\pm$ 3.30 &
  55.90 $\pm$ 4.00 &
  56.30 $\pm$ 4.60 &
  \textbf{58.80 $\pm$ 7.20} &
  \textbf{58.80 $\pm$ 7.20} \\
 &
  SGT-C (D) &
  \textbf{---} &
  \textbf{54.10 $\pm$ 3.50} &
  \textbf{56.30 $\pm$ 4.80} &
  \textbf{57.20 $\pm$ 7.20} &
  56.10 $\pm$ 1.20 &
  57.20 $\pm$ 7.20 \\ \midrule
\multirow{2}{*}{eye-movements} &
  SGT-C  &
  \textbf{48.10 $\pm$ 2.10} &
  53.00 $\pm$ 0.70 &
  54.90 $\pm$ 1.20 &
  53.90 $\pm$ 0.70 &
  57.40 $\pm$ 1.20 &
  57.40 $\pm$ 1.20 \\
 &
  SGT-C (D) &
  47.30 $\pm$ 0.30 &
  \textbf{53.20 $\pm$ 0.40} &
  \textbf{55.00 $\pm$ 1.20} &
  \textbf{57.00 $\pm$ 0.10} &
  \textbf{60.40 $\pm$ 0.20} &
  \textbf{60.40 $\pm$ 0.20} \\ \midrule
\multirow{2}{*}{eye-state} &
  SGT-C  &
  64.30 $\pm$ 0.90 &
  69.00 $\pm$ 0.20 &
  70.50 $\pm$ 1.20 &
  72.40 $\pm$ 1.30 &
  74.50 $\pm$ 1.10 &
  74.50 $\pm$ 1.10 \\
 &
  SGT-C (D) &
  \textbf{66.60 $\pm$ 0.60} &
  \textbf{70.20 $\pm$ 0.50} &
  \textbf{72.50 $\pm$ 0.90} &
  \textbf{74.10 $\pm$ 0.10} &
  \textbf{75.30 $\pm$ 0.40} &
  \textbf{75.30 $\pm$ 0.40} \\ \midrule
\multirow{2}{*}{fault} &
  SGT-C  &
  \textbf{---} &
  59.70 $\pm$ 3.90 &
  \textbf{65.30 $\pm$ 1.80} &
  68.00 $\pm$ 0.40 &
  69.70 $\pm$ 3.60 &
  69.70 $\pm$ 3.60 \\
 &
  SGT-C (D) &
  \textbf{---} &
  \textbf{60.20 $\pm$ 5.70} &
  \textbf{65.30 $\pm$ 1.40} &
  \textbf{70.10 $\pm$ 1.70} &
  \textbf{69.90 $\pm$ 2.50} &
  \textbf{69.90 $\pm$ 2.50} \\ \midrule
\multirow{2}{*}{gas-drift} &
  SGT-C  &
  \textbf{---} &
  \textbf{66.30 $\pm$ 1.30} &
  76.00 $\pm$ 1.00 &
  83.10 $\pm$ 2.00 &
  \textbf{89.00 $\pm$ 0.90} &
  \textbf{89.00 $\pm$ 0.90} \\
 &
  SGT-C (D) &
  \textbf{---} &
  66.00 $\pm$ 2.10 &
  \textbf{76.80 $\pm$ 1.20} &
  \textbf{83.80 $\pm$ 0.90} &
  87.70 $\pm$ 2.70 &
  87.70 $\pm$ 2.70 \\ \midrule
\multirow{2}{*}{htru} &
  SGT-C  &
  \textbf{97.80 $\pm$ 0.10} &
  97.80 $\pm$ 0.10 &
  97.80 $\pm$ 0.20 &
  \textbf{97.90 $\pm$ 0.20} &
  97.60 $\pm$ 0.10 &
  \textbf{97.80 $\pm$ 0.10} \\
 &
  SGT-C (D) &
  \textbf{97.80 $\pm$ 0.10} &
  \textbf{97.90 $\pm$ 0.30} &
  \textbf{97.90 $\pm$ 0.20} &
  97.70 $\pm$ 0.10 &
  \textbf{97.80 $\pm$ 0.10} &
  \textbf{97.80 $\pm$ 0.10} \\ \midrule
\multirow{2}{*}{magic} &
  SGT-C  &
  73.90 $\pm$ 1.00 &
  77.10 $\pm$ 0.10 &
  \textbf{79.60 $\pm$ 1.10} &
  80.10 $\pm$ 1.70 &
  \textbf{81.00 $\pm$ 0.50} &
  \textbf{81.00 $\pm$ 0.50} \\
 &
  SGT-C (D) &
  \textbf{74.90 $\pm$ 1.50} &
  \textbf{77.60 $\pm$ 1.00} &
  78.90 $\pm$ 1.00 &
  \textbf{80.20 $\pm$ 0.30} &
  80.70 $\pm$ 0.70 &
  80.70 $\pm$ 0.70 \\ \midrule
\multirow{2}{*}{mini-boone} &
  SGT-C  &
  \textbf{84.80 $\pm$ 0.20} &
  87.10 $\pm$ 0.50 &
  88.30 $\pm$ 0.30 &
  \textbf{89.20 $\pm$ 0.20} &
  \textbf{89.70 $\pm$ 0.10} &
  \textbf{89.70 $\pm$ 0.10} \\
 &
  SGT-C (D) &
  84.70 $\pm$ 0.20 &
  \textbf{87.40 $\pm$ 0.50} &
  \textbf{88.40 $\pm$ 0.30} &
  89.10 $\pm$ 0.30 &
  89.60 $\pm$ 0.30 &
  89.60 $\pm$ 0.30 \\ \midrule
\multirow{2}{*}{mushroom} &
  SGT-C  &
  98.60 $\pm$ 0.30 &
  99.30 $\pm$ 0.10 &
  99.80 $\pm$ 0.20 &
  \textbf{99.90 $\pm$ 0.10} &
  99.90 $\pm$ 0.10 &
  99.90 $\pm$ 0.10 \\
 &
  SGT-C (D) &
  \textbf{98.90 $\pm$ 0.30} &
  \textbf{99.60 $\pm$ 0.20} &
  \textbf{99.90 $\pm$ 0.10} &
  \textbf{99.90 $\pm$ 0.10} &
  \textbf{100.00 $\pm$ 0.00} &
  \textbf{100.00 $\pm$ 0.00} \\ \midrule
\multirow{2}{*}{occupancy} &
  SGT-C  &
  \textbf{99.00 $\pm$ 0.10} &
  \textbf{98.90 $\pm$ 0.10} &
  99.00 $\pm$ 0.10 &
  \textbf{99.00 $\pm$ 0.20} &
  \textbf{99.20 $\pm$ 0.20} &
  \textbf{99.20 $\pm$ 0.20} \\
 &
  SGT-C (D) &
  \textbf{99.00 $\pm$ 0.10} &
  \textbf{98.90 $\pm$ 0.10} &
  \textbf{99.10 $\pm$ 0.20} &
  \textbf{99.00 $\pm$ 0.20} &
  99.00 $\pm$ 0.20 &
  99.00 $\pm$ 0.20 \\ \midrule
\multirow{2}{*}{page} &
  SGT-C  &
  \textbf{---} &
  95.20 $\pm$ 0.30 &
  96.40 $\pm$ 0.20 &
  \textbf{96.30 $\pm$ 1.20} &
  96.20 $\pm$ 0.40 &
  96.30 $\pm$ 1.20 \\
 &
  SGT-C (D) &
  \textbf{---} &
  \textbf{95.60 $\pm$ 0.20} &
  \textbf{96.70 $\pm$ 0.20} &
  96.00 $\pm$ 0.20 &
  \textbf{96.40 $\pm$ 0.20} &
  \textbf{96.70 $\pm$ 0.20} \\ \midrule
\multirow{2}{*}{pendigits} &
  SGT-C  &
  \textbf{---} &
  \textbf{---} &
  \textbf{79.30 $\pm$ 3.90} &
  86.40 $\pm$ 0.70 &
  88.30 $\pm$ 1.20 &
  88.30 $\pm$ 1.20 \\
 &
  SGT-C (D) &
  \textbf{---} &
  \textbf{---} &
  78.10 $\pm$ 1.20 &
  \textbf{86.70 $\pm$ 0.10} &
  \textbf{90.10 $\pm$ 0.30} &
  \textbf{90.10 $\pm$ 0.30} \\ \midrule
\multirow{2}{*}{raisin} &
  SGT-C  &
  85.70 $\pm$ 1.40 &
  \textbf{85.90 $\pm$ 2.30} &
  84.30 $\pm$ 3.90 &
  84.30 $\pm$ 2.90 &
  84.40 $\pm$ 2.90 &
  84.30 $\pm$ 2.90 \\
 &
  SGT-C (D) &
  \textbf{85.90 $\pm$ 1.40} &
  85.60 $\pm$ 2.20 &
  \textbf{85.90 $\pm$ 1.40} &
  \textbf{85.70 $\pm$ 2.20} &
  \textbf{85.70 $\pm$ 2.20} &
  \textbf{85.60 $\pm$ 2.20} \\ \midrule
\multirow{2}{*}{rice} &
  SGT-C  &
  \textbf{93.10 $\pm$ 0.40} &
  \textbf{92.70 $\pm$ 0.80} &
  \textbf{92.50 $\pm$ 0.90} &
  92.10 $\pm$ 0.90 &
  92.50 $\pm$ 0.50 &
  92.50 $\pm$ 0.50 \\
 &
  SGT-C (D) &
  \textbf{93.10 $\pm$ 0.40} &
  \textbf{92.70 $\pm$ 0.40} &
  92.20 $\pm$ 0.90 &
  \textbf{93.00 $\pm$ 0.60} &
  \textbf{93.00 $\pm$ 0.60} &
  \textbf{93.00 $\pm$ 0.60} \\ \midrule
\multirow{2}{*}{room} &
  SGT-C  &
  93.00 $\pm$ 0.40 &
  \textbf{98.30 $\pm$ 0.20} &
  \textbf{99.40 $\pm$ 0.10} &
  \textbf{99.40 $\pm$ 0.00} &
  \textbf{99.60 $\pm$ 0.00} &
  \textbf{99.60 $\pm$ 0.00} \\
 &
  SGT-C (D) &
  \textbf{93.20 $\pm$ 0.60} &
  98.20 $\pm$ 0.30 &
  99.20 $\pm$ 0.20 &
  99.20 $\pm$ 0.20 &
  99.50 $\pm$ 0.30 &
  99.50 $\pm$ 0.30 \\ \midrule
\multirow{2}{*}{segment} &
  SGT-C  &
  \textbf{---} &
  \textbf{83.30 $\pm$ 3.60} &
  \textbf{88.90 $\pm$ 0.80} &
  \textbf{89.80 $\pm$ 2.50} &
  \textbf{91.50 $\pm$ 1.90} &
  \textbf{91.50 $\pm$ 1.90} \\
 &
  SGT-C (D) &
  \textbf{---} &
  \textbf{83.30 $\pm$ 3.70} &
  87.80 $\pm$ 1.00 &
  89.60 $\pm$ 2.50 &
  91.10 $\pm$ 0.60 &
  91.10 $\pm$ 0.60 \\ \midrule
\multirow{2}{*}{skin} &
  SGT-C  &
  93.20 $\pm$ 0.20 &
  \textbf{97.70 $\pm$ 0.10} &
  \textbf{98.50 $\pm$ 0.00} &
  \textbf{98.90 $\pm$ 0.10} &
  \textbf{99.20 $\pm$ 0.20} &
  \textbf{99.20 $\pm$ 0.20} \\
 &
  SGT-C (D) &
  \textbf{93.40 $\pm$ 0.30} &
  97.60 $\pm$ 0.40 &
  98.30 $\pm$ 0.20 &
  \textbf{98.90 $\pm$ 0.00} &
  \textbf{99.20 $\pm$ 0.30} &
  \textbf{99.20 $\pm$ 0.30} \\ \midrule
\multirow{2}{*}{wilt} &
  SGT-C  &
  97.10 $\pm$ 0.60 &
  97.10 $\pm$ 0.10 &
  \textbf{97.90 $\pm$ 0.50} &
  97.60 $\pm$ 0.20 &
  97.60 $\pm$ 0.20 &
  \textbf{97.90 $\pm$ 0.50} \\
 &
  SGT-C (D) &
  \textbf{97.30 $\pm$ 0.40} &
  \textbf{97.20 $\pm$ 0.40} &
  97.80 $\pm$ 0.50 &
  \textbf{97.90 $\pm$ 0.60} &
  \textbf{97.90 $\pm$ 0.60} &
  \textbf{97.90 $\pm$ 0.60} \\ \bottomrule
\end{tabular}%
}
\end{table}}

% Please add the following required packages to your document preamble:
% \usepackage{booktabs}
% \usepackage{multirow}
% \usepackage{graphicx}
{\tabcolsep=1pt
\begin{table}[]
\centering
\caption{Runtime comparison (s. $\pm$ std.) between SGT-C and SGT-C with DPDT internal tree.}
\label{tab:sdpdt_runtime}
\resizebox{\columnwidth}{!}{%
\begin{tabular}{@{}cccccccc@{}}
\toprule
Dataset &
  Model &
  2 &
  3 &
  4 &
  5 &
  6 &
  Overall \\ \midrule
\multirow{2}{*}{adult} &
  SGT-C &
  \textbf{0.386 $\pm$ 0.009} &
  \textbf{0.596 $\pm$ 0.009} &
  \textbf{0.894 $\pm$ 0.016} &
  \textbf{1.120 $\pm$ 0.023} &
  \textbf{1.144 $\pm$ 0.033} &
  \textbf{1.144 $\pm$ 0.033} \\
 &
  SGT-C (D) &
  12.129 $\pm$ 0.425 &
  21.990 $\pm$ 0.318 &
  33.145 $\pm$ 2.011 &
  42.382 $\pm$ 5.490 &
  47.805 $\pm$ 7.702 &
  47.805 $\pm$ 7.702 \\ \midrule
\multirow{2}{*}{avila} &
  SGT-C &
  --- &
  --- &
  \textbf{1.711 $\pm$ 0.036} &
  \textbf{2.343 $\pm$ 0.007} &
  \textbf{2.733 $\pm$ 0.088} &
  \textbf{2.733 $\pm$ 0.088} \\
 &
  SGT-C (D) &
  --- &
  --- &
  79.209 $\pm$ 1.388 &
  117.866 $\pm$ 2.197 &
  147.429 $\pm$ 3.796 &
  147.429 $\pm$ 3.796 \\ \midrule
\multirow{2}{*}{bank} &
  SGT-C &
  \textbf{0.101 $\pm$ 0.001} &
  \textbf{0.138 $\pm$ 0.004} &
  \textbf{0.165 $\pm$ 0.015} &
  \textbf{0.184 $\pm$ 0.008} &
  \textbf{0.179 $\pm$ 0.010} &
  \textbf{0.165 $\pm$ 0.015} \\
 &
  SGT-C (D) &
  1.878 $\pm$ 0.051 &
  3.073 $\pm$ 0.234 &
  1.203 $\pm$ 0.145 &
  1.334 $\pm$ 0.202 &
  3.492 $\pm$ 0.586 &
  1.334 $\pm$ 0.202 \\ \midrule
\multirow{2}{*}{bean} &
  SGT-C &
  --- &
  \textbf{1.892 $\pm$ 0.027} &
  \textbf{2.420 $\pm$ 0.071} &
  \textbf{1.797 $\pm$ 0.041} &
  \textbf{2.328 $\pm$ 0.061} &
  \textbf{2.328 $\pm$ 0.061} \\
 &
  SGT-C (D) &
  --- &
  35.693 $\pm$ 0.138 &
  14.431 $\pm$ 0.125 &
  22.438 $\pm$ 0.111 &
  32.694 $\pm$ 0.614 &
  32.694 $\pm$ 0.614 \\ \midrule
\multirow{2}{*}{bidding} &
  SGT-C &
  \textbf{0.132 $\pm$ 0.001} &
  \textbf{0.207 $\pm$ 0.002} &
  \textbf{0.208 $\pm$ 0.004} &
  \textbf{0.256 $\pm$ 0.004} &
  \textbf{0.333 $\pm$ 0.011} &
  \textbf{0.333 $\pm$ 0.011} \\
 &
  SGT-C (D) &
  0.998 $\pm$ 0.116 &
  1.516 $\pm$ 0.195 &
  2.172 $\pm$ 0.249 &
  2.395 $\pm$ 0.414 &
  2.660 $\pm$ 0.374 &
  2.660 $\pm$ 0.374 \\ \midrule
\multirow{2}{*}{covtype} &
  SGT-C &
  --- &
  \textbf{6.135 $\pm$ 0.034} &
  \textbf{11.224 $\pm$ 0.014} &
  \textbf{10.618 $\pm$ 0.064} &
  \textbf{15.770 $\pm$ 0.096} &
  \textbf{15.770 $\pm$ 0.096} \\
 &
  SGT-C (D) &
  --- &
  439.259 $\pm$ 2.396 &
  227.655 $\pm$ 2.720 &
  496.196 $\pm$ 4.219 &
  603.667 $\pm$ 7.049 &
  603.667 $\pm$ 7.049 \\ \midrule
\multirow{2}{*}{electricity} &
  SGT-C &
  \textbf{0.396 $\pm$ 0.002} &
  \textbf{0.438 $\pm$ 0.006} &
  \textbf{0.880 $\pm$ 0.007} &
  \textbf{1.356 $\pm$ 0.011} &
  \textbf{2.145 $\pm$ 0.177} &
  \textbf{2.145 $\pm$ 0.177} \\
 &
  SGT-C (D) &
  5.202 $\pm$ 0.177 &
  55.811 $\pm$ 0.318 &
  89.526 $\pm$ 0.634 &
  142.068 $\pm$ 0.423 &
  197.176 $\pm$ 10.258 &
  197.176 $\pm$ 10.258 \\ \midrule
\multirow{2}{*}{eucalyptus} &
  SGT-C &
  --- &
  \textbf{0.319 $\pm$ 0.048} &
  \textbf{0.523 $\pm$ 0.063} &
  \textbf{0.799 $\pm$ 0.046} &
  \textbf{0.995 $\pm$ 0.030} &
  \textbf{0.995 $\pm$ 0.030} \\
 &
  SGT-C (D) &
  --- &
  9.549 $\pm$ 1.072 &
  12.593 $\pm$ 1.672 &
  21.382 $\pm$ 2.201 &
  8.036 $\pm$ 0.352 &
  21.382 $\pm$ 2.201 \\ \midrule
\multirow{2}{*}{eye-movements} &
  SGT-C &
  \textbf{0.464 $\pm$ 0.009} &
  \textbf{0.910 $\pm$ 0.010} &
  \textbf{1.540 $\pm$ 0.020} &
  \textbf{2.501 $\pm$ 0.018} &
  \textbf{3.879 $\pm$ 0.329} &
  \textbf{3.879 $\pm$ 0.329} \\
 &
  SGT-C (D) &
  39.528 $\pm$ 0.188 &
  74.187 $\pm$ 0.853 &
  125.956 $\pm$ 2.508 &
  193.079 $\pm$ 8.774 &
  272.248 $\pm$ 22.320 &
  272.248 $\pm$ 22.320 \\ \midrule
\multirow{2}{*}{eye-state} &
  SGT-C &
  \textbf{0.339 $\pm$ 0.003} &
  \textbf{0.505 $\pm$ 0.002} &
  \textbf{0.678 $\pm$ 0.005} &
  \textbf{1.171 $\pm$ 0.005} &
  \textbf{2.087 $\pm$ 0.100} &
  \textbf{2.087 $\pm$ 0.100} \\
 &
  SGT-C (D) &
  7.504 $\pm$ 0.050 &
  6.531 $\pm$ 0.032 &
  10.843 $\pm$ 0.071 &
  17.596 $\pm$ 0.487 &
  27.433 $\pm$ 1.789 &
  27.433 $\pm$ 1.789 \\ \midrule
\multirow{2}{*}{fault} &
  SGT-C &
  --- &
  \textbf{0.641 $\pm$ 0.032} &
  \textbf{1.044 $\pm$ 0.045} &
  \textbf{1.945 $\pm$ 0.153} &
  \textbf{2.599 $\pm$ 0.102} &
  \textbf{2.599 $\pm$ 0.102} \\
 &
  SGT-C (D) &
  --- &
  12.921 $\pm$ 1.536 &
  22.719 $\pm$ 2.346 &
  15.749 $\pm$ 1.534 &
  21.990 $\pm$ 1.183 &
  21.990 $\pm$ 1.183 \\ \midrule
\multirow{2}{*}{gas-drift} &
  SGT-C &
  --- &
  \textbf{7.714 $\pm$ 0.016} &
  \textbf{14.076 $\pm$ 0.137} &
  \textbf{17.205 $\pm$ 0.505} &
  \textbf{20.722 $\pm$ 0.314} &
  \textbf{20.722 $\pm$ 0.314} \\
 &
  SGT-C (D) &
  --- &
  167.784 $\pm$ 2.558 &
  490.627 $\pm$ 2.307 &
  763.837 $\pm$ 24.354 &
  552.024 $\pm$ 10.415 &
  552.024 $\pm$ 10.415 \\ \midrule
\multirow{2}{*}{htru} &
  SGT-C &
  \textbf{0.328 $\pm$ 0.002} &
  \textbf{0.456 $\pm$ 0.004} &
  \textbf{0.646 $\pm$ 0.020} &
  \textbf{0.586 $\pm$ 0.065} &
  \textbf{0.561 $\pm$ 0.020} &
  \textbf{0.456 $\pm$ 0.004} \\
 &
  SGT-C (D) &
  9.063 $\pm$ 0.021 &
  4.563 $\pm$ 0.040 &
  6.069 $\pm$ 0.382 &
  24.929 $\pm$ 4.588 &
  7.213 $\pm$ 0.460 &
  7.213 $\pm$ 0.460 \\ \midrule
\multirow{2}{*}{magic} &
  SGT-C &
  \textbf{0.312 $\pm$ 0.000} &
  \textbf{0.477 $\pm$ 0.003} &
  \textbf{0.689 $\pm$ 0.003} &
  \textbf{1.071 $\pm$ 0.042} &
  \textbf{1.876 $\pm$ 0.061} &
  \textbf{1.876 $\pm$ 0.061} \\
 &
  SGT-C (D) &
  2.937 $\pm$ 0.019 &
  5.079 $\pm$ 0.021 &
  50.830 $\pm$ 0.901 &
  28.735 $\pm$ 0.621 &
  17.214 $\pm$ 1.284 &
  17.214 $\pm$ 1.284 \\ \midrule
\multirow{2}{*}{mini-boone} &
  SGT-C &
  \textbf{11.023 $\pm$ 0.078} &
  \textbf{12.995 $\pm$ 0.129} &
  \textbf{16.074 $\pm$ 0.116} &
  \textbf{20.609 $\pm$ 0.135} &
  \textbf{25.352 $\pm$ 0.062} &
  \textbf{25.352 $\pm$ 0.062} \\
 &
  SGT-C (D) &
  297.692 $\pm$ 1.729 &
  153.416 $\pm$ 2.880 &
  236.032 $\pm$ 2.896 &
  284.030 $\pm$ 2.886 &
  738.861 $\pm$ 24.597 &
  738.861 $\pm$ 24.597 \\ \midrule
\multirow{2}{*}{mushroom} &
  SGT-C &
  \textbf{0.185 $\pm$ 0.003} &
  \textbf{0.331 $\pm$ 0.008} &
  \textbf{0.399 $\pm$ 0.026} &
  \textbf{0.327 $\pm$ 0.004} &
  \textbf{0.330 $\pm$ 0.004} &
  \textbf{0.327 $\pm$ 0.004} \\
 &
  SGT-C (D) &
  3.644 $\pm$ 0.108 &
  4.357 $\pm$ 0.128 &
  4.824 $\pm$ 0.229 &
  3.400 $\pm$ 0.045 &
  4.881 $\pm$ 0.258 &
  4.881 $\pm$ 0.258 \\ \midrule
\multirow{2}{*}{occupancy} &
  SGT-C &
  \textbf{0.148 $\pm$ 0.002} &
  \textbf{0.186 $\pm$ 0.003} &
  \textbf{0.176 $\pm$ 0.004} &
  \textbf{0.197 $\pm$ 0.006} &
  \textbf{0.405 $\pm$ 0.024} &
  \textbf{0.405 $\pm$ 0.024} \\
 &
  SGT-C (D) &
  1.821 $\pm$ 0.010 &
  7.952 $\pm$ 0.093 &
  7.419 $\pm$ 0.673 &
  9.229 $\pm$ 1.337 &
  7.998 $\pm$ 0.908 &
  7.998 $\pm$ 0.908 \\ \midrule
\multirow{2}{*}{page} &
  SGT-C &
  --- &
  \textbf{0.263 $\pm$ 0.001} &
  \textbf{0.437 $\pm$ 0.022} &
  \textbf{0.696 $\pm$ 0.075} &
  \textbf{0.933 $\pm$ 0.015} &
  \textbf{0.696 $\pm$ 0.075} \\
 &
  SGT-C (D) &
  --- &
  6.266 $\pm$ 0.109 &
  5.173 $\pm$ 0.086 &
  13.694 $\pm$ 0.449 &
  6.238 $\pm$ 0.361 &
  5.173 $\pm$ 0.086 \\ \midrule
\multirow{2}{*}{pendigits} &
  SGT-C &
  --- &
  --- &
  \textbf{2.312 $\pm$ 0.020} &
  \textbf{3.443 $\pm$ 0.053} &
  \textbf{4.788 $\pm$ 0.033} &
  \textbf{4.788 $\pm$ 0.033} \\
 &
  SGT-C (D) &
  --- &
  --- &
  128.053 $\pm$ 2.424 &
  203.633 $\pm$ 0.778 &
  70.396 $\pm$ 0.761 &
  70.396 $\pm$ 0.761 \\ \midrule
\multirow{2}{*}{raisin} &
  SGT-C &
  \textbf{0.137 $\pm$ 0.002} &
  \textbf{0.196 $\pm$ 0.011} &
  \textbf{0.263 $\pm$ 0.029} &
  \textbf{0.249 $\pm$ 0.034} &
  \textbf{0.317 $\pm$ 0.087} &
  \textbf{0.249 $\pm$ 0.034} \\
 &
  SGT-C (D) &
  1.620 $\pm$ 0.017 &
  1.262 $\pm$ 0.275 &
  2.892 $\pm$ 0.513 &
  1.762 $\pm$ 0.414 &
  1.832 $\pm$ 0.348 &
  1.262 $\pm$ 0.275 \\ \midrule
\multirow{2}{*}{rice} &
  SGT-C &
  \textbf{0.134 $\pm$ 0.002} &
  \textbf{0.239 $\pm$ 0.003} &
  \textbf{0.337 $\pm$ 0.029} &
  \textbf{0.441 $\pm$ 0.062} &
  \textbf{0.541 $\pm$ 0.049} &
  \textbf{0.541 $\pm$ 0.049} \\
 &
  SGT-C (D) &
  1.073 $\pm$ 0.015 &
  1.987 $\pm$ 0.017 &
  3.106 $\pm$ 0.356 &
  3.663 $\pm$ 0.810 &
  3.592 $\pm$ 0.679 &
  3.663 $\pm$ 0.810 \\ \midrule
\multirow{2}{*}{room} &
  SGT-C &
  \textbf{0.238 $\pm$ 0.004} &
  \textbf{0.535 $\pm$ 0.004} &
  \textbf{0.704 $\pm$ 0.027} &
  \textbf{0.770 $\pm$ 0.014} &
  \textbf{0.773 $\pm$ 0.012} &
  \textbf{0.773 $\pm$ 0.012} \\
 &
  SGT-C (D) &
  20.538 $\pm$ 0.232 &
  9.141 $\pm$ 0.236 &
  19.394 $\pm$ 0.328 &
  22.729 $\pm$ 1.789 &
  7.997 $\pm$ 0.356 &
  7.997 $\pm$ 0.356 \\ \midrule
\multirow{2}{*}{segment} &
  SGT-C &
  --- &
  \textbf{0.410 $\pm$ 0.015} &
  \textbf{0.468 $\pm$ 0.016} &
  \textbf{0.844 $\pm$ 0.053} &
  \textbf{1.009 $\pm$ 0.109} &
  \textbf{1.009 $\pm$ 0.109} \\
 &
  SGT-C (D) &
  --- &
  7.757 $\pm$ 0.310 &
  20.347 $\pm$ 0.481 &
  12.982 $\pm$ 0.307 &
  16.016 $\pm$ 0.189 &
  16.016 $\pm$ 0.189 \\ \midrule
\multirow{2}{*}{skin} &
  SGT-C &
  \textbf{0.399 $\pm$ 0.009} &
  \textbf{0.520 $\pm$ 0.010} &
  \textbf{0.645 $\pm$ 0.008} &
  \textbf{0.776 $\pm$ 0.009} &
  \textbf{0.919 $\pm$ 0.018} &
  \textbf{0.919 $\pm$ 0.018} \\
 &
  SGT-C (D) &
  9.559 $\pm$ 0.388 &
  21.933 $\pm$ 0.846 &
  29.639 $\pm$ 0.523 &
  35.859 $\pm$ 0.545 &
  23.851 $\pm$ 2.256 &
  23.851 $\pm$ 2.256 \\ \midrule
\multirow{2}{*}{wilt} &
  SGT-C &
  \textbf{0.127 $\pm$ 0.002} &
  \textbf{0.169 $\pm$ 0.001} &
  \textbf{0.252 $\pm$ 0.009} &
  \textbf{0.292 $\pm$ 0.033} &
  \textbf{0.316 $\pm$ 0.059} &
  \textbf{0.252 $\pm$ 0.009} \\
 &
  SGT-C (D) &
  1.772 $\pm$ 0.048 &
  1.539 $\pm$ 0.027 &
  2.165 $\pm$ 0.260 &
  2.814 $\pm$ 0.336 &
  3.213 $\pm$ 0.159 &
  3.213 $\pm$ 0.159 \\ \bottomrule
\end{tabular}%
}
\end{table}}

\section{Regression}\label{sec:regression}
To adapt ShapeCART to the regression setting ($y_n \in \mathbb{R}\quad \forall n=1,\ldots,N$), we simply need to modify the superadditive impurity metric $\mathcal{H}$ from Gini/Entropy (class-based impurity measures), to one adapted for a regression setting, such as squared-error or absolute error. To accommodate this, we must modify the calculation of the empirical distribution of a set of samples (Equation (7)) to instead calculate the mean value of the target in the set of samples.

We provide regression results for ShapeCART, ShapeCART$_3$, Shape$^2$CART, and Shape$^2$CART$_3$. We benchmark our axis-aligned approaches against CART and our Bivariate approaches against BiCART.  Our hyperparameter tuning approach and hyperparameter search in this setting are similar to the classification setting. One key modification is that we only use the MSE criterion (rather than tuning for a splitting criterion like in the classification setting). We report the mean squared error (MSE) of each model on five open source regression datasets - Ailerons (OpenML ID: 44137), Black Friday (OpenML ID: 41540), California Housing (OpenML ID: 44025), and Diamonds (OpenML ID: 42225) - broken down by depth and overall best model, chosen by best validation MSE, in Table \ref{tab:regr_results}. We scale the target to zero mean and one standard deviation for all datasets to ensure stability during training. We observe that ShapeCART consistently outperforms CART on all datasets except Ailerons, while ShapeCART$_3$ outperforms CART on all datasets. A similar trend can be seen in the Bivariate approaches, where Shape$^2$CART outperforms BiCART on all datasets except Ailerons. Interestingly, Shape$^2$CART also outperforms Shape$^2$CART$_3$ on California Housing and Ailerons, while achieving similar performance on Black-Friday and Diamonds.
% Please add the following required packages to your document preamble:
% \usepackage{booktabs}
% \usepackage{multirow}
% \usepackage{graphicx}
\begin{table}[!ht]
\centering
\caption{Test MSE Results (MSE $\pm$ std.) on regression datasets. Approaches are split into axis-aligned and bivariate by the dotted line. Best approach is \textbf{bolded}, second best is \emph{italicized}.}
\label{tab:regr_results}
\resizebox{\columnwidth}{!}{%
\begin{tabular}{@{}cccccccc@{}}
\toprule
\multicolumn{1}{l}{dataset} &
  Model &
  2 &
  3 &
  4 &
  5 &
  6 &
  Overall \\ \midrule
\multirow{6}{*}{ailerons} &
  CART &
  0.453 $\pm$ 0.011 &
  0.368 $\pm$ 0 &
  \textit{0.302 $\pm$ 0.001} &
  \textit{0.257 $\pm$ 0.006} &
  \textit{0.235 $\pm$ 0.004} &
  \textit{0.235 $\pm$ 0.004} \\
 &
  SGT-C &
  \textit{0.452 $\pm$ 0.012} &
  \textit{0.364 $\pm$ 0.004} &
  0.312 $\pm$ 0.003 &
  0.258 $\pm$ 0.007 &
  0.238 $\pm$ 0.005 &
  0.238 $\pm$ 0.005 \\
 &
  SGT$_3$-C &
  \textbf{0.375 $\pm$ 0.002} &
  \textbf{0.321 $\pm$ 0.004} &
  \textbf{0.263 $\pm$ 0.004} &
  \textbf{0.23 $\pm$ 0.004} &
  \textbf{0.229 $\pm$ 0.006} &
  \textbf{0.229 $\pm$ 0.006} \\\cdashline{2-8}\noalign{\smallskip}
 &
  BiCART &
  \textit{0.363 $\pm$ 0.004} &
  \textbf{0.28 $\pm$ 0.006} &
  \textbf{0.231 $\pm$ 0.016} &
  \textbf{0.206 $\pm$ 0.008} &
  \textbf{0.197 $\pm$ 0.008} &
  \textbf{0.197 $\pm$ 0.008} \\
 &
  S$^2$GT-C &
  \textbf{0.358 $\pm$ 0.008} &
  \textit{0.296 $\pm$ 0.005} &
  \textit{0.256 $\pm$ 0} &
  \textit{0.233 $\pm$ 0.001} &
  \textit{0.231 $\pm$ 0.009} &
  \textit{0.231 $\pm$ 0.009} \\
 &
  S$^2$GT$_3$-C &
  0.37 $\pm$ 0.002 &
  0.311 $\pm$ 0.005 &
  0.265 $\pm$ 0.001 &
  0.237 $\pm$ 0.008 &
  0.236 $\pm$ 0.003 &
  0.236 $\pm$ 0.001 \\ \midrule
\multirow{6}{*}{black-friday} &
  CART &
  0.728 $\pm$ 0.012 &
  0.608 $\pm$ 0.008 &
  0.567 $\pm$ 0.018 &
  0.513 $\pm$ 0.011 &
  0.502 $\pm$ 0.009 &
  0.502 $\pm$ 0.009 \\
 &
  SGT-C &
  \textit{0.548 $\pm$ 0.019} &
  \textit{0.528 $\pm$ 0.002} &
  \textit{0.512 $\pm$ 0.001} &
  \textit{0.504 $\pm$ 0.016} &
  \textit{0.496 $\pm$ 0.016} &
  \textit{0.496 $\pm$ 0.016} \\
 &
  SGT$_3$-C &
  \textbf{0.528 $\pm$ 0.002} &
  \textbf{0.507 $\pm$ 0.008} &
  \textbf{0.492 $\pm$ 0.008} &
  \textbf{0.487 $\pm$ 0.017} &
  \textbf{0.484 $\pm$ 0.022} &
  \textbf{0.484 $\pm$ 0.022} \\\cdashline{2-8}\noalign{\smallskip}
 &
  BiCART &
  0.728 $\pm$ 0.011 &
  0.606 $\pm$ 0.012 &
  0.561 $\pm$ 0.001 &
  0.508 $\pm$ 0.011 &
  0.499 $\pm$ 0.013 &
  0.499 $\pm$ 0.013 \\
 &
  S$^2$GT-C &
  \textit{0.546 $\pm$ 0.001} &
  \textit{0.511 $\pm$ 0.002} &
  \textit{0.504 $\pm$ 0.012} &
  \textbf{0.491 $\pm$ 0.011} &
  \textit{0.487 $\pm$ 0.013} &
  \textit{0.487 $\pm$ 0.013} \\
 &
  S$^2$GT$_3$-C &
  \textbf{0.519 $\pm$ 0.002} &
  \textbf{0.502 $\pm$ 0.003} &
  \textbf{0.49 $\pm$ 0.008} &
  \textit{0.492 $\pm$ 0.007} &
  \textbf{0.485 $\pm$ 0.021} &
  \textbf{0.485 $\pm$ 0.021} \\ \midrule
\multirow{6}{*}{california} &
  CART &
  0.54 $\pm$ 0.015 &
  0.459 $\pm$ 0.008 &
  0.418 $\pm$ 0.006 &
  0.377 $\pm$ 0.001 &
  0.338 $\pm$ 0.004 &
  0.338 $\pm$ 0.004 \\
 &
  SGT-C &
  \textit{0.508 $\pm$ 0.005} &
  \textit{0.43 $\pm$ 0.007} &
  \textit{0.354 $\pm$ 0.002} &
  \textit{0.305 $\pm$ 0.001} &
  \textit{0.272 $\pm$ 0.008} &
  \textit{0.272 $\pm$ 0.008} \\
 &
  SGT$_3$-C &
  \textbf{0.426 $\pm$ 0.007} &
  \textbf{0.33 $\pm$ 0.001} &
  \textbf{0.271 $\pm$ 0.002} &
  \textbf{0.254 $\pm$ 0.006} &
  \textbf{0.245 $\pm$ 0.001} &
  \textbf{0.245 $\pm$ 0.001} \\\cdashline{2-8}\noalign{\smallskip}
 &
  BiCART &
  0.442 $\pm$ 0.001 &
  0.345 $\pm$ 0.003 &
  0.297 $\pm$ 0.002 &
  0.267 $\pm$ 0.007 &
  0.243 $\pm$ 0.009 &
  0.243 $\pm$ 0.009 \\
 &
  S$^2$GT-C &
  \textbf{0.358 $\pm$ 0.008} &
  \textbf{0.272 $\pm$ 0.001} &
  \textbf{0.228 $\pm$ 0.001} &
  \textbf{0.224 $\pm$ 0.008} &
  \textbf{0.213 $\pm$ 0.012} &
  \textbf{0.213 $\pm$ 0.012} \\
 &
  S$^2$GT$_3$-C &
  \textit{0.415 $\pm$ 0.002} &
  \textit{0.332 $\pm$ 0} &
  \textit{0.276 $\pm$ 0.007} &
  \textit{0.243 $\pm$ 0.008} &
  \textit{0.236 $\pm$ 0.001} &
  \textit{0.236 $\pm$ 0.001} \\ \midrule
\multirow{6}{*}{diamonds} &
  CART &
  0.169 $\pm$ 0.001 &
  0.125 $\pm$ 0.001 &
  0.105 $\pm$ 0 &
  0.092 $\pm$ 0 &
  0.081 $\pm$ 0 &
  0.081 $\pm$ 0 \\
 &
  SGT-C &
  \textit{0.169 $\pm$ 0.001} &
  \textit{0.11 $\pm$ 0} &
  \textit{0.084 $\pm$ 0.002} &
  \textit{0.062 $\pm$ 0.001} &
  \textit{0.047 $\pm$ 0.001} &
  \textit{0.047 $\pm$ 0.001} \\
 &
  SGT$_3$-C &
  \textbf{0.116 $\pm$ 0} &
  \textbf{0.079 $\pm$ 0.001} &
  \textbf{0.048 $\pm$ 0.001} &
  \textbf{0.038 $\pm$ 0.003} &
  \textbf{0.031 $\pm$ 0.001} &
  \textbf{0.031 $\pm$ 0.001} \\\cdashline{2-8}\noalign{\smallskip}
 &
  BiCART &
  0.163 $\pm$ 0.001 &
  0.113 $\pm$ 0.002 &
  0.084 $\pm$ 0.001 &
  0.066 $\pm$ 0.001 &
  0.051 $\pm$ 0.001 &
  0.051 $\pm$ 0.001 \\
 &
  S$^2$GT-C &
  \textit{0.128 $\pm$ 0.002} &
  \textbf{0.064 $\pm$ 0.001} &
  \textit{0.044 $\pm$ 0.003} &
  \textit{0.042 $\pm$ 0.001} &
  \textit{0.036 $\pm$ 0} &
  \textit{0.036 $\pm$ 0} \\
 &
  S$^2$GT$_3$-C &
  \textbf{0.106 $\pm$ 0.001} &
  \textit{0.07 $\pm$ 0.001} &
  \textbf{0.041 $\pm$ 0.002} &
  \textbf{0.034 $\pm$ 0.002} &
  \textbf{0.031 $\pm$ 0.002} &
  \textbf{0.031 $\pm$ 0.002} \\ \bottomrule
\end{tabular}%
}
\end{table}

\newpage
\section{Extra Pseudocode} \label{sec:extrapscode}
In this section, we provide pseudocode for the TDIDT induction of a tree (Algorithm \ref{alg:tdidt_pseudocode}) and of our coordinate descent procedure (Algorithm \ref{alg:coordDescent}). 

{\scriptsize
\begin{algorithm}[H]
\caption{ShapeCART (TDIDT Pseudocode)}\label{alg:tdidt_pseudocode}
\DontPrintSemicolon
\setstretch{.7}
\SetAlgoLined
\KwIn{Features $\mathcal{X}$, Target $\mathcal{Y}$, Branching Factor $K$, Pairwise Candidate Limit $P$, Coordinate Descent Iterations $R$, Pairwise Reg. $\gamma$, Branching Reg. $\lambda$, Impurity $\mathcal{H}(\cdot)$, Weighted Impurity $\mathcal{L}(\cdot)$, Max Depth $M_\text{max}$, Minimum Impurity Improvement $\mathcal{L}_\text{min}$, Min. Samples Leaf $S_\text{leaf}$, Min. Samples Split $S_\text{split}$}
\KwOut{$\mathcal{I}, \mathcal{T}$}

$\mathcal{I} \gets \{\};\quad \mathcal{T} \gets \{\}$\; 
$Q \gets $ Initialize Empty Priority Queue

$\bm{L}^\text{init} \gets \mathcal{L}(\mathcal{Y})$\;
$(\bm{D}^0, \bm{L}^0, f^0(\cdot)) \gets$ SelectShapeFunc(
$\mathcal{X}, \mathcal{Y}, K, P, R, \gamma, \lambda, \mathcal{H}(\cdot), \mathcal{L}(\cdot)$
)\;

$Q$.add($\{0: (\bm{D}^0, \bm{L}^\text{init} - \bm{L}^0, f^0(\cdot), 0,  \mathcal{X} \times \mathcal{Y})\}$)\;
\While{$|Q| > 0$}{
$i, (\bm{D}^i, \Tilde{\bm{L}}^i, f^i(\cdot), M^i, \mathcal{D}) \gets$ Pop($Q$)\;
\If{$\Tilde{\bm{L}}^i < \mathcal{L}_\text{min}$}{
Add node $i$ to $\mathcal{T}$, assign the label as the dominant class in $\mathcal{D}$, continue to next node
}
\If{$M = M_\text{max}$}{
Add node $i$ to $\mathcal{T}$, assign the label as the dominant class in $\mathcal{D}$, continue to next node
}
\If{$|D| < S_\text{split}$}{
Add node $i$ to $\mathcal{T}$, assign the label as the dominant class in $\mathcal{D}$, continue to next node
}
\If{$\exists \mathcal{D}^j \in \bm{D}: |\mathcal{D}^j| < S_\text{leaf}$ }{
Add node $i$ to $\mathcal{T}$, assign the label as the dominant class in $\mathcal{D}$, continue to next node
}
Add node $i$ and corresponding shape function $f^i(\cdot)$ to $\mathcal{I}$\;
\ForEach{$\mathcal{D}^j \in \bm{D}$}{
    Mark $j$ as child of $i$\;
    $\mathcal{X}^j, \mathcal{Y}^j \gets \mathcal{D}^j$\;
    $(\bm{D}^j, \bm{L}^j, f^j(\cdot)) \gets$ SelectShapeFunc(
$\mathcal{X}^j, \mathcal{Y}^j, K, P, R, \gamma, \lambda, \mathcal{H}(\cdot), \mathcal{L}(\cdot)$)\;
$Q$.add($j: \{\bm{D}^j, \bm{L}^j-\bm{L}^i, f^j(\cdot), M^i+1, \mathcal{D}^j\}$)
}
}
\Return{$\mathcal{I}, \mathcal{T}$}
\end{algorithm}}

{\scriptsize
\begin{algorithm}[H]
\caption{Coordinate Descent}\label{alg:coordDescent}
\DontPrintSemicolon
\setstretch{.7}
\SetAlgoLined
\KwIn{$\mathbf{a}, [\pi_1,\dots,\pi_L],[W_1,\dots,W_L],K, \mathcal{H}(\cdot), R$}
\KwOut{$\mathbf{a}, \bm{L}$}
\BlankLine
  \For{$k \leftarrow 1$ \KwTo $K$}{
    $\bm{W}_k \leftarrow \sum_{\ell=1}^L W_\ell\,\mathbf{1}(a_\ell=k)$\;
    $\bm{P}_k \leftarrow \sum_{\ell=1}^L \mathbf{1}(a_\ell=k)\,(W_\ell\,\pi_\ell)$\;
  }
  $\bm{L}^* \gets \infty$
  \For{$r \leftarrow 1$ \KwTo $R$}{
    \ForEach{$\ell$ in \texttt{RandomizedOrder}(1,\dots,$L$)}{
      $\bm{P}_{a_\ell} \leftarrow \bm{P}_{a_\ell} - W_\ell\,\pi_\ell$;\         $\bm{W}_{a_\ell} \leftarrow \bm{W}_{a_\ell} - W_\ell$\;  
      $\bm{L} \gets \sum_{j=1}^k \bm{W}_k \mathcal{H}(\bm{P}_k / \bm{W}_k)$
      
      Store $\widetilde{\bm{P}}_k \leftarrow \bm{P}_k$, $\widetilde{\bm{W}}_k \leftarrow \bm{W}_k\quad \forall k = 1,\ldots,K$\;
      \For(\tcp*[f]{Iterate through all Branches}){$k \leftarrow 1$ \KwTo $K$}{
        $\bm{W}_k \leftarrow \widetilde{\bm{W}}_k + W_\ell$;\ $\bm{P}_k \leftarrow \widetilde{\bm{P}}_k + W_\ell\,\pi_\ell$\;
        $\bm{L}_k \gets \bm{L} - \widetilde{\bm{W}}_k\mathcal{H}(\widetilde{\bm{P}}_k / \widetilde{\bm{W}}_k) + \bm{W}_k\mathcal{H}(\bm{P}_k/\bm{W}_k)$ \;
        \If(\tcp*[f]{Commit if weighted impurity improves}){$\bm{L}_k \le \bm{L}^*$}{
          $\bm{L}^* \leftarrow \bm{L}_k$;\ $a_\ell \leftarrow i$\;
          StoreState$\left(\bm{P}_1,\bm{W}_1, \ldots, \bm{P}_K,\bm{W}_K\right)$ 
        }
      }
      $\left(\bm{P}_1,\bm{W}_1, \ldots, \bm{P}_K,\bm{W}_K\right) \gets $ RetrieveState()
    }
  }
\Return{$\mathbf{a}, \bm{L}$}
\end{algorithm}}

\end{document}